\newtheorem{theorem}{Theorem}[section]
\newtheorem{definition}[theorem]{Definition}
\newtheorem{assumption}[theorem]{Assumption}
\newtheorem{lemma}[theorem]{Lemma}
\newcommand{\ep}{\varepsilon}
\newcommand{\R}{\mathbb{R}}
\newcommand{\E}{\mathbb{E}}
\newcommand{\A}{\mathcal{A}}
\newcommand{\D}{\mathcal{D}}
\newcommand{\B}{\mathcal{B}}
\newcommand{\F}{\mathcal{F}}
\newcommand{\M}{\mathcal{M}}
\renewcommand{\O}{\mathcal{O}}
\renewcommand{\S}{\mathcal{S}}
\newcommand{\opt}{\text{opt}}
\newcommand{\low}{\text{inf}}
\newcommand{\up}{\text{sup}}
\renewcommand{\F}{\mathcal{F}}
\newcommand{\V}{\mathcal{V}}
\newcommand{\aS}{\tilde{\S}}
\newcommand{\as}{\tilde{s}}
\newcommand{\aP}{\tilde{T}}
\newcommand{\aR}{\tilde{R}}
\newcommand{\aV}{\tilde{V}}
\newcommand{\aQ}{\tilde{Q}}
\newcommand{\aF}{\tilde{\F}}
\renewcommand{\ep}{\varepsilon}
\newcommand{\vcentered}[1]{\begin{tabular}{l} #1 \end{tabular}}
\begin{document}

%

%

\twocolumn[

\aistatstitle{Abstract Value Iteration for Hierarchical Reinforcement Learning}

\aistatsauthor{ Kishor Jothimurugan \And Osbert Bastani \And  Rajeev Alur }

\aistatsaddress{ University of Pennsylvania \And University of Pennsylvania \And University of Pennsylvania } ]

\begin{abstract}
We propose a novel hierarchical reinforcement learning framework for control with continuous state and action spaces. In our framework, the user specifies subgoal regions which are subsets of states; then, we (i) learn options that serve as transitions between these subgoal regions, and (ii) construct a high-level plan in the resulting abstract decision process (ADP). A key challenge is that the ADP may not be Markov, which we address by proposing two algorithms for planning in the ADP. Our first algorithm is conservative, allowing us to prove theoretical guarantees on its performance, which help inform the design of subgoal regions. Our second algorithm is a practical one that interweaves planning at the abstract level and learning at the concrete level. In our experiments, we demonstrate that our approach outperforms state-of-the-art hierarchical reinforcement learning algorithms on several challenging benchmarks.

\end{abstract}

\section{INTRODUCTION}

Deep reinforcement learning (RL) has recently been applied to solve challenging robotics control problems, including multi-agent control~\citep{khan2019graph}, object manipulation~\citep{andrychowicz2020learning}, and control from perception~\citep{levine2016end}. In these applications, the approach is typically to learn a policy in simulation and then deploy this policy on an actual robot. Our focus is on the problem of using RL to learn a robot control policy in simulation.

A key challenge in this setting is that long-horizon tasks are often computationally intractable for RL, at best requiring huge amounts of computation to solve~\citep{andrychowicz2020learning}. Hierarchical RL is a promising approach to scaling RL to long-horizon tasks. The idea is to use a high-level policy to generate a sequence of high-level goals, and then use low-level policies to generate sequences of actions to achieve each successive goal. By abstracting away details of the low-level dynamics, the high-level policy can efficiently plan over much longer time horizons.

There are two approaches to designing the high-level policy. First, we can use model-free RL to learn the high-level policy~\citep{nachum2018data,nachum2019near}. While this approach is very general, it cannot take advantage of the available structure in the high-level planning problem. Alternatively, we can use model-based RL---i.e., learn a model of the high-level planning problem and then plan in this model. This approach can significantly improve performance by leveraging high-level structure. However, they are typically restricted to finite state and action spaces~\citep{gopalan2017planning,abel2020vpsa,winder2020planning}; at best, they can handle continuous state spaces but finite action spaces~\citep{roderick2017deep}.

We propose a hierarchical RL algorithm using model-based RL for high-level planning that can handle continuous state and action spaces. To ensure that our model of the high-level problem is finite, we abstract over both states and actions. First, to abstract over states, we consider \emph{subgoal regions} that are subsets of states; intuitively, they should aggregate states with similar transition probabilities and rewards. Subgoal regions are similar to abstract states but do not need to cover the entire state space~\citep{dietterich2000state,andre2002state}. Next, to abstract over actions, we consider \emph{options} (also called \emph{abstract actions}, \emph{temporal abstractions}, or \emph{skills}), which are low-level policies designed to achieve short-term goals such as walking to a goal or grasping an object~\citep{precup1998theoretical,sutton1999between,theocharous2004approximate}. Intuitively, subgoal regions finitize the state space and options finitize the action space. Finally, our algorithm represents the high-level planning problem as an \emph{abstract decision process (ADP)} whose states are subgoal regions and whose actions are options.

One question is how to obtain the subgoal regions and options. Similar to previous works~\citep{gopalan2017planning,winder2020planning,abel2020vpsa}, which assume that the state abstractions are provided by the user, we assume that the subgoal regions are given by the user. Given subgoal regions, our algorithm uses model-free RL to automatically train options that serve as transitions between these subgoal regions.

We believe domain experts can often provide effective choices of subgoal regions; thus, our approach gives the user a way to express domain knowledge to improve performance. Furthermore, many recent RL algorithms ask the user to provide significantly more information to improve performance---e.g., a high-level plan for solving the task~\citep{sun2019program,jothimurugan2019composable}. Finally, in the spirit of probabilistic road maps~\citep{lavalle2006planning}, we consider automatically constructing subgoal regions by randomly sampling them; we find that this approach has higher sample complexity but still achieves good reward.

A challenge is that the ADP may not be an MDP---i.e., it may not satisfy the Markov condition that transition probabilities and rewards only depend on the current subgoal region. In particular, an option may not work equally well for different states in a single subgoal region, and thus, the transition probabilities depend on the state, which violates the Markov condition.

We propose two algorithms for planning in the ADP that address this challenge. First, \emph{robust abstract value iteration (R-AVI)} models the unknown perturbations adversarially. This algorithm is designed so we can theoretically characterize the properties of our approach. In particular, we establish bounds on its performance and discuss how these bounds can guide the design of subgoal regions that yield good performance in the context of our approach. Second, \emph{alternating abstract value iteration (A-AVI)} does not model the unknown perturbations. To account for the non-Markov nature of the ADP, it alternates between (i) planning in the ADP to construct a high-level policy, and (ii) updating its estimates of the ADP transition probabilities and rewards based on the current high-level policy. This algorithm is designed to be practical; it does not have theoretical guarantees but performs well in practice. 


We demonstrate that our approach outperforms several state-of-the-art baselines, including approaches that solve the ADP without accounting for uncertainty~\citep{winder2020planning}, HIRO (which does not require user-provided subgoal regions)~\citep{nachum2018data}, and SpectRL (which requires user-provided information that is more complex than subgoal regions)~\citep{jothimurugan2019composable}, on both a robot navigating a maze of rooms as well as the MuJoCo Ant performing sequences of tasks~\citep{todorov2012mujoco,nachum2018data}.


\begin{figure*}[t]
\centering
\begin{tabular}{ccccc}
\vcentered{\includegraphics[width=0.19\textwidth]{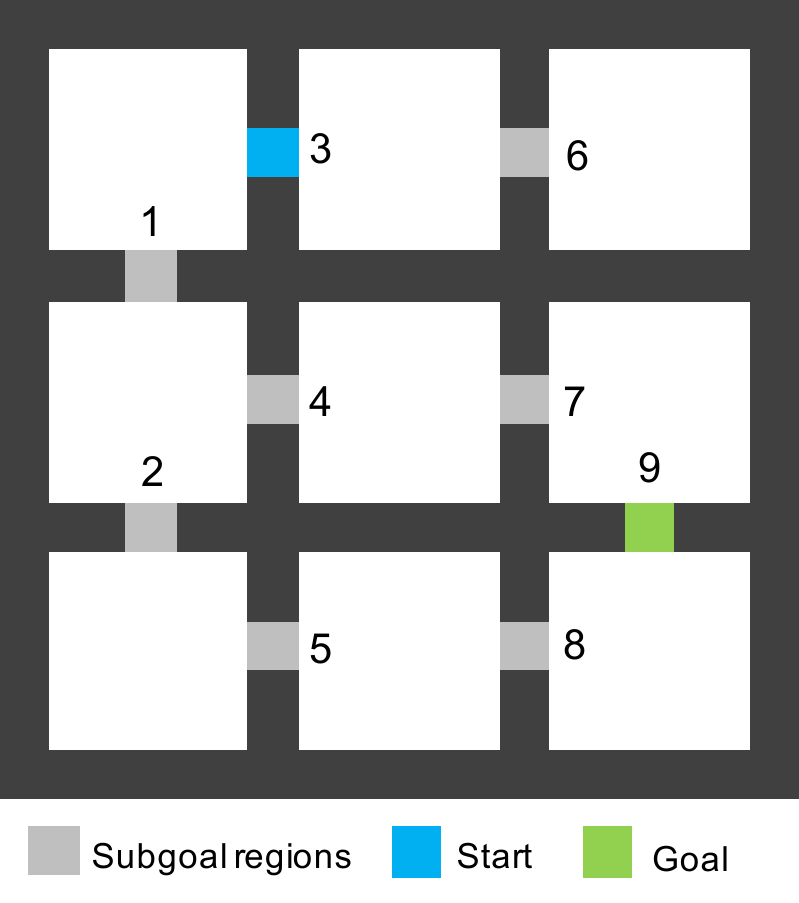}} &
\vcentered{\includegraphics[width=0.19\textwidth]{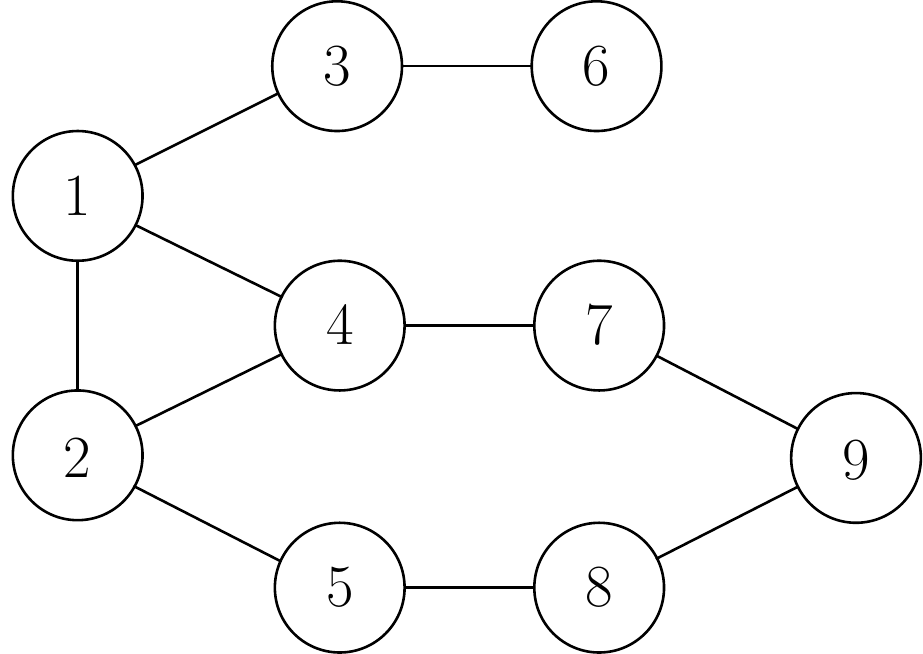}} &
\vcentered{\includegraphics[width=0.09\textwidth]{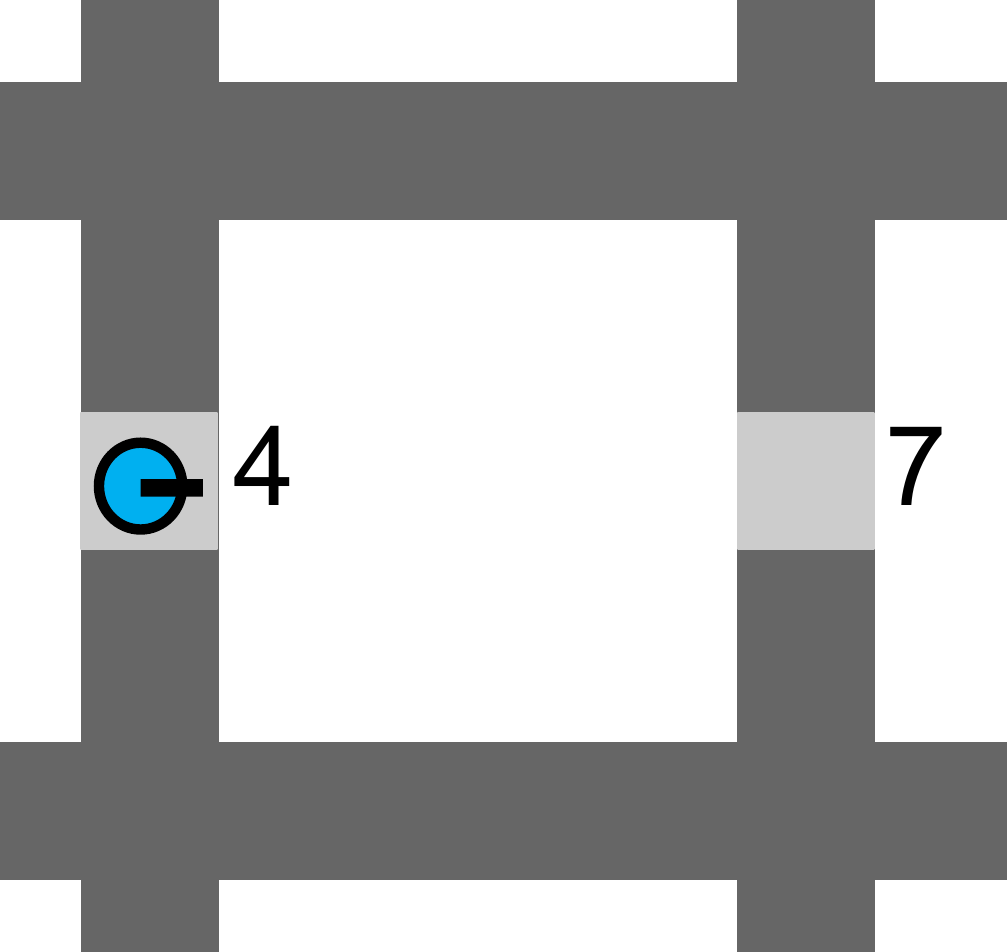}} &
\vcentered{\includegraphics[width=0.09\textwidth]{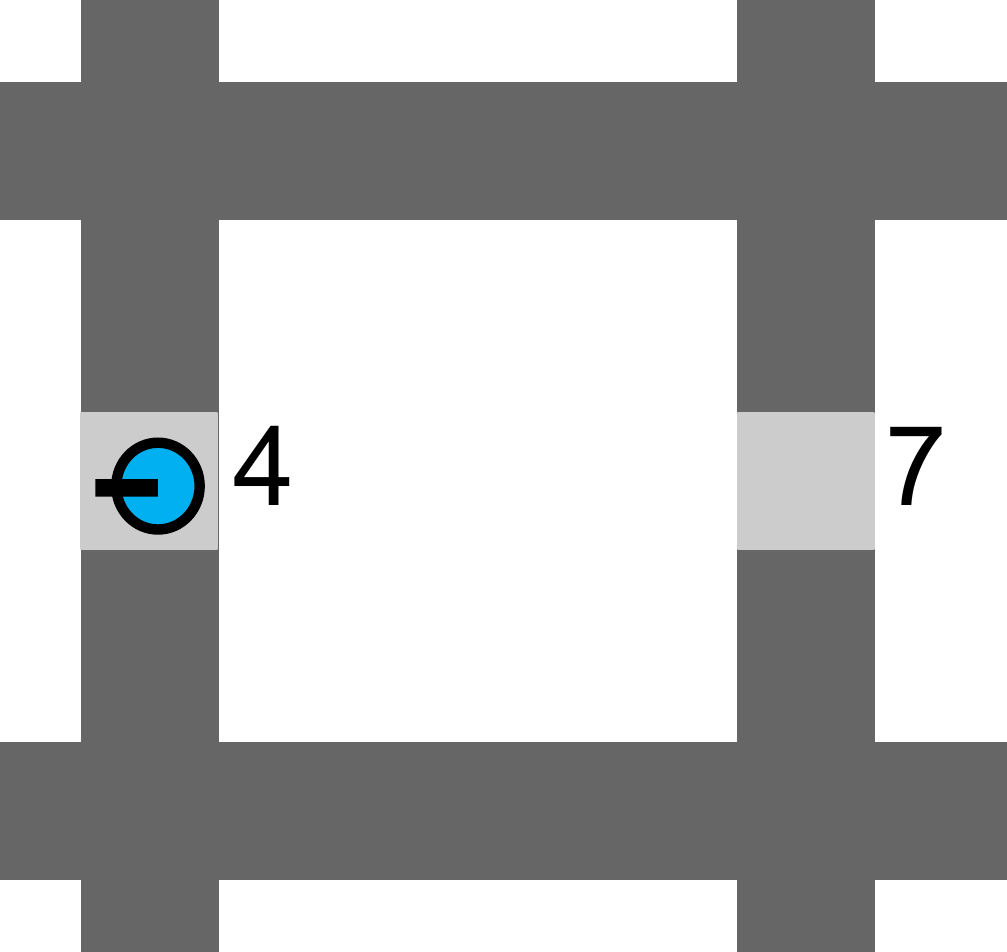}} &
\vcentered{\includegraphics[width=0.19\textwidth]{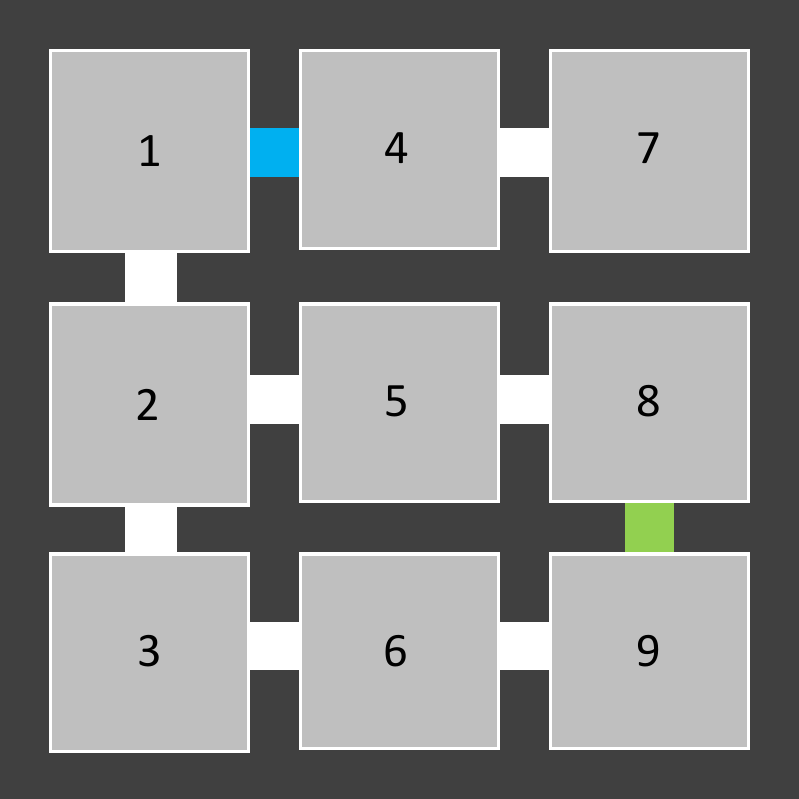}} \\
(a) & (b) & (c) & (d) & (e)
\end{tabular}
\caption{(a) A rooms environment; subgoal regions are in light gray, the starting region is blue, and the goal region is green. (b) The corresponding abstract graph; transitions are bi-directional. (c, d) Robot in region $4$ facing right vs. left. (e) A different choice of subgoal regions.}
\label{fig:rooms_and_graph}
\end{figure*}

\textbf{Illustrative example.}
Consider the example in Figure~\ref{fig:rooms_and_graph} (a). The goal is for the robot to drive from an initial state (in the blue square region) to a goal region (the green square). The states are $S\subseteq\mathbb{R}^3$, where $s=(x,y,\theta)$ encodes the $(x,y)$ position of the robot and its orientation $\theta$. The actions are $A\subseteq\mathbb{R}^2$, where $a=(v,\phi)$ encodes its speed $v$ and steering angle $\phi$. The reward is $1$ upon reaching the goal region and $0$ otherwise. The user provides the subgoal regions; in this example, they are the doorway regions that connect the rooms (gray squares). These subgoal regions are designed to satisfy the conditions based on our theoretical analysis of R-AVI: (i) these regions are bottlenecks, and (ii) the transition probabilities and rewards are similar across states in a subgoal region. Our algorithm uses model-free RL to learn low-level policies that serve as transitions between adjacent subgoal regions; the resulting ADP is shown in Figure~\ref{fig:rooms_and_graph} (b). The high-level policy we construct is $3\to1\to4\to7\to9$.

Next, (c,d) show why the ADP is not Markov. We assume the robot cannot move backward. If it starts in state (c), then it can easily take the transition $4\to7$. However, if it starts in state (d), then it is much more costly for it to take $4\to7$. One might hope to resolve this issue by including a distinct subgoal region for every heading $\theta$; however, there would then be infinitely many subgoal regions since $\theta$ is continuous. Another example of why the ADP is not Markov is that the robot is more likely to run into a wall and incur a negative reward near the boundary of a subgoal region than in the interior. In general, the ADP is only Markov if the transition probabilities and rewards for \emph{all} options are \emph{exactly the same} for \emph{all} states in a subgoal region.

In our experiments, we show that an ablation that ignores the fact that the ADP is not Markov performs poorly. We also show that HIRO~\citep{nachum2018data} and SpectRL~\citep{jothimurugan2019composable}, which use model-free RL for high-level planning, perform poorly since they do not systematically explore in the ADP.

\textbf{Related work.}
There has been work on planning with options~\citep{precup1998theoretical,sutton1999between,theocharous2004approximate}, including in the setting of deep RL~\citep{bacon2017option,tiwari2019natural}. There has been work on leveraging action abstractions in the setting of deep RL~\citep{kulkarni2016hierarchical,nachum2018data,nachum2019near}. For instance, \cite{co2018self} propose a hierarchical RL algorithm that uses model-based RL for high-level planning, but they do not leverage state abstractions. More closely related, there has been work on leveraging state abstractions, typically in conjunction with action abstractions~\citep{dietterich2000state,andre2002state,theocharous2005spatial,li2006towards,gopalan2017planning,choudhury2019dynamic,winder2020planning,abel2020vpsa}. For instance, ~\cite{gopalan2017planning} propose an algorithm for planning in hierarchical MDPs, but assume that the abstract MDPs are given and furthermore satisfy the Markov property. There has been subsequent work that uses model-based reinforcement learning to learn both the concrete and abstract MDPs~\citep{winder2020planning}; however, they also do not account for the fact that the abstract MDP may not satisfy the Markov condition, and their approach is furthermore limited to finite state MDPs. The most closely related work is \cite{abel2020vpsa}, which analyzes how the failure of the Markov property affects planning in the abstract MDP. However, their algorithm still performs planning with respect to the concrete states; thus, their approach scales poorly with the size of the state space, and furthermore cannot be applied to continuous state spaces. Finally, there has been work on performing value iteration with upper/lower bounds~\citep{givan2000bounded}; however, this approach only applies to finite MDPs and furthermore is not designed to handle action abstractions.

There has been work on inferring options, by transferring options to new domains~\citep{konidaris2007building}, inferring options in multi-task reinforcement learning~\citep{stolle2002learning,konidaris2009skill,machado2017laplacian,finn2017generalizing,eysenbach2018diversity}, from demonstrations~\citep{hausman2018learning}, or using expectation-maximization~\citep{daniel2016probabilistic}. Similarly, there has been interest in planning by composing low-level skills~\citep{burridge1999sequential,majumdar2017funnel}. In contrast, our approach constructs action abstractions from user-provided state abstractions; thus, our approach has the benefit of not requiring additional information such as related tasks for learning. There has also been interest in inferring state abstractions~\citep{ferns2004metrics,jong2005state,abel2019state}, by transferring them to new domains~\citep{walsh2006transferring}, from demonstrations~\citep{cobo2011automatic}, and from options~\citep{jonsson2001automated,konidaris2014constructing}. There has been work on inferring state abstractions~\citep{ferns2004metrics,taylor2009bounding,taiga2018approximate,castro2019scalable} and options~\citep{castro2011automatic} by measuring state similarity in terms of reward and transition properties, but only for finite MDPs.

\section{PROBLEM FORMULATION}
\label{sec:problem}

\textbf{Background.}
A Markov decision process (MDP) is a tuple $(\S,\A,T,R,\gamma,\eta_0)$, where $\S\subseteq\R^n$ are the states, $\A\subseteq\R^m$ are the actions, $T(s,a,s')=p(s'\mid s,a)\in\mathbb{R}$ is the probability density of transitioning from $s$ to $s'$ on action $a$, $R(s,a)\in [0,1]$ is the reward for action $a$ in state $s$, $\gamma\in[0,1)$ is the discount factor, and $\eta_0$ is the initial state distribution. A (deterministic) policy is a function $\pi:\S\to \A$, where $a=\pi(s)$ is the action to take in state $s$. The value of a state $s$ under policy $\pi$ is denoted by $V^\pi(s)$ and the optimal policy is $\pi^*=\operatorname*{\arg\max}_{\pi}J(\pi)$, where $J(\pi)=\mathbb{E}_{s_0\sim\eta_0}[V^{\pi}(s_0)]
$ is the expected reward under policy $\pi$; we let $V^*=V^{\pi^*}$.

An \emph{option} $o$ is a tuple $(\pi, I, \beta)$, where $\pi$ is a policy, $I \subseteq \S$ is a set of initial states from which $\pi$ can be used, and $\beta: \S\to [0,1]$ is the termination probability~\citep{sutton1999between}. A set of options $\O$ defines a \emph{multi-time model}
$(\S,\O,T_\opt,R_\opt)$~\citep{sutton1999between}, where for $s, s'\in \S$ and $o = (\pi, I, \beta) \in \O$,
\begin{align*}
T_\opt(s, o, s') &= \sum_{t=1}^\infty\gamma^t p(s' \mid t, s, o)P(t\mid s,o)
\end{align*}
is the time-discounted probability density of transitioning from $s$ to $s'$ when using option $o$,
where $P(t\ |\ s,o)$ is the probability that $o$ terminates after $t$ steps when starting from $s$, and $p(s'\ |\ t,s,o)$ is the probability density of $o$ terminating in $s'$ when starting from $s$ given that it terminates after $t$ steps. The expected reward before termination using $o$ from $s$ is
\begin{align*}
R_\opt(s, o) &= \E_{s_0,a_0,\ldots,s_t\sim o}\left[\sum_{i=0}^{t-1}\gamma^{i}R(s_i, a_i) \mid s_0=s\right],
\end{align*}
where $t$ is the random time at which $o$ terminates when started at $s$. A (deterministic) \emph{option policy} $\rho:\S\to\O$ maps each state $s$ to an option $(\pi,I,\beta)=\rho(s)$ with $s\in I$, to use starting from $s$; $\rho$ induces a policy\footnote{The induced policy $\pi_\rho$ depends on an additional internal state (i.e., the option being used) to make decisions.} $\pi_\rho$ for the underlying MDP. The \emph{optimal option policy} is $\rho^*(s)=\operatorname*{\arg\max}_{o\in \O}Q_\O^*(s,o)$, where $Q^*$ is defined by the Bellman equations \citep{sutton1999between}:
\begin{align}
V_\O^*(s)&=\max_{o\in \O}Q_\O^*(s,o), \label{eqn:optbellman} \\
Q_\O^*(s,o)&=R_\opt(s,o)+\int_{\S}T_\opt(s,o,s') V_\O^*(s')ds'. \nonumber
\end{align}
We can use these equations in conjunction with option value iteration to compute $V^*_{\mathcal{O}}$ and $Q_{\mathcal{O}}^*$.

\textbf{Problem formulation.}
We assume we have access to  simulation of the concrete MDP $\M$---i.e., we can obtain samples $s'\sim p(\cdot\mid s,a)$ from the transitions $T$ using any concrete action $a\in\A$ from any concrete state $s\in\S$. We also assume we are given a finite set of subgoal regions $\aS$, where each $\as\in\aS$ is a subset of concrete states $\as\subseteq \S$. We assume they are disjoint---i.e., $\as\cap\as'=\varnothing$ if $\as\neq\as'$.\footnote{Given $\as,\as'$ such that $\as\cap\as'\neq\varnothing$, we can simply take $\as'=\as'\setminus\as$.}
Subgoal regions are similar to abstract states except they do not need to cover the state space of $\M$. Intuitively, they should include all subgoals that an optimal policy might need to reach to achieve the goal. In particular, given the subgoal regions, our algorithm only considers options such that (i) their initial set is a subgoal region, and (ii) they terminate upon entering any other subgoal region---i.e., options  serve as transitions between different subgoal regions.
\begin{definition}
\label{def:abstract_action}
\rm
Given subgoal regions $\aS$, an option $o = (\pi, I, \beta)$ is a \emph{subgoal transition} if $I = \as$ for some $\as\in \aS$ and $\beta(s)=\mathds{1}(s\in\bar{S}\setminus \as)$, where $\bar{S}\subseteq \S$ is the union of all subgoal regions---i.e., $\bar{S} = \bigcup_{\as\in\aS}\as$.
\end{definition}
We also assume we are given a set of edges $E\subseteq\aS\times\aS$ used to learn the subgoal transitions; by default, we can take $E=\aS\times\aS$. These edges are used to constrain the number of subgoal transitions---i.e., we only learn options that serve as transitions between $(\as,\as')\in E$. We denote by $\as_0\in\aS$ the initial region and assume that the initial state distribution $\eta_0$ assigns zero probability to $\S\setminus\as_0$.

Finally, part of our analysis considers the special case of reachability problems in deterministic MDPs with sparse rewards; many MDPs used in practice satisfy this assumption.
\begin{assumption}
\label{assump:deterministic}
\rm
The concrete MDP $\M$ has deterministic transitions $T:\S\times \A\to \S$. Furthermore, there is a distinguished subgoal region $\as_g\in\aS$, called the \emph{goal region}, such that (i) $\as_g$ is a sink---i.e., for all $s\in\as_g$ and $a\in\A$, $T(s, a) = s$, and (ii) the rewards are $1$ if transitioning to $\as_g$ and $0$ otherwise---i.e., $R(s, a)=\mathds{1}(s \notin \as_g\wedge T(s, a) \in \as_g)$.
\end{assumption}

\section{ROBUST ABSTRACT VALUE ITERATION}
\label{sec:theory}

First, we propose \emph{robust abstract value iteration (R-AVI)}, which takes a set of subgoal transitions $\O$ and computes an option policy $\tilde{\rho}$. This algorithm is intended for theoretical analysis; it provides insights on what kinds of subgoal regions can achieve good performance, which can in turn guide the design of subgoal regions. In particular, we prove that $\tilde{\rho}$ is close to the optimal option policy $\rho^*$ when all states within each subgoal region are similar. We also provide a way to construct the subgoal transitions $\O$ and show that, for these options, the computed policy $\pi_{\tilde{\rho}}$ is close to the optimal policy $\pi^*$ for $\M$ if the subgoal regions are bottlenecks and $\M$ has sparse rewards.

\textbf{Algorithm.}
Recall that we can in principle compute $\rho^*$ using (\ref{eqn:optbellman}); however, for continuous state spaces, we would need to use function approximation on $V^*_{\mathcal{O}}$ to do so. R-AVI leverages subgoal regions to avoid this issue---in particular, for each subgoal region $\as$, it computes an interval $[V^*_{\low}(\as), V^*_{\up}(\as)]$ such that for all $s\in\as$, we have $V^*_{\O}(s) \in [V^*_{\low}(\as), V^*_{\up}(\as)]$. 
It uses upper and lower bounds on the concrete transitions and rewards to do so. In particular, for $\as\in\aS$, let
\begin{align*}
\aP_{\low}(\as, o, \as') &= \operatorname*{\inf}_{s\in\as}\aP(s, o, \as') \\
\aP_{\up}(\as, o, \as') &= \operatorname*{\sup}_{s\in\as}\aP(s, o, \as'),
\end{align*}
where, for $s\in S$, $o\in \O$, and $\as'\in \aS$,
\begin{align}\label{eq:timediscountedtransition}
\aP(s, o, \as') &= \sum_{t=1}^\infty\gamma^t P(\as', t\mid s, o)
\end{align}
is the time-discounted probability of transitioning from concrete state $s$ to subgoal region $\as'$ using option $o$, and where $P(\as', t\mid s, o)$ is the probability that option $o$ terminates in subgoal region $\as'$ after $t$ steps when starting from $s\in\as$. The upper and lower bounds on the rewards are similar---i.e., for $\as\in\aS$ and $o\in\O$,
\begin{align*}
\aR_{\low}(\as, o) &= \operatorname*{\inf}_{s\in\as}{R_\opt}(s, o)
\\
\aR_{\up}(\as, o) &= \operatorname*{\sup}_{s\in\as}{R_\opt}(s, o).
\end{align*}
While computing these bounds may be intractable in general, they can be approximated via sampling; since our focus in this section is on theoretical guarantees, we assume we have computed them exactly. Given these bounds, R-AVI computes $\aV_z^* : \aS \to \R$ (for $z\in\{\low,\up\}$) by solving the recursive equations
\begin{align*}
\aV_z^*(\as) &= \max_{o\in\O}\aQ_z^*(\as, o)
\\
\aQ_z^*(\as, o) &= \aR_z(\as, o) + \sum_{\as'\in\aS}\aP_z(\as, o, \as') \cdot \aV_z^*(\as')
\end{align*}
using value iteration. Since there are only finitely many subgoal regions and options, we can do so using tabular value iteration even though $\M$ has continuous state and action spaces. We define the \emph{conservative optimal option policy} to be $\tilde{\rho}(s)=\operatorname*{\arg\max}_{o\in\O}\aQ_{\low}^*(\as, o)$, for all $s \in \as$ and $\as\in\aS$. Note that $\tilde{\rho}$ is only defined on $\bar{S}$ and also has a finite representation.

This algorithm can be interpreted as a version of value iteration on the \emph{abstract decision process (ADP)} $\tilde{\mathcal{M}}=(\aS,\mathcal{O},\aP_{\low},\aP_{\up},\aR_{\low},\aR_{\up},\gamma,\as_0)$, which is similar to an MDP except we are only given upper and lower bounds on the transitions and rewards rather than a single value. Intuitively, the gap in the difference between the upper and lower bounds captures the degree to which $\tilde{\mathcal{M}}$ fails to be Markov.

\textbf{Bound vs. $\rho^*$.}
Next, we establish conditions under which we can bound the performance of $\tilde{\rho}$ compared to the optimal option policy $\rho^*$. Let $\ep_T$ be the worst-case difference between $\aP_{\up}$\footnote{If $T$ is deterministic, then $\aP_\up(\as, (\pi, \as, \beta),\as') = \gamma^N$, where $N$ is the minimum number of steps it takes for $\pi$ to reach $\as'$ starting from some state $s\in\as$ (or $0$ if $\pi$ does not reach $\as'$ from any $s\in\as$).} and $\aP_{\low}$, and $\ep_R$ be the worst-case difference between $\aR_{\up}$ and $\aR_{\low}$:
\begin{align*}
\ep_T&=\operatorname*{\max}_{\as,\as'\in\aS,\ o\in\O}\aP_{\up}(\as,o,\as')-\aP_{\low}(\as,o,\as'),
\\
\ep_R&=\operatorname*{\max}_{\as\in\aS,\ o\in\O}\aR_{\up}(\as,o)-\aR_{\low}(\as,o).
\end{align*}
Then, we assume that $\ep_T$ is not too large.
\begin{assumption}
\label{assump:eps}
\rm
We have $\lvert\aS\rvert\ep_T<1-\gamma$.
\end{assumption}
As discussed above, $\ep_T$ captures the degree to which $\tilde{\mathcal{M}}$ fails to be Markov. Then, abstract value iteration converges and $\tilde{\rho}$ has performance close to that of $\rho^*$.
\begin{theorem}
\label{thm:valueguarantee}
Under Assumption~\ref{assump:eps}, R-AVI converges and  $J(\pi_{\tilde{\rho}})\ge J(\pi_{\rho^*})-\frac{(1-\gamma)\ep_R + |\aS|\ep_T}{(1-\gamma)(1-(\gamma + |\aS|\ep_T))}$.
\end{theorem}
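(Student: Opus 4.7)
The plan is to sandwich $V^{\pi_{\tilde\rho}}$ and $V^{\pi_{\rho^*}}$ between the two abstract value functions produced by R-AVI, and then bound their gap. For convergence, I would observe that for any $\as\in\aS$, $o\in\O$, and any fixed $s\in\as$,
\[
\sum_{\as'\in\aS}\aP_\low(\as,o,\as')\;\leq\;\sum_{\as'\in\aS}\aP(s,o,\as')\;=\;\sum_{t=1}^{\infty}\gamma^{t}P(t\mid s,o)\;\leq\;\gamma,
\]
so the lower-bound Bellman operator is a $\gamma$-contraction in $\|\cdot\|_\infty$; since $\aP_\up\leq\aP_\low+\ep_T$ entrywise, rows of $\aP_\up$ sum to at most $\gamma+|\aS|\ep_T<1$ by Assumption~\ref{assump:eps}, so the upper-bound operator contracts as well and R-AVI converges.

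Next, by a monotone fixed-point argument (if $TV\geq V$ then the unique fixed point $V^*\geq V$), I would establish
\[
\aV^*_\low(\as)\;\leq\;V^{\pi_{\tilde\rho}}(s)\;\leq\;V^{\pi_{\rho^*}}(s)\;\leq\;\aV^*_\up(\as)\quad\text{for all }s\in\as.
\]
For the left inequality, apply the Bellman operator for $\pi_{\tilde\rho}$ to $V(s):=\aV^*_\low(\as)$ with $o=\tilde\rho(\as)$: using $R_\opt(s,o)\geq\aR_\low(\as,o)$, the identity $\int_{\as'}T_\opt(s,o,s')\,ds'=\aP(s,o,\as')\geq\aP_\low(\as,o,\as')$, and $\aV^*_\low\geq 0$, the output dominates $\aV^*_\low(\as)$ because $o$ maximizes $\aQ^*_\low(\as,\cdot)$. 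The right inequality is symmetric, using the $\sup$ bounds, $\aP\leq\aP_\up$, and $\aV^*_\up\geq 0$, and taking $\max$ over $o$ on both sides.

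The main step is bounding $\delta(\as):=\aV^*_\up(\as)-\aV^*_\low(\as)$. Picking $o_\up\in\arg\max_o\aQ^*_\up(\as,o)$, using $\aV^*_\low(\as)\geq\aQ^*_\low(\as,o_\up)$, and applying the decomposition $\aP_\up\aV^*_\up-\aP_\low\aV^*_\low=\aP_\up\,\delta+(\aP_\up-\aP_\low)\aV^*_\low$,
\[
\delta(\as)\;\leq\;\ep_R+\sum_{\as'}\aP_\up(\as,o_\up,\as')\,\delta(\as')+\ep_T\sum_{\as'}\aV^*_\low(\as').
\]
The sandwich from the previous step yields $\aV^*_\low(\as')\leq V^{\pi_{\rho^*}}(s')\leq 1/(1-\gamma)$ (since rewards lie in $[0,1]$), so the last term is at most $|\aS|\ep_T/(1-\gamma)$. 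Letting $D:=\max_{\as}\delta(\as)$, the row-sum bound on $\aP_\up$ gives $D\leq\ep_R+|\aS|\ep_T/(1-\gamma)+(\gamma+|\aS|\ep_T)D$, which rearranges to exactly the gap stated in the theorem. Since $V^{\pi_{\rho^*}}(s)-V^{\pi_{\tilde\rho}}(s)\leq D$ for every $s\in\as_0$, taking expectation over $\eta_0$ finishes the proof.

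The hard part is the gap bound: the right decomposition is needed to separate a recursion-in-$\delta$ term from a ``transition-error times value'' term, and then the a priori bound $\aV^*_\low\leq 1/(1-\gamma)$ from the sandwich is essential, so the two steps must be carried out in this order. Assumption~\ref{assump:eps} is indispensable both for the upper-bound iteration to contract and for summability of the resulting geometric series.
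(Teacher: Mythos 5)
Your proposal is correct and follows essentially the same route as the paper: convergence via the row-sum bound $\sum_{\as'}\aP_z(\as,o,\as')\le\gamma+|\aS|\ep_T<1$, sandwiching $V^{\pi_{\tilde\rho}}$ and $V^{\pi_{\rho^*}}$ between $\aV^*_\low$ and $\aV^*_\up$ by monotone fixed-point arguments, and bounding the gap using the same decomposition $\aP_\up\aV^*_\up-\aP_\low\aV^*_\low=\aP_\up\delta+(\aP_\up-\aP_\low)\aV^*_\low$ together with the a priori bound $\aV^*_\low\le(1-\gamma)^{-1}$. The only (immaterial) difference is that you close the gap recursion by solving the fixed-point inequality for $D=\max_{\as}\delta(\as)$ directly, whereas the paper inducts on the value-iteration iterates and sums the resulting geometric series.
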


\textbf{Constructing subgoal transitions.}
%
So far, we have assumed that $\O$ is given. Given subgoal regions $\aS$ and edges $E \subseteq \aS \times \aS$, R-AVI automatically constructs options that serve as subgoal transitions---namely, it constructs the following \emph{ideal subgoal transitions}:
\begin{align*}
\tilde{\O}=\{(\pi(\as,\as'),\as,\beta)\mid (\as,\as')\in E, \as\neq\as'\ \text{and}\ \as\neq\as_g\},
\end{align*}
where $\beta$ is as in Definition~\ref{def:abstract_action}, and
\begin{align}
\label{eq:ideal}
\pi(\as,\as')=\operatorname*{\arg\max}_{\pi}\ \aP_{\up}(\as, (\pi, \as, \beta), \as')
\end{align}
maximizes the best-case reward for transitioning from $\as$ to $\as'$ over initial concrete states $s\in\as$. We can approximately compute $\tilde{\O}$ using model-free RL; however, as before, our focus is on theoretical guarantees.

\textbf{Bound vs. $\pi^*$.}
Theorem~\ref{thm:valueguarantee} is for a fixed set of options $\O$---i.e., both $\tilde{\rho}$ and $\rho^*$ use $\O$. In general, the choice of $\O$ determines how $\pi_{\rho^*}$ compares to the optimal policy $\pi^*$ for the underlying MDP $\M$. Suppose Assumption~\ref{assump:deterministic} holds; then, we can prove that $\tilde{\rho}$ constructed using the ideal subgoal transitions $\tilde{\O}$ performs nearly as well as $\pi^*$ under the \emph{bottleneck assumption}.
\begin{assumption}
\label{assump:bottleneck}
\rm
For any trajectory $s_0,a_0,s_1,\ldots,s_t$
such that $s_0\in\as_0$ and $s_t\in\as_g$, there exists a sequence of indices $0=i_0<\ldots<i_k=t$ and a sequence of subgoal regions $\as_0,\ldots,\as_k$ such that (i) for all $0\leq j\leq k$, $s_{i_{j}}\in\as_j$, and (ii) for all $j < k$, $(\as_j,\as_{j+1}) \in E$.
\end{assumption}
Intuitively, this assumption says that the subgoal regions are bottlenecks---i.e., any path from an initial state to a goal state can be represented as a sequence of subgoal transitions. Then, the option policy $\tilde{\rho}$ R-AVI computes has performance close to that of the optimal policy $\pi^*$ for the concrete MDP $\mathcal{M}$.
\begin{theorem}
\label{thm:valueguaranteeglobal}
Under Assumptions~\ref{assump:deterministic},~\ref{assump:eps}, \&~\ref{assump:bottleneck}, we have $J(\pi_{\tilde{\rho}}) \ge J(\pi^*) - \frac{(1-\gamma)\ep_R + |\aS|\ep_T}{(1-\gamma)(1-(\gamma + |\aS|\ep_T))}$.
\end{theorem}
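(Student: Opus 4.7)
The plan is to reduce Theorem~\ref{thm:valueguaranteeglobal} to Theorem~\ref{thm:valueguarantee} by establishing a new bound that uses Assumptions~\ref{assump:deterministic} and~\ref{assump:bottleneck}: namely, $\aV^*_{\up}(\as_0)\ge V^*(s_0)$ for every $s_0\in\as_0$ when the options are the ideal subgoal transitions $\tilde{\O}$. From the proof of Theorem~\ref{thm:valueguarantee} I would first extract two intermediate facts: (i) $J(\pi_{\tilde{\rho}})\ge\E_{s_0\sim\eta_0}[\aV^*_{\low}(\as_0)]$, since $\tilde{\rho}$ is greedy with respect to $\aV^*_{\low}$ and this is a pointwise lower bound on $V^{\pi_{\tilde{\rho}}}$, and (ii) $\aV^*_{\up}(\as)-\aV^*_{\low}(\as)\le\frac{(1-\gamma)\ep_R+|\aS|\ep_T}{(1-\gamma)(1-(\gamma+|\aS|\ep_T))}$ for every $\as$, which is the core robustness estimate. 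Chaining (i), (ii), and the new bound gives exactly the claimed inequality.

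To establish $\aV^*_{\up}(\as_0)\ge V^*(s_0)$, fix $s_0\in\as_0$ and let $s_0,a_0,\ldots,s_{N^*}$ be the deterministic optimal trajectory under $\pi^*$ with $s_{N^*}$ the first visit to $\as_g$; by Assumption~\ref{assump:deterministic}, $V^*(s_0)=\gamma^{N^*-1}$. Assumption~\ref{assump:bottleneck} supplies a decomposition $0=i_0<i_1<\cdots<i_k=N^*$ with regions $\as^{(0)}=\as_0,\as^{(1)},\ldots,\as^{(k)}=\as_g$ satisfying $(\as^{(j-1)},\as^{(j)})\in E$; taking a decomposition of maximum length $k$, I would argue that the interior of each segment avoids $\bar S$ (otherwise inserting an intermediate subgoal visit and reapplying the bottleneck assumption to the resulting sequence would extend the decomposition, contradicting maximality). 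For each segment, the option $o^*_j$ that plays $\pi^*$ from $s_{i_{j-1}}$ then reaches $\as^{(j)}$ in exactly $\Delta_j:=i_j-i_{j-1}$ steps without terminating earlier, so $\aP_{\up}(\as^{(j-1)},o^*_j,\as^{(j)})\ge\gamma^{\Delta_j}$; since the ideal option $\tilde{o}_j:=(\pi(\as^{(j-1)},\as^{(j)}),\as^{(j-1)},\beta)\in\tilde{\O}$ is defined by~(\ref{eq:ideal}) as the $\arg\max$ of $\aP_{\up}(\as^{(j-1)},\cdot,\as^{(j)})$, the same bound holds for $\tilde{o}_j$. For the last segment the deterministic sparse structure additionally yields $\aR_{\up}(\as^{(k-1)},\tilde{o}_k)=\gamma^{-1}\aP_{\up}(\as^{(k-1)},\tilde{o}_k,\as_g)\ge\gamma^{\Delta_k-1}$. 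Iterating the Bellman inequality $\aV^*_{\up}(\as^{(j-1)})\ge\aR_{\up}(\as^{(j-1)},\tilde{o}_j)+\aP_{\up}(\as^{(j-1)},\tilde{o}_j,\as^{(j)})\,\aV^*_{\up}(\as^{(j)})$ along the chain and dropping nonnegative off-path contributions telescopes to $\aV^*_{\up}(\as_0)\ge\gamma^{\Delta_1+\cdots+\Delta_{k-1}}\cdot\gamma^{\Delta_k-1}=\gamma^{N^*-1}=V^*(s_0)$, as desired.

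The main obstacle is the refinement step: Assumption~\ref{assump:bottleneck} only guarantees the existence of \emph{some} $E$-admissible decomposition, whereas the unrolling needs one whose segment interiors avoid all subgoal regions---otherwise the ideal option, which terminates on entry into \emph{any} subgoal region, cannot be guaranteed to arrive at the intended next region rather than at an intermediate one. Securing this property (via a maximal-length choice of decomposition and a careful reapplication of the bottleneck assumption to preserve $E$-adjacency) is the only nontrivial structural ingredient; once it is in place, the remainder is a routine telescoping of the Bellman inequality for $\aV^*_{\up}$ together with the deterministic reach-time identity $\aR_{\up}(\cdot,o)=\gamma^{-1}\aP_{\up}(\cdot,o,\as_g)$ for options terminating in the goal region.
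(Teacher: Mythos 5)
Your proposal reproduces the paper's argument essentially step for step: the same reduction to the three ingredients ($V^{\pi_{\tilde{\rho}}}\ge\aV^*_{\low}$ pointwise, the gap bound on $\aV^*_{\up}-\aV^*_{\low}$, and the new lemma $\aV^*_{\up}(\as_0)\ge V^*(s_0)$), and the same proof of that lemma via the bottleneck decomposition of the optimal trajectory, the bound $\aP_{\up}(\as_j,o_j,\as_{j+1})\ge\gamma^{i_{j+1}-i_j}$ obtained from the $\arg\max$ definition of the ideal options, the identity $\aR_{\up}(\as_{k-1},o_{k-1})=\gamma^{-1}\aP_{\up}(\as_{k-1},o_{k-1},\as_g)$ for the final segment, and the telescoping Bellman inequality. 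The refinement issue you single out as the main obstacle (segment interiors must avoid $\bar{S}$, since the options terminate on entering \emph{any} other subgoal region) is a genuine subtlety, but the paper's proof does not resolve it either---it silently writes $\aP(s_{i_j},o_j^*,\as_{j+1})=\gamma^{i_{j+1}-i_j}$ as an equality, implicitly taking the decomposition to be the fine one---so your explicit (if not fully airtight) maximality argument is, if anything, more careful than the published proof.
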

This result is stronger than Theorem~\ref{thm:valueguarantee} since it compares to $\pi^*$, not $\rho^*$, but relies on stronger assumptions.

\textbf{Implications.}
Our results establish two conditions on the subgoal regions $\aS$ such that $\tilde{\rho}$ performs nearly as well as $\pi^*$: (i) Theorem~\ref{thm:valueguarantee} suggests that for any option $o$ and subgoal region $\as$, the reward and time-discounted transition probabilities for $o$ are similar starting from any concrete state $s\in\as$, and (ii) Theorem~\ref{thm:valueguaranteeglobal} suggests that the subgoal regions should be bottlenecks in the underlying MDP.

\section{ALTERNATING ABSTRACT VALUE ITERATION}
\label{sec:alg}

\begin{algorithm}[t]
\begin{algorithmic}[1]
\FUNCTION{LearnPolicy($\M,\aS,E,N$)}
\STATE Initialize $\D$
\FOR{$i\in\{1,...,N\}$}
\STATE Learn the ideal subgoal transitions $\O_\D$
\STATE Estimate $\aP_\D$ and $\aR_\D$ for $\O_\D$
\STATE Compute $\tilde\rho_\D$ using abstract value iteration
\FOR{$\as\in\aS$}
\STATE $\bar{\D}\leftarrow$ Distribution over $\as$ induced by $\pi_{\tilde\rho_{\D}}$
\STATE Update $\D\leftarrow(1-\alpha_i)\D+\alpha_i\bar{\D}$
\ENDFOR
\ENDFOR
\STATE \textbf{return} $\O_\D$, $\pi_{\rho_{\D}}$
\ENDFUNCTION
\end{algorithmic}
\caption{(A-AVI) Iterative algorithm for constructing hierarchical policy.}
\label{alg:main}
\end{algorithm}

There are two shortcomings of R-AVI. First, computing $\tilde{\O}$, $\aP_\low$, and $\aR_\low$ may be computationally infeasible since we only assume the ability to obtain samples from $\mathcal{M}$. Second, making conservative assumptions about $\aP$ and $\aR$ can lead to suboptimal $\tilde\rho$.

We propose \emph{alternating abstract value iteration (A-AVI)} (shown in Algorithm~\ref{alg:main}). This algorithm addresses the issues with R-AVI by planning according to the expected values of $\aP$ and $\aR$ with respect to some distribution $\D$ over concrete states $\S$. Na\"{i}vely, this approach only works when the ADP $\tilde{\mathcal{M}}$ is Markov; otherwise, the estimates of $\aP$ and $\aR$ depend on the choice of $\D$. To address this issue,
A-AVI alternates between (i) given $\D$, learn the subgoal transitions $\O_\D$ using model-free RL and estimate the expected values of $\aP$ and $\aR$ for $\O_\D$, and (ii) given $\O_\D$ and the estimates of $\aP$ and $\aR$, compute the optimal option policy $\rho_\D$ using value iteration, and update $\D$ to be the state distribution induced by using $\rho_\D$.

\textbf{Step 1.}
For step (i), we first define the ideal subgoal transitions with respect to $\D$ to be
\begin{align*}
&\O_\D=\{(\pi_\D(\as,\as'),\as,\beta)\mid (\as,\as')\in E,~\as\neq\as',~\as\neq\as_g\} \nonumber \\
&\pi_\D(\as,\as')=\operatorname*{\arg\max}_{\pi}\E_{s\sim \D}[\aP(s, (\pi,\as,\beta), \as')\mid s\in\as],
\end{align*}
where $\aP$, defined in (\ref{eq:timediscountedtransition}), is the time-discounted probability density of transitioning to $\as'$ from a concrete state $s\in\as$. Intuitively, $\pi_{\D}(\as,\as')$ is the policy that maximizes probability of transitioning to $\as'$ from $\as$. Then, computing $\pi_{\D}(\as,\as')$ can be formulated as the RL problem for the MDP $\M'=(\S,\A,T',R',\gamma,\eta_0')$, where
\begin{align*}
T'(s,a,s') &= \begin{cases}
T(s,a,s') &\text{if}\ s \notin \bar{S}\setminus\as\\
\frac{\mathds{1}(s'\in\as'')}{\int_{\S}\mathds{1}(s'\in\as'')ds'} &\text{if}\ s \in \as''\subseteq \bar{S}\setminus \as \\
\end{cases} \\
R'(s,a)&=\mathds{1}(s\not\in\as')\cdot\int_{\as'}T(s,a,s')ds',
\end{align*}
and $\eta_0' = \D_{\as}$, where $\D_{\as}=\D\mid s\in\as$ is the conditional distribution of $\D$ given that $s\in\as$. That is, $\M'$ is the concrete MDP $\M$ except where all subgoal regions in $\aS\setminus\{\as\}$ are sinks, the rewards $R'$ encode that $\as'$ is the goal region, and the initial state distribution is $\D_{\as}$. Since we assumed we can simulate $\M$ starting at any state $s\in\S$, we can also simulate $\M'$ by terminating episodes upon reaching $\bar{S}\setminus\as$. Thus, existing RL algorithms can be used to learn these policies.

\textbf{Step 2.}
Next, for step (ii), A-AVI plans in the ADP $\tilde{\M}_\D$, whose transitions $\aP_\D$ and rewards $\aR_\D$ are the expected values of the time-discounted transition probabilities $\aP$ and rewards $\aR$, respectively---i.e.,
\begin{align*}
\label{eqn:samplempd}
\aP_\D(\as, o, \as')&=\E_{s\sim \D_{\as}}[\aP(s, o, \as')]\\
\aR_\D(\as,o)&=\E_{s\sim \D_{\as}}[\aR(s, o)],
\end{align*}
which can be estimated using sampled rollouts. Then, A-AVI uses value iteration to solve
\begin{align*}
\aV_\D^*(\as)&=\operatorname*{\max}_{o\in\O_\D}\aQ_\D^*(\as,o)    \\
\aQ_\D^*(\as,o)&=\aR_\D(\as,o)+\sum_{\as'\in\aS}\aP_\D(\as,o,\as')\cdot\aV_\D^*(\as'),
\end{align*}
and computes the policy $\tilde\rho_\D:\bar{S}\to \O_\D$ as
$\tilde\rho_\D(s)=\operatorname*{\arg\max}_{o\in\O_\D}\aQ_\D^*(\as,o)$ for all $s\in\as$ and $\as\in\aS$.
Next, we estimate $\D$ to equal the state distribution over $\S$ induced by using policy $\pi_{\tilde\rho_{\D}}$ in the concrete MDP $\M$.
Intuitively, this condition says that $\pi_{\tilde\rho_\D}$ is used on the same state distribution as it was trained. More precisely, we take $\D=(1-\alpha_i)\D+\alpha_i\bar{\D}$, where $\bar{\D}$ is the state distribution over $\S$ induced by using $\pi_{\tilde\rho_{\D}}$. This approach, based on dataset aggregation, is a heuristic to facilitate convergence of A-AVI~\citep{ross2011reduction}.

\textbf{R-AVI vs A-AVI.} We emphasize that with R-AVI, we can theoretically characterize the quality of a given set of subgoal regions, which enables us to guide their design. The upper and lower bounds in R-AVI are necessary for the theoretical guarantees, but computing them for continuous state and action spaces is intractable. Consequently, A-AVI takes a different approach for dealing with the non-Markov nature of the ADP; in particular, it uses the expected MDP but then uses alternation to improve robustness. This makes A-AVI a practical hierarchical RL algorithm that we evaluate empirically in our experiments.

\section{EXPERIMENTS}
\setlength{\tabcolsep}{0pt}
\begin{figure*}[t]
\centering
\begin{tabular}{cc}
\includegraphics[width=0.3\linewidth]{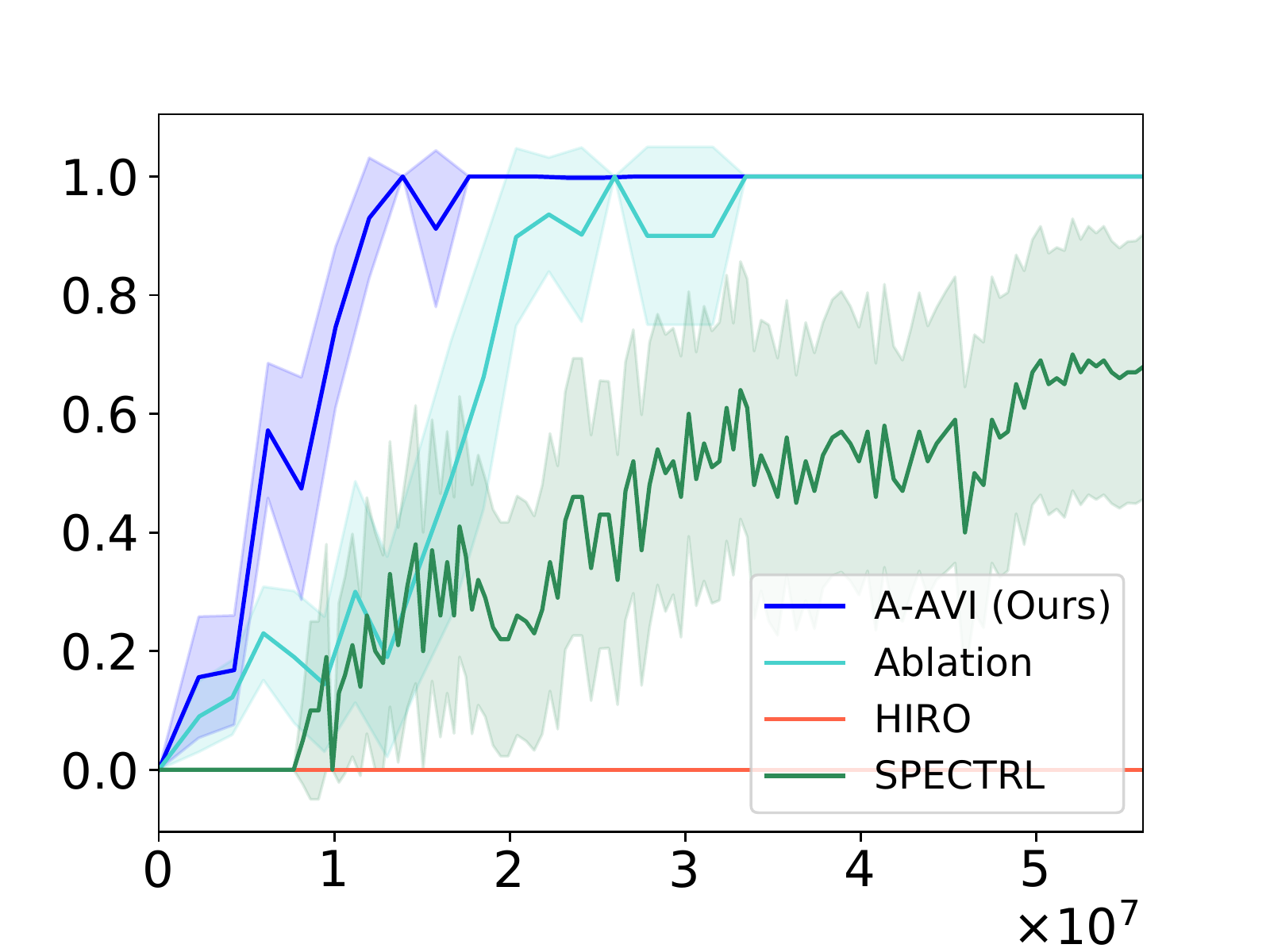} &
\includegraphics[width=0.3\linewidth]{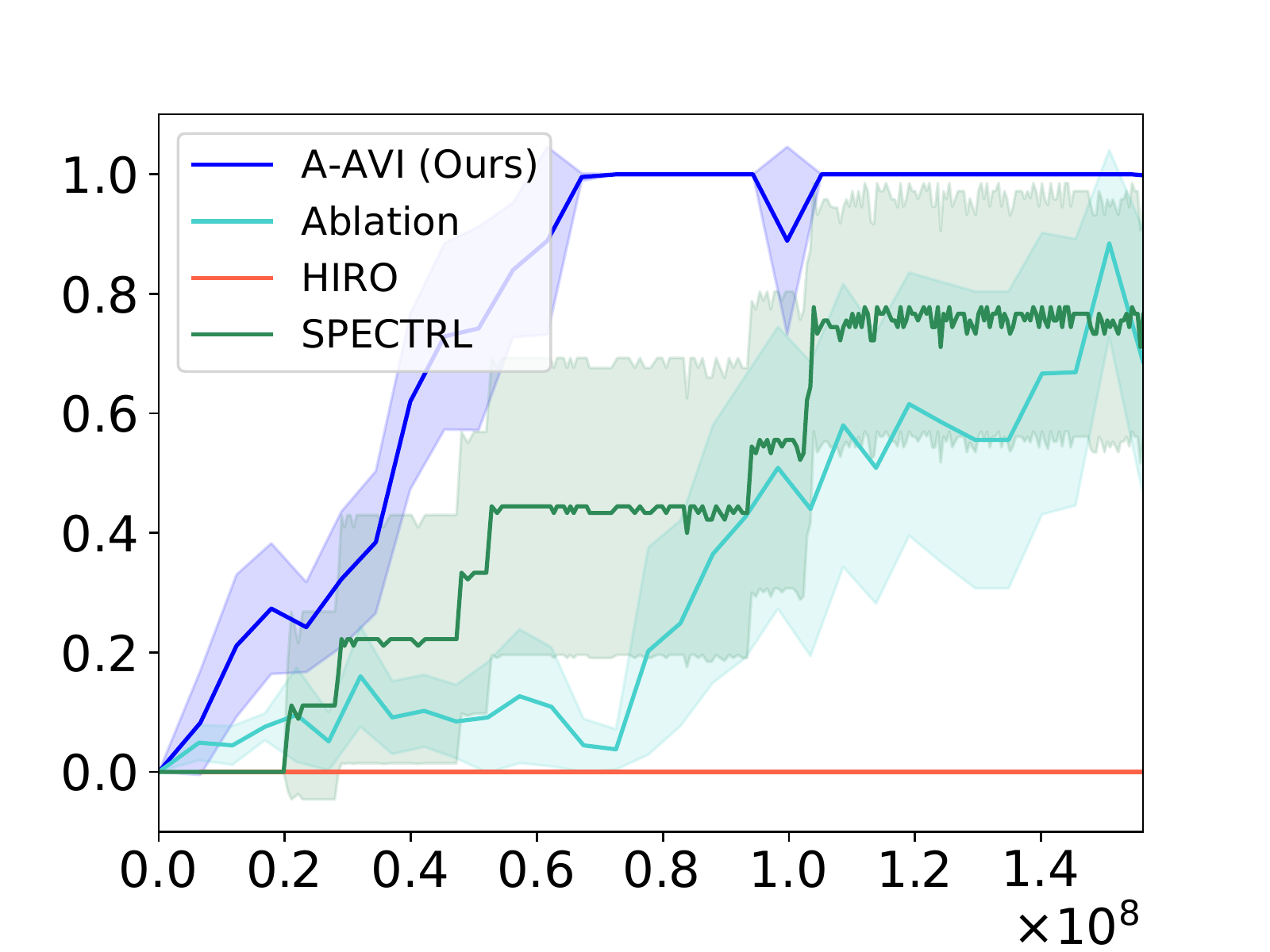}\\
(a) 9-Rooms & (b) 16-Rooms
\end{tabular}
\begin{tabular}{ccc}
\includegraphics[width=0.3\linewidth]{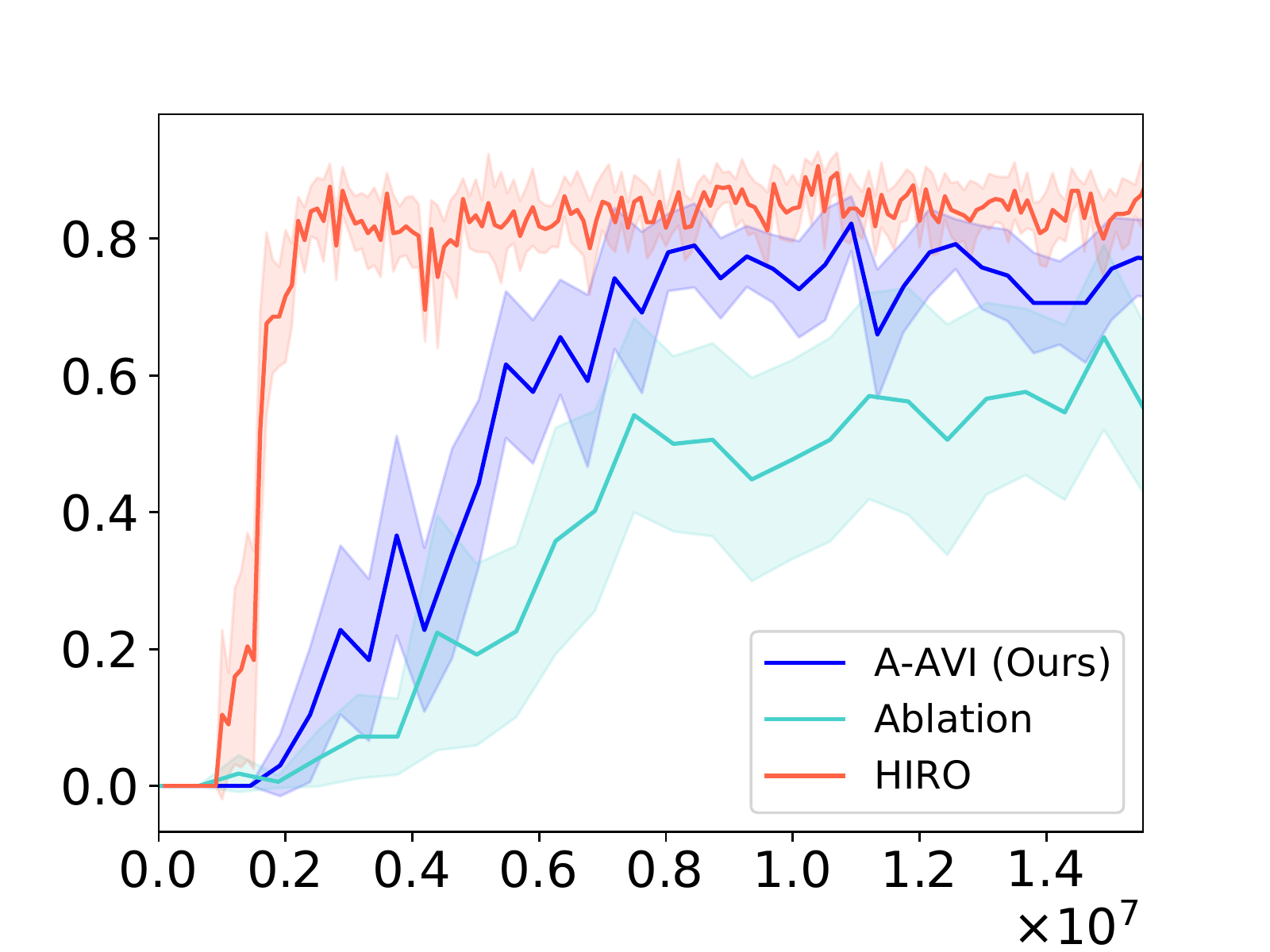} &
\includegraphics[width=0.3\linewidth]{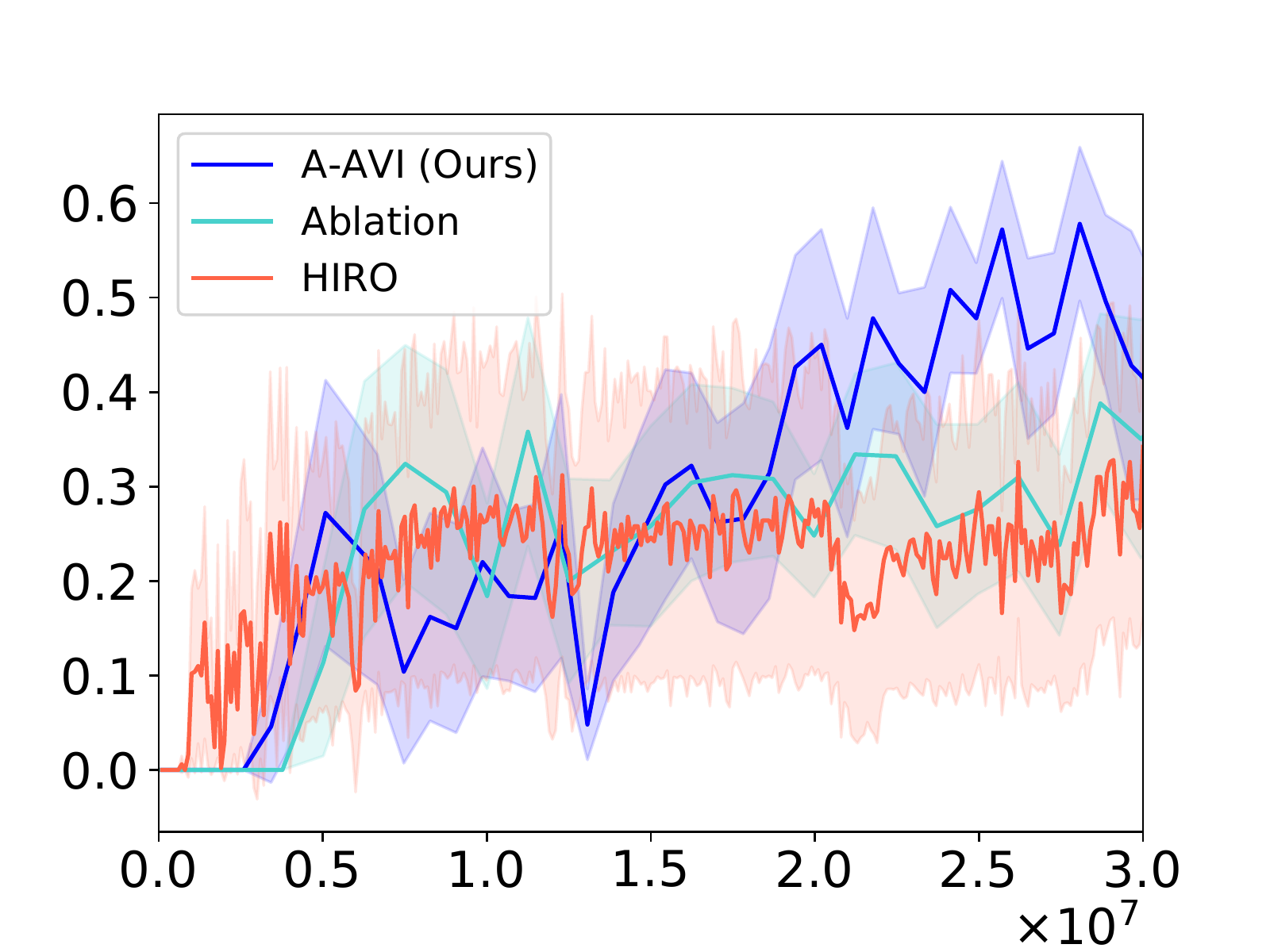} &
\includegraphics[width=0.3\linewidth]{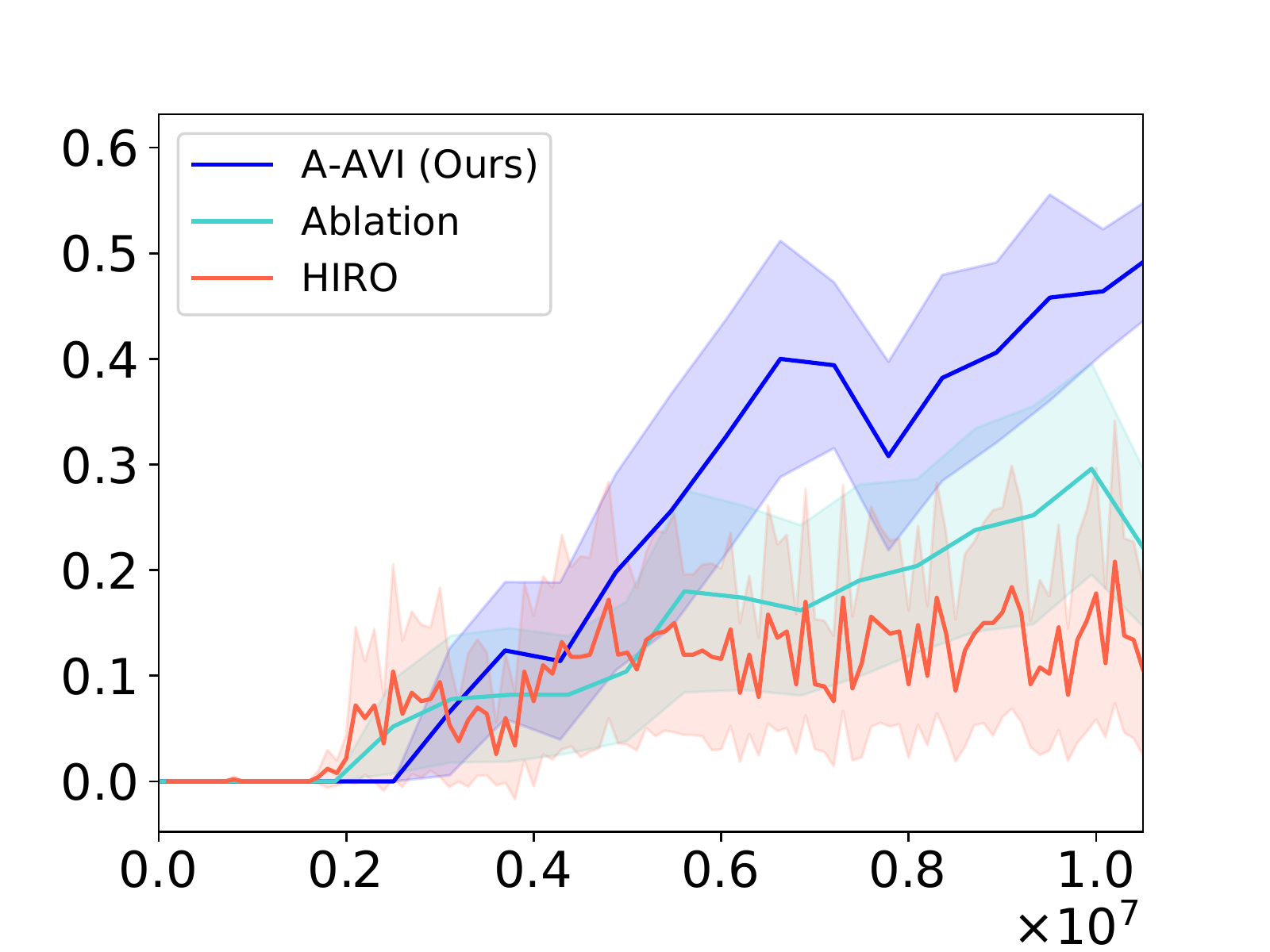}\\
(c) AntMaze & (d) AntPush & (e) AntFall 
\end{tabular}
\caption{Comparison with baselines for different environments;
$x$-axis is number of samples (steps) from the environment, and $y$-axis is the probability of reaching the goal. Results are averaged over 10 executions.}
\label{fig:rooms_baselines}
\end{figure*}

We evaluate our approach\footnote{Our implementation is available at \href{https://github.com/keyshor/abstract-value-iteration}{https://github.com/keyshor/abstract-value-iteration}.} on two continuous control benchmarks: (i) two room environments where a robot must navigate a maze of rooms, which are continuous variants of standard hierarchical RL benchmarks~\citep{gopalan2017planning,abel2020vpsa} and (ii) a hierarchical RL benchmark~\citep{nachum2018data} based on the MuJoCo ant~\citep{todorov2012mujoco}. The room environments have comparatively simple dynamics, but the high-level task is very challenging due to the nonconvex state space. Alternatively, the ant environments have more complex robot dynamics, but comparatively simple high-level tasks. Our approach, A-AVI, outperforms state-of-the-art baselines in both cases.

\textbf{Room environments.}
We consider two environments, 9-Rooms and 16-Rooms, consisting of interconnected rooms. They have states $(x,y)\in\mathbb{R}^2$ encoding 2D position, actions $(v,\theta)\in\mathbb{R}^2$ encoding speed and direction, and transitions $s'=s+(v\cos(\theta),v\sin(\theta))$.
Figure~\ref{fig:rooms_and_graph} (a) shows 9-Rooms. Subgoal regions are the light gray squares; edges connect adjacent subgoal regions. The agent starts in a uniformly random state in the initial subgoal region (the blue square); its goal is to reach the goal region (the green square). The agent receives a reward of 1 upon reaching the goal. We learn subgoal transitions using ARS~\citep{mania2018simple}, with a shaped reward equal to the negative distance to the center of the target subgoal region; each policy is a fully connected neural network with 2 hidden layers with 30 neurons each. We give details in Appendix~\ref{sec:expappendix}.

\textbf{Ant environments.}
We also consider three MuJoCo~\citep{todorov2012mujoco} ant environments from \cite{nachum2018data}: AntMaze (navigate a U-shaped corridor), AntPush (push away a large block to reach the region behind it), and AntFall (push a large block into a chasm to form a bridge to get to the other side). We consider subgoal regions that are subsets of the state space where the ant position is in a small rectangular region on the plane. AntFall has four subgoal regions: the initial region, an intermediate region at each of the two turns, and the goal region. AntPush has five subgoal regions: the initial region, two at the bottom-left and top-left corners, one at the gap where the ant must enter to reach the goal, and the goal region; in the abstract MDP, there are three paths from the initial region to the goal region. AntFall has five subgoal regions: the initial region, three along the path to the goal region, and the goal region. We learn subgoal transitions using TD3~\citep{fujimoto2018addressing}. To improve sample efficiency, we retain the state of TD3 (i.e., actor-networks, critic-networks, replay buffer, etc.) across iterations of A-AVI. The neural network architecture and hyperparameters are similar to~\cite{nachum2018data}. We give details, including visualizations of the subgoal regions, in Appendix~\ref{sec:expappendix}.

\textbf{Results.}
We show results in Figure~\ref{fig:rooms_baselines}. Our approach tends to perform better than our baselines in tasks requiring significant exploration---i.e., it substantially outperforms all baselines for 9-Rooms, 16-Rooms, and AntFall. We discuss comparisons below.

\textbf{Comparison to HIRO.}
We compare to a state-of-the-art hierarchical deep RL algorithm called HIRO~\citep{nachum2018data}, which uses a high-level policy to generate intermediate goals that a low-level policy aims to achieve. As with our algorithm, we used shaped rewards---i.e., the negative distance from the current state to the center of the goal region, which is fixed for each environment. HIRO does not require any additional information about the environment; in particular, it is not given the subgoal regions as input. As seen in Figure~\ref{fig:rooms_baselines}, HIRO performs substantially worse on 9-Rooms and 16-Rooms. HIRO does not know the structure of the state space, so it is unable to discover the path from the initial region to the goal region and gets stuck in a local optimum. Intuitively, HIRO is designed to decouple complex dynamics (e.g., the ant) from high-level planning (e.g., sequence of tasks), not to solve challenging high-level planning problems.

\textbf{Comparison to SpectRL.}
We compare to SpectRL~\citep{jothimurugan2019composable}, a framework where the user specifies a task as a sequence of subgoals (rather than as a reward function); then, SpectRL uses the subgoals to generate a reward function. It is essentially a hierarchical RL framework that uses options but not state abstractions. SpectRL takes as input not only the subgoals, but also potential sequences of subgoals that can be used to reach the goal, thereby requiring additional user-provided information beyond what is needed by our approach. For each room environment, we specify the task as a choice between two sequences of subgoals representing two paths from the initial region to the goal region; each subgoal is centered at a subgoal region along the path. As can be seen in Figure~\ref{fig:rooms_baselines}, SpectRL learns policies with suboptimal success rates since it sometimes chooses the wrong path. This shortcoming is because SpectRL does not solve for the optimal policy at the abstract level; instead, it uses a greedy heuristic.

\textbf{No alternation.}
Our A-AVI algorithm uses alternating optimization to handle the non-Markov nature of the abstract MDP. To evaluate the effectiveness of this approach, we consider an ablation of our algorithm where $\D$ is not updated (equivalently, one iteration of A-AVI). This ablation can be thought of as extending~\cite{gopalan2017planning} to learn the transitions and rewards of the abstract MDP, or extending~\cite{winder2020planning} to handle continuous state spaces. As can be seen in Figure~\ref{fig:rooms_baselines}, our ablation performs poorly, likely because it is unable to account for the non-Markov nature of the ADP. Empirically, we observe that the robot often becomes stuck at walls at the boundary of subgoal regions; these states are very rarely sampled under the distribution $\D$.

\setlength{\tabcolsep}{3pt}
\begin{figure}[t]
\centering
\begin{tabular}{cc}
     \includegraphics[width=0.48\linewidth]{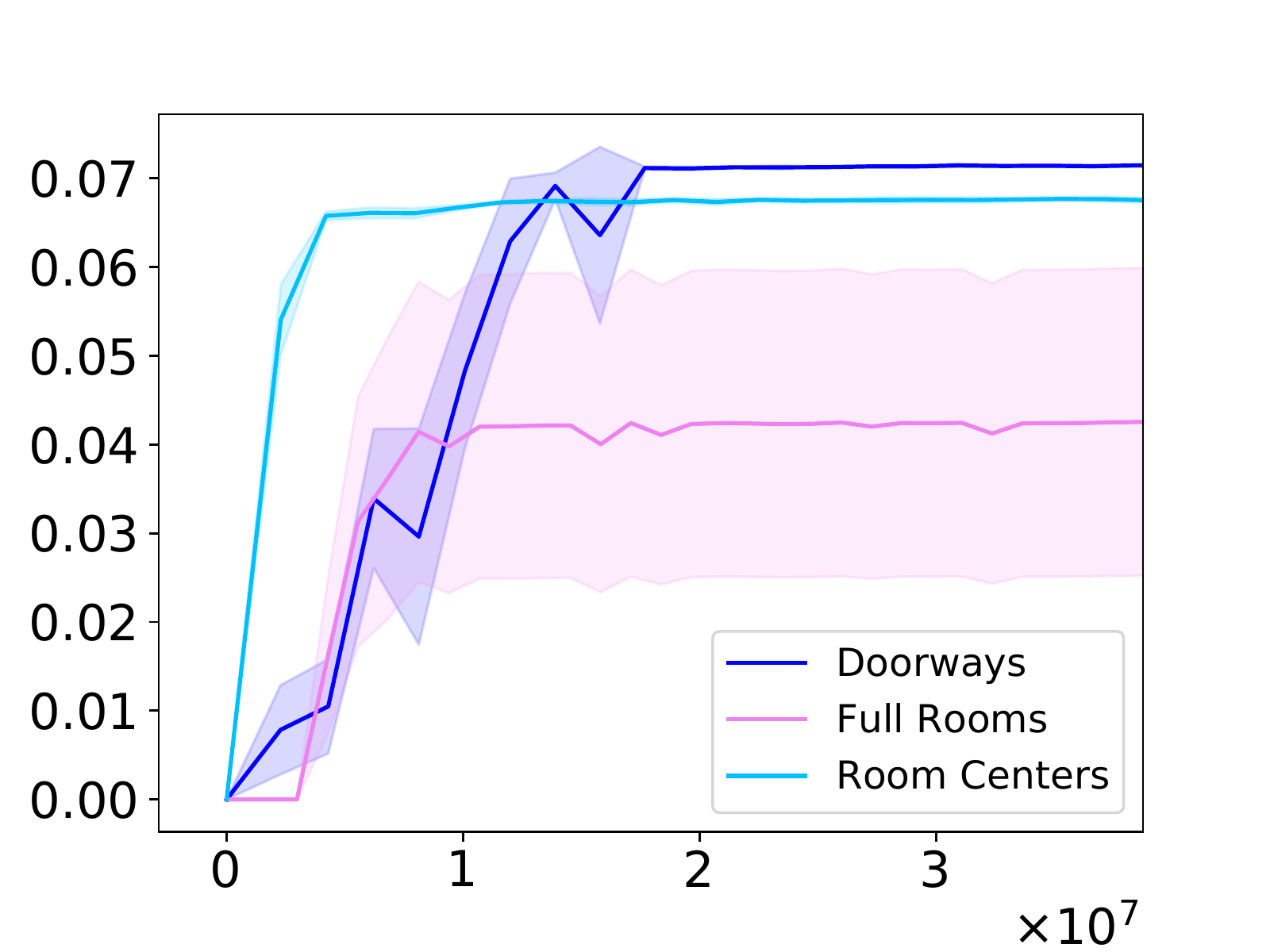} &
     \includegraphics[width=0.48\linewidth]{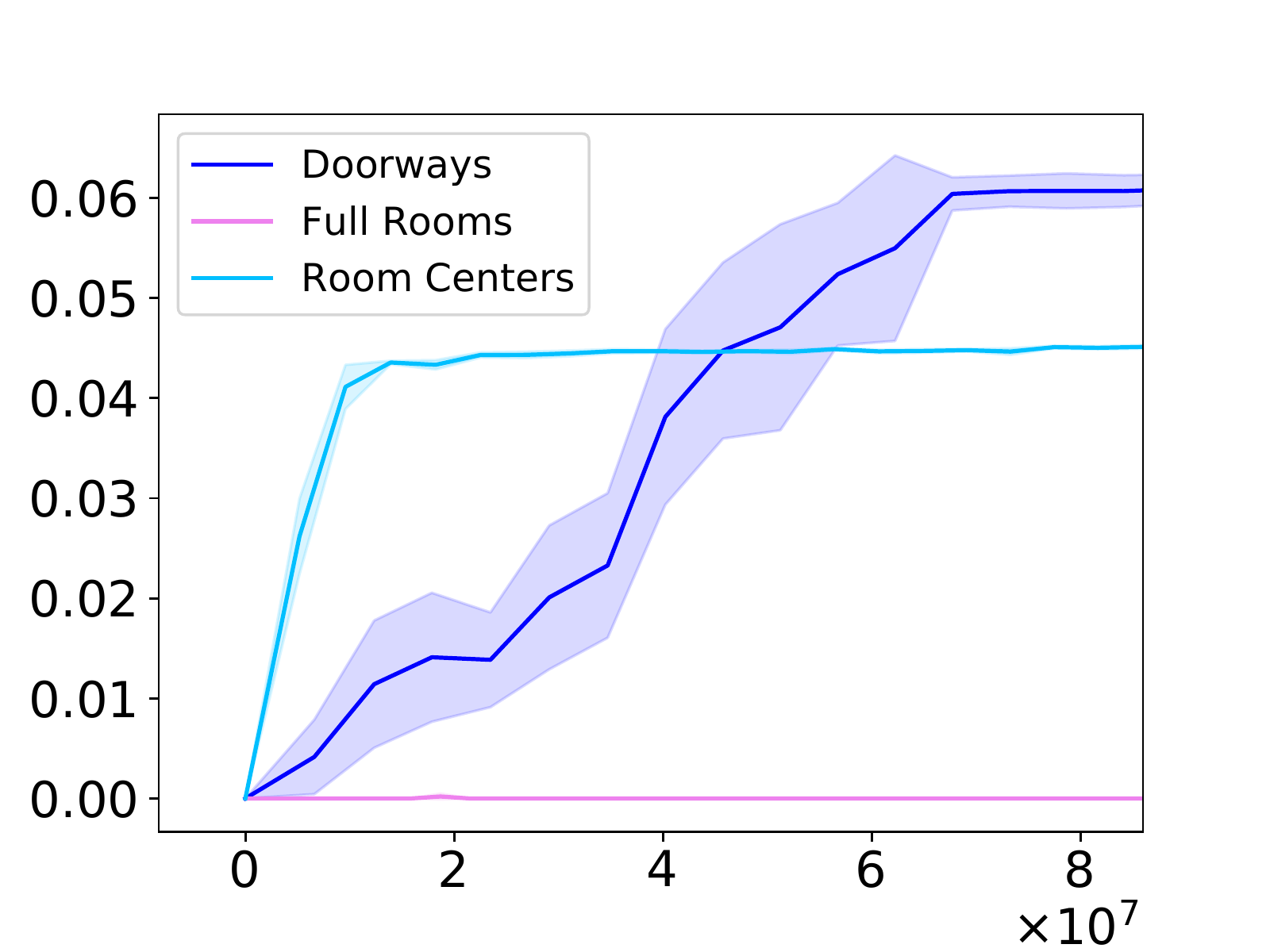}\\
     (a) 9-Rooms & (b) 16-Rooms
\end{tabular}
\caption{Comparison of subgoal regions for room environments;
$x$-axis is number of samples (steps) from the environment, and $y$-axis is the discounted reward. Results are averaged over 10 executions.}
\label{fig:abstract_states}
\end{figure}

\textbf{Choice of subgoal regions.}
We study the impact of the choice of subgoal regions on performance for the room environments. Our choice of ``doorways'' used in the above experiments is motivated by our theory for R-AVI, which suggests that subgoal regions should be bottlenecks that are small in size. We evaluate two alternatives: (i) ``room center'' consists of a square at the center of each room that is the same size and shape as the doorways, and (ii) ``full room'' consists of a square for each room covering the entire room (similar to \cite{abel2020vpsa}) as shown in Figure~\ref{fig:rooms_and_graph} (e). Results are shown in Figure~\ref{fig:abstract_states}. Our original choice ``doorways'' achieves the highest discounted reward in all environments, validating our theory. ``Full rooms'' performs poorly, likely because the large regions induce large defects in the ADP. ``Room centers'' converges quickly but to a suboptimal value, likely because the robot cannot travel diagonally across rooms (e.g., the $1\to4$ transition in Figure~\ref{fig:rooms_and_graph} (a)). Thus, using smaller subgoal regions is crucial for good performance, whereas using bottlenecks is of secondary importance.

\begin{figure}[t]
\centering
\begin{tabular}{cc}
\includegraphics[width=0.48\linewidth]{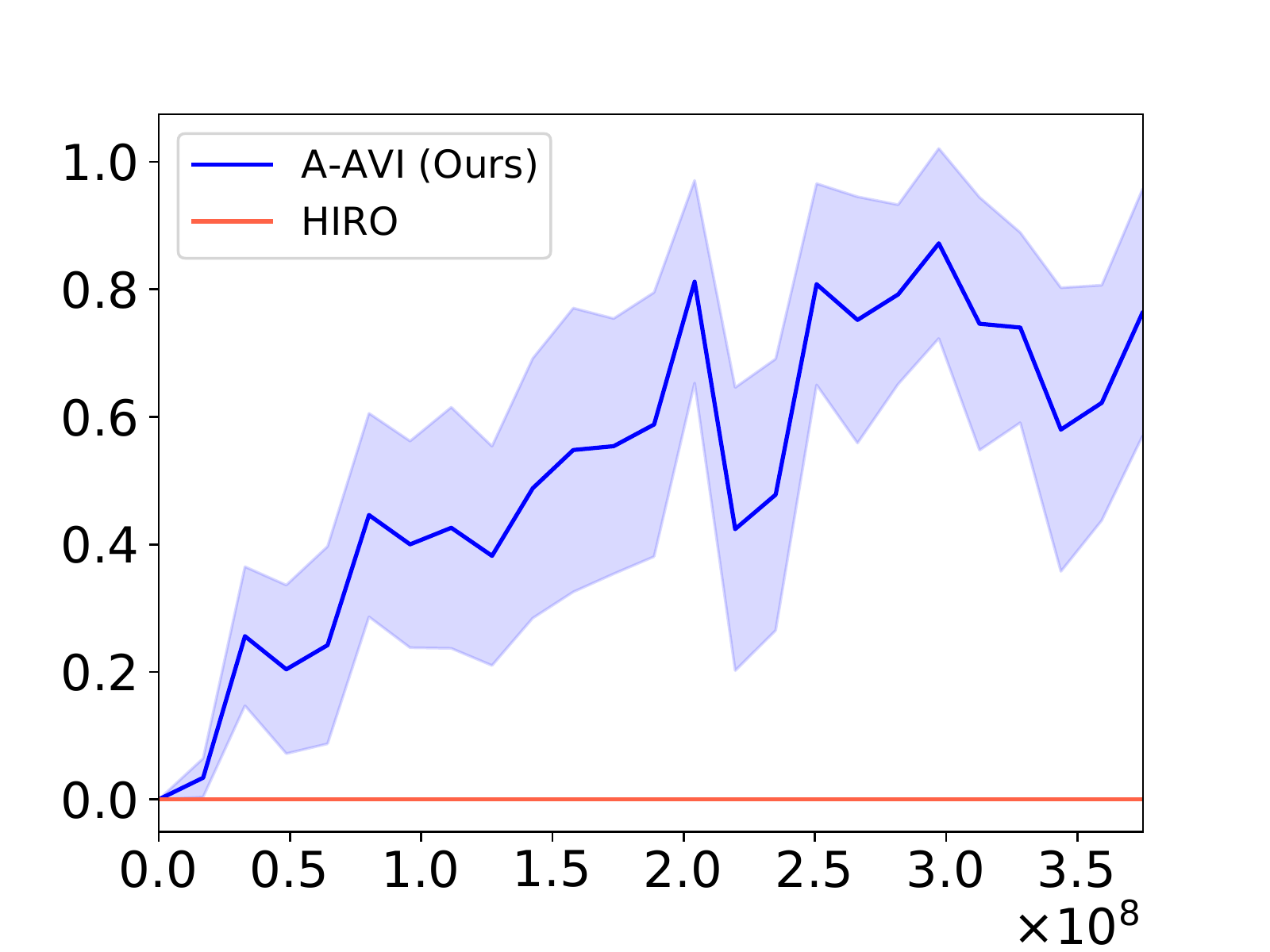} &
\includegraphics[width=0.48\linewidth]{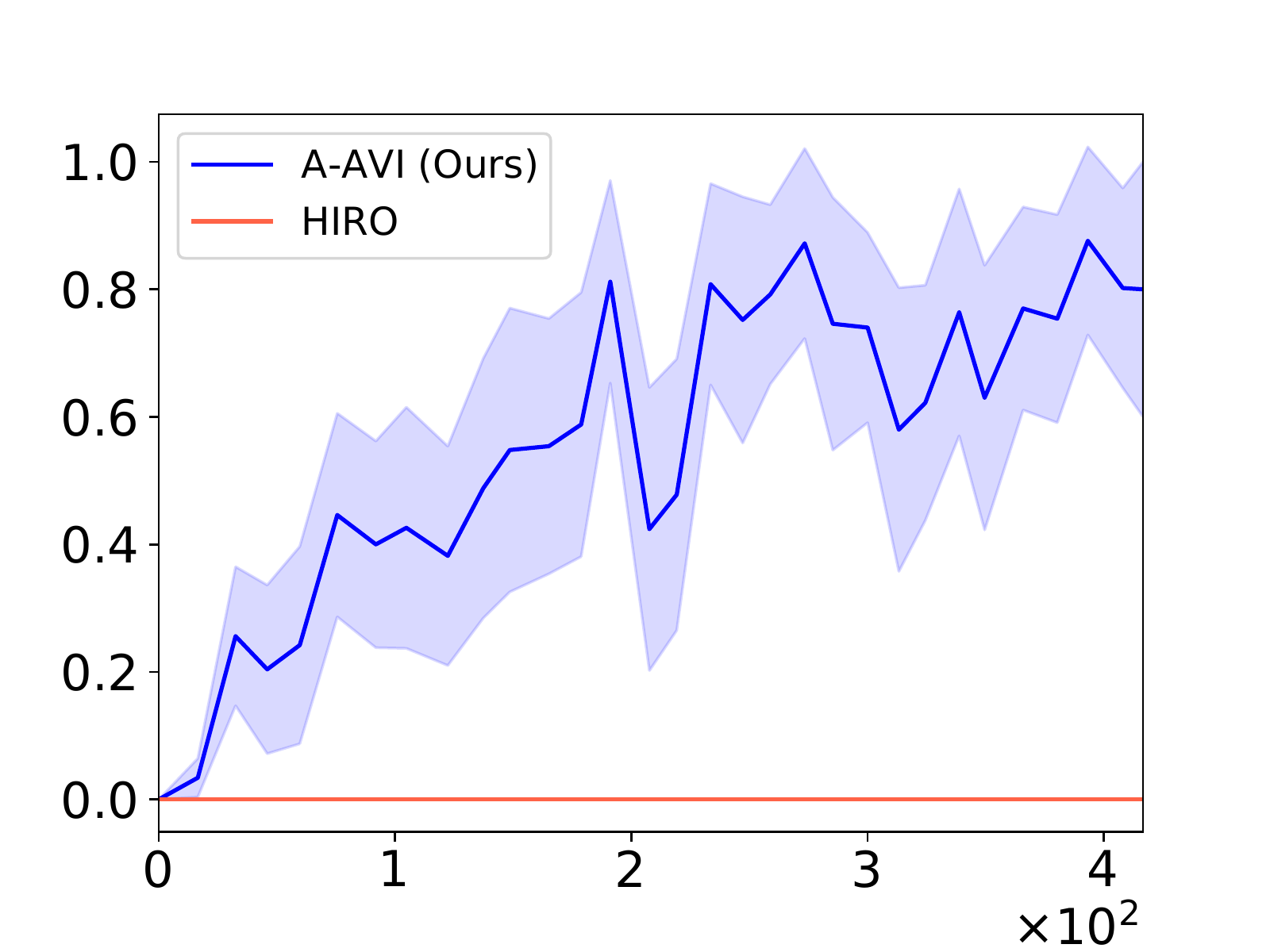}\\
(a) Sample Complexity & (b) Learning Time
\end{tabular}
\caption{Learning curves of A-AVI with randomly generated subgoal regions in 9-Rooms; the plots show the probability of reaching the goal ($y$-axis) as a function of (a) number of samples (steps) from the environment and (b) time since the beginning of training (in minutes). Results are averaged over 10 executions.}
\label{fig:random}
\end{figure}

\textbf{Random subgoal regions.} We also consider randomly sampling subgoal regions for the room environments. We first sample $N$ points uniformly at random from the 2D plane. For each point, we define a subgoal region which is a small square with that point in its center. Next, we add edges from each point to its $K$ nearest neighbors. For the 9-Rooms environment, we set $N=20$ and $K=7$. As shown in Figure~\ref{fig:random}, this approach performs significantly better than HIRO without any additional input from the user. Although the sample complexity is high, the learning time is low since the options are learned in parallel. The learning curves for the 16-Rooms environment are in Appendix~\ref{sec:expappendix}.

\begin{figure}[t]
\centering
\begin{tabular}{cc}
\vcentered{\includegraphics[width=0.32\linewidth]{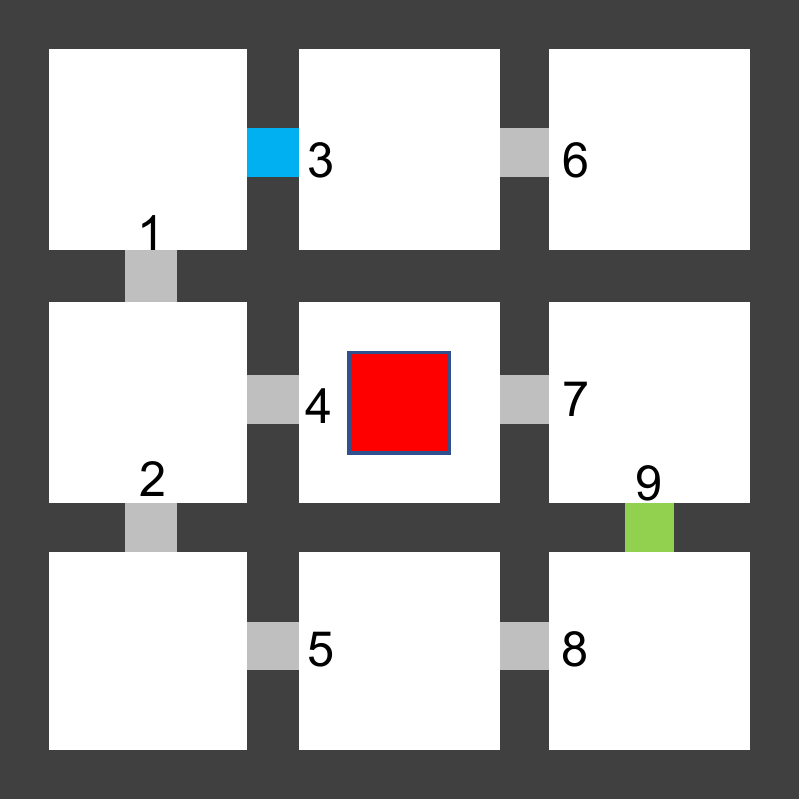}} &
 \vcentered{\includegraphics[width=0.48\linewidth]{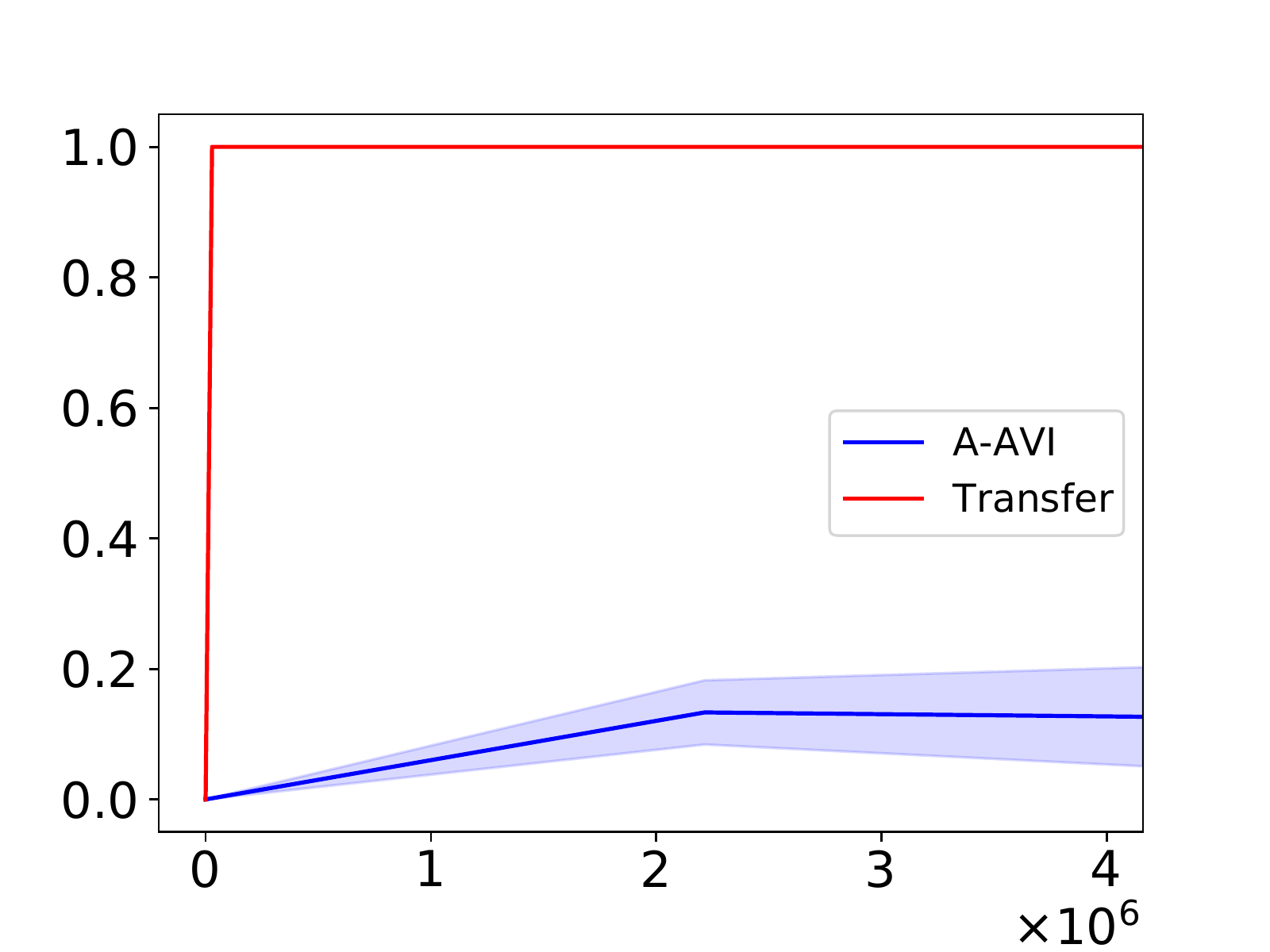}}\\
 (a) 9-Rooms-Obstacle & (b) Sample Complexity
\end{tabular}
\caption{Planning using R-AVI in 9-Rooms-Obstacle using options learned in 9-Rooms; $x$-axis is the number of samples (steps) from the environment, and $y$-axis is the probability of reaching the goal.}
\label{fig:planning}
\end{figure}

\textbf{Transferring learned options.}
An advantage of our framework is that the subgoal transitions
we learn can be reused across different tasks. 
One such task we consider is 9-Rooms-Obstacle shown in Figure~\ref{fig:planning} (a). Here, we have added an obstacle to the middle room. The high-level plan for 9-Rooms is no longer feasible since the option corresponding to the edge $4\to 7$ causes the robot to collide with the obstacle. Nonetheless, we can reuse the existing options and compute a different high-level plan. In this case, we use R-AVI to avoid re-learning the options (which is significantly more expensive in terms of sample complexity); it computes the plan $3\to1\to2\to5\to8\to9$. As shown in Figure~\ref{fig:planning} (b), the new plan achieves a high probability of reaching the goal given just a small number of samples, since we only need a few samples to estimate $\aP_\low$ and $\aR_\low$. Similarly, we were also able to compute plans for different start and goal regions.

%


\section{CONCLUSIONS}

We have proposed a hierarchical RL algorithm that constructs an abstract high-level model using user-provided subgoal regions and plans in this high-level model using abstract value iteration. Future work includes incorporating algorithms to automatically discover subgoal regions which would mitigate the need for additional information from the user as well as applying the approach to more complex benchmarks. 

\subsubsection*{Acknowledgements}

We thank the anonymous reviewers for their insightful comments. This research was partially supported by ONR award N00014-20-1-2115, as well as NSF award CCF 1910769.




\bibliography{main}

\begin{thebibliography}{}

\bibitem[Abel et~al., 2019]{abel2019state}
Abel, D., Arumugam, D., Asadi, K., Jinnai, Y., Littman, M.~L., and Wong, L.~L.
  (2019).
\newblock State abstraction as compression in apprenticeship learning.
\newblock In {\em Proceedings of the AAAI Conference on Artificial
  Intelligence}, volume~33, pages 3134--3142.

\bibitem[Abel et~al., 2020]{abel2020vpsa}
Abel, D., Umbanhowar, N., Khetarpal, K., Arumugam, D., Precup, D., and Littman,
  M.~L. (2020).
\newblock Value preserving state-action abstractions.
\newblock In {\em Proceedings of the International Conference on Artificial
  Intelligence and Statistics}.

\bibitem[Andre and Russell, 2002]{andre2002state}
Andre, D. and Russell, S.~J. (2002).
\newblock State abstraction for programmable reinforcement learning agents.
\newblock In {\em AAAI/IAAI}, pages 119--125.

\bibitem[Andrychowicz et~al., 2020]{andrychowicz2020learning}
Andrychowicz, O.~M., Baker, B., Chociej, M., Jozefowicz, R., McGrew, B.,
  Pachocki, J., Petron, A., Plappert, M., Powell, G., Ray, A., et~al. (2020).
\newblock Learning dexterous in-hand manipulation.
\newblock {\em The International Journal of Robotics Research}, 39(1):3--20.

\bibitem[Bacon et~al., 2017]{bacon2017option}
Bacon, P.-L., Harb, J., and Precup, D. (2017).
\newblock The option-critic architecture.
\newblock In {\em Thirty-First AAAI Conference on Artificial Intelligence}.

\bibitem[Burridge et~al., 1999]{burridge1999sequential}
Burridge, R.~R., Rizzi, A.~A., and Koditschek, D.~E. (1999).
\newblock Sequential composition of dynamically dexterous robot behaviors.
\newblock {\em The International Journal of Robotics Research}, 18(6):534--555.

\bibitem[Castro, 2019]{castro2019scalable}
Castro, P.~S. (2019).
\newblock Scalable methods for computing state similarity in deterministic
  markov decision processes.
\newblock {\em arXiv preprint arXiv:1911.09291}.

\bibitem[Castro and Precup, 2011]{castro2011automatic}
Castro, P.~S. and Precup, D. (2011).
\newblock Automatic construction of temporally extended actions for mdps using
  bisimulation metrics.
\newblock In {\em European Workshop on Reinforcement Learning}, pages 140--152.
  Springer.

\bibitem[Choudhury et~al., 2019]{choudhury2019dynamic}
Choudhury, S., Knickerbocker, J.~P., and Kochenderfer, M.~J. (2019).
\newblock Dynamic real-time multimodal routing with hierarchical hybrid
  planning.
\newblock In {\em 2019 IEEE Intelligent Vehicles Symposium (IV)}, pages
  2397--2404. IEEE.

\bibitem[Co-Reyes et~al., 2018]{co2018self}
Co-Reyes, J., Liu, Y., Gupta, A., Eysenbach, B., Abbeel, P., and Levine, S.
  (2018).
\newblock Self-consistent trajectory autoencoder: Hierarchical reinforcement
  learning with trajectory embeddings.
\newblock In {\em International Conference on Machine Learning}, pages
  1009--1018. PMLR.

\bibitem[Cobo et~al., 2011]{cobo2011automatic}
Cobo, L.~C., Zang, P., Isbell~Jr, C.~L., and Thomaz, A.~L. (2011).
\newblock Automatic state abstraction from demonstration.
\newblock In {\em Twenty-Second International Joint Conference on Artificial
  Intelligence}.

\bibitem[Daniel et~al., 2016]{daniel2016probabilistic}
Daniel, C., Van~Hoof, H., Peters, J., and Neumann, G. (2016).
\newblock Probabilistic inference for determining options in reinforcement
  learning.
\newblock {\em Machine Learning}, 104(2-3):337--357.

\bibitem[Dietterich, 2000]{dietterich2000state}
Dietterich, T.~G. (2000).
\newblock State abstraction in maxq hierarchical reinforcement learning.
\newblock In {\em Advances in Neural Information Processing Systems}, pages
  994--1000.

\bibitem[Eysenbach et~al., 2018]{eysenbach2018diversity}
Eysenbach, B., Gupta, A., Ibarz, J., and Levine, S. (2018).
\newblock Diversity is all you need: Learning skills without a reward function.
\newblock In {\em ICLR}.

\bibitem[Ferns et~al., 2004]{ferns2004metrics}
Ferns, N., Panangaden, P., and Precup, D. (2004).
\newblock Metrics for finite markov decision processes.
\newblock In {\em UAI}, volume~4, pages 162--169.

\bibitem[Finn et~al., 2017]{finn2017generalizing}
Finn, C., Yu, T., Fu, J., Abbeel, P., and Levine, S. (2017).
\newblock Generalizing skills with semi-supervised reinforcement learning.
\newblock In {\em ICLR}.

\bibitem[Fujimoto et~al., 2018]{fujimoto2018addressing}
Fujimoto, S., Hoof, H., and Meger, D. (2018).
\newblock Addressing function approximation error in actor-critic methods.
\newblock In {\em International Conference on Machine Learning}, pages
  1587--1596.

\bibitem[Givan et~al., 2000]{givan2000bounded}
Givan, R., Leach, S., and Dean, T. (2000).
\newblock Bounded-parameter markov decision processes.
\newblock {\em Artificial Intelligence}, 122(1-2):71--109.

\bibitem[Gopalan et~al., 2017]{gopalan2017planning}
Gopalan, N., Littman, M.~L., MacGlashan, J., Squire, S., Tellex, S., Winder,
  J., Wong, L.~L., et~al. (2017).
\newblock Planning with abstract markov decision processes.
\newblock In {\em Twenty-Seventh International Conference on Automated Planning
  and Scheduling}.

\bibitem[Guadarrama et~al., 2018]{tfagents}
Guadarrama, S., Korattikara, A., Ramirez, O., Castro, P., Holly, E., Fishman,
  S., Wang, K., Gonina, E., Wu, N., Kokiopoulou, E., Sbaiz, L., Smith, J.,
  Bartók, G., Berent, J., Harris, C., Vanhoucke, V., and Brevdo, E. (2018).
\newblock {TF-Agents}: A library for reinforcement learning in tensorflow.
\newblock \url{https://github.com/tensorflow/agents}.
\newblock [Online; accessed 25-June-2019].

\bibitem[Hausman et~al., 2018]{hausman2018learning}
Hausman, K., Springenberg, J.~T., Wang, Z., Heess, N., and Riedmiller, M.
  (2018).
\newblock Learning an embedding space for transferable robot skills.
\newblock In {\em ICLR}.

\bibitem[Jong and Stone, 2005]{jong2005state}
Jong, N.~K. and Stone, P. (2005).
\newblock State abstraction discovery from irrelevant state variables.
\newblock In {\em IJCAI}, volume~8, pages 752--757.

\bibitem[Jonsson and Barto, 2001]{jonsson2001automated}
Jonsson, A. and Barto, A.~G. (2001).
\newblock Automated state abstraction for options using the u-tree algorithm.
\newblock In {\em Advances in neural information processing systems}, pages
  1054--1060.

\bibitem[Jothimurugan et~al., 2019]{jothimurugan2019composable}
Jothimurugan, K., Alur, R., and Bastani, O. (2019).
\newblock A composable specification language for reinforcement learning tasks.
\newblock In {\em Advances in Neural Information Processing Systems}, pages
  13021--13030.

\bibitem[Khan et~al., 2019]{khan2019graph}
Khan, A., Tolstaya, E., Ribeiro, A., and Kumar, V. (2019).
\newblock Graph policy gradients for large scale robot control.
\newblock In {\em CoRL}.

\bibitem[Konidaris and Barto, 2007]{konidaris2007building}
Konidaris, G. and Barto, A.~G. (2007).
\newblock Building portable options: Skill transfer in reinforcement learning.
\newblock In {\em IJCAI}, volume~7, pages 895--900.

\bibitem[Konidaris and Barto, 2009]{konidaris2009skill}
Konidaris, G. and Barto, A.~G. (2009).
\newblock Skill discovery in continuous reinforcement learning domains using
  skill chaining.
\newblock In {\em Advances in neural information processing systems}, pages
  1015--1023.

\bibitem[Konidaris et~al., 2014]{konidaris2014constructing}
Konidaris, G., Kaelbling, L., and Lozano-Perez, T. (2014).
\newblock Constructing symbolic representations for high-level planning.
\newblock In {\em Twenty-Eighth AAAI Conference on Artificial Intelligence}.

\bibitem[Kulkarni et~al., 2016]{kulkarni2016hierarchical}
Kulkarni, T.~D., Narasimhan, K., Saeedi, A., and Tenenbaum, J. (2016).
\newblock Hierarchical deep reinforcement learning: Integrating temporal
  abstraction and intrinsic motivation.
\newblock In {\em Advances in neural information processing systems}, pages
  3675--3683.

\bibitem[LaValle, 2006]{lavalle2006planning}
LaValle, S.~M. (2006).
\newblock {\em Planning algorithms}.
\newblock Cambridge university press.

\bibitem[Levine et~al., 2016]{levine2016end}
Levine, S., Finn, C., Darrell, T., and Abbeel, P. (2016).
\newblock End-to-end training of deep visuomotor policies.
\newblock {\em The Journal of Machine Learning Research}, 17(1):1334--1373.

\bibitem[Li et~al., 2006]{li2006towards}
Li, L., Walsh, T.~J., and Littman, M.~L. (2006).
\newblock Towards a unified theory of state abstraction for mdps.
\newblock In {\em ISAIM}.

\bibitem[Machado et~al., 2017]{machado2017laplacian}
Machado, M.~C., Bellemare, M.~G., and Bowling, M. (2017).
\newblock A laplacian framework for option discovery in reinforcement learning.
\newblock In {\em Proceedings of the 34th International Conference on Machine
  Learning-Volume 70}, pages 2295--2304. JMLR. org.

\bibitem[Majumdar and Tedrake, 2017]{majumdar2017funnel}
Majumdar, A. and Tedrake, R. (2017).
\newblock Funnel libraries for real-time robust feedback motion planning.
\newblock {\em The International Journal of Robotics Research}, 36(8):947--982.

\bibitem[Mania et~al., 2018]{mania2018simple}
Mania, H., Guy, A., and Recht, B. (2018).
\newblock Simple random search of static linear policies is competitive for
  reinforcement learning.
\newblock In {\em Advances in Neural Information Processing Systems}, pages
  1800--1809.

\bibitem[Nachum et~al., 2019]{nachum2019near}
Nachum, O., Gu, S., Lee, H., and Levine, S. (2019).
\newblock Near-optimal representation learning for hierarchical reinforcement
  learning.
\newblock In {\em ICLR}.

\bibitem[Nachum et~al., 2018]{nachum2018data}
Nachum, O., Gu, S.~S., Lee, H., and Levine, S. (2018).
\newblock Data-efficient hierarchical reinforcement learning.
\newblock In {\em Advances in Neural Information Processing Systems}, pages
  3303--3313.

\bibitem[Precup et~al., 1998]{precup1998theoretical}
Precup, D., Sutton, R.~S., and Singh, S. (1998).
\newblock Theoretical results on reinforcement learning with temporally
  abstract options.
\newblock In {\em European conference on machine learning}, pages 382--393.
  Springer.

\bibitem[Roderick et~al., 2018]{roderick2017deep}
Roderick, M., Grimm, C., and Tellex, S. (2018).
\newblock Deep abstract q-networks.
\newblock In {\em AAMAS}.

\bibitem[Ross et~al., 2011]{ross2011reduction}
Ross, S., Gordon, G., and Bagnell, D. (2011).
\newblock A reduction of imitation learning and structured prediction to
  no-regret online learning.
\newblock In {\em Proceedings of the fourteenth international conference on
  artificial intelligence and statistics}, pages 627--635.

\bibitem[Stolle and Precup, 2002]{stolle2002learning}
Stolle, M. and Precup, D. (2002).
\newblock Learning options in reinforcement learning.
\newblock In {\em International Symposium on abstraction, reformulation, and
  approximation}, pages 212--223. Springer.

\bibitem[Sun et~al., 2019]{sun2019program}
Sun, S.-H., Wu, T.-L., and Lim, J.~J. (2019).
\newblock Program guided agent.
\newblock In {\em International Conference on Learning Representations}.

\bibitem[Sutton et~al., 1999]{sutton1999between}
Sutton, R.~S., Precup, D., and Singh, S. (1999).
\newblock Between mdps and semi-mdps: A framework for temporal abstraction in
  reinforcement learning.
\newblock {\em Artificial intelligence}, 112(1-2):181--211.

\bibitem[Ta{\"\i}ga et~al., 2018]{taiga2018approximate}
Ta{\"\i}ga, A.~A., Courville, A., and Bellemare, M.~G. (2018).
\newblock Approximate exploration through state abstraction.
\newblock {\em arXiv preprint arXiv:1808.09819}.

\bibitem[Taylor et~al., 2009]{taylor2009bounding}
Taylor, J., Precup, D., and Panagaden, P. (2009).
\newblock Bounding performance loss in approximate mdp homomorphisms.
\newblock In {\em Advances in Neural Information Processing Systems}, pages
  1649--1656.

\bibitem[Theocharous and Kaelbling, 2004]{theocharous2004approximate}
Theocharous, G. and Kaelbling, L.~P. (2004).
\newblock Approximate planning in pomdps with macro-actions.
\newblock In {\em Advances in Neural Information Processing Systems}, pages
  775--782.

\bibitem[Theocharous et~al., 2005]{theocharous2005spatial}
Theocharous, G., Mahadevan, S., and Kaelbling, L.~P. (2005).
\newblock Spatial and temporal abstractions in pomdps applied to robot
  navigation.
\newblock Technical report, MIT.

\bibitem[Tiwari and Thomas, 2019]{tiwari2019natural}
Tiwari, S. and Thomas, P.~S. (2019).
\newblock Natural option critic.
\newblock In {\em Proceedings of the AAAI Conference on Artificial
  Intelligence}, volume~33, pages 5175--5182.

\bibitem[Todorov et~al., 2012]{todorov2012mujoco}
Todorov, E., Erez, T., and Tassa, Y. (2012).
\newblock Mujoco: A physics engine for model-based control.
\newblock In {\em 2012 IEEE/RSJ International Conference on Intelligent Robots
  and Systems}, pages 5026--5033. IEEE.

\bibitem[Walsh et~al., 2006]{walsh2006transferring}
Walsh, T.~J., Li, L., and Littman, M.~L. (2006).
\newblock Transferring state abstractions between mdps.
\newblock In {\em ICML Workshop on Structural Knowledge Transfer for Machine
  Learning}.

\bibitem[Winder et~al., 2020]{winder2020planning}
Winder, J., Milani, S., Landen, M., Oh, E., Parr, S., Squire, S., desJardins,
  M., and Matuszek, C. (2020).
\newblock Planning with abstract learned models while learning transferable
  subtasks.
\newblock In {\em Proceedings of the AAAI Conference on Artificial
  Intelligence}, volume~34.

\end{thebibliography}
\bibliographystyle{apalike}

\appendix
\onecolumn
\section{Proofs of Theorems}
We first establish some notation. Let $\V = \{V: \S\to\R_{\geq 0}\mid V\ \text{is Lebesgue-measurable and bounded}\}$ denote the set of all concrete value functions and $\tilde{\V} = \{\aV:\aS\to\R_{\geq 0}\}$ denote the set of all abstract value functions. Given $V\in\V$, we denote by $\lVert V\rVert_\infty$, the $\ell_\infty$-norm of $V$ given by $\lVert V\rVert_\infty=\sup_{s\in \S}|V(s)|$ and similarly for $\aV\in\tilde{\V}$, $\lVert \aV\rVert_\infty = \max_{\as\in\aS}|\aV(\as)|$. We use $\F$ to denote the transformation on $\V$ corresponding to (concrete) option value iteration using the set of options $\O$. More precisely, for any $s\in \S$, 
\begin{align*}
    \F(V)(s) &= \operatorname*{\max}_{o\in \O}Q(V, s, o),\\
    Q(V, s, o) &= R_{\opt}(s,o)+\int_{\S}T_{\opt}(s,o,s'){V}(s')ds'.
\end{align*}
We know that $\F$ is a contraction on $\V$ (with respect to the $\ell_{\infty}$-norm on $\V$) and hence $\lim_{n\to \infty}\F^n(V)(s) = V_{\O}^*(s)$ for all $s\in\S$ and any initial value function $V\in\V$. Also, for any option policy $\rho:\S\to\O$ we define the corresponding value function $V^{\rho}$ given by $V^{\rho}(s) = \lim_{n\to\infty}\F^n_{\rho}(V)(s)$ where $V\in\V$ is any initial value function and $\F_{\rho}$ is given by
\begin{align*}
\F_{\rho}(V)(s) = Q(V, s, \rho(s)).
\end{align*}
Similarly, for $z\in\{\low, \up\}$, let $\aF_z:\tilde{\V}\to\tilde{\V}$ denote the transformation corresponding to abstract value iteration---i.e., for any $\as \in \aS$,
\begin{align*}
\aF_z(\aV)(\as) &= \operatorname*{\max}_{o\in \O}\aQ_z(\aV,\as, o),\\
\aQ_z(\aV,\as, o) &= \aR_z(\as,o)+\sum_{\as'\in\aS}\aP_z(\as,o,\as')\cdot\aV(\as').
\end{align*}

\subsection{Proof of Theorem~\ref{thm:valueguarantee}}
\label{sec:thmvalueguaranteeproof}
We first prove some useful lemmas.
\begin{lemma}\label{lem:maxbound}
For any finite set $\B$ and two functions $f_1, f_2: \B \to \R$, if for all $b \in \B$, $|f_1(b) - f_2(b)| \leq \delta$ then $|{\max}_{b\in \B}f_1(b) - {\max}_{b\in \B}f_2(b)| \leq \delta$.
\end{lemma}
\begin{proof}
Let $b_1 = {\arg\max}_{b\in \B}f_1(b)$ and $b_2 = {\arg\max}_{b\in \B}f_2(b)$. We need to show that $|f_1(b_1) - f_2(b_2)| \leq \delta$. For the sake of contradiction, suppose $|f_1(b_1) - f_2(b_2)| > \delta$. Then either $f_1(b_1) > f_2(b_2) + \delta$ or $f_2(b_2) > f_1(b_1) + \delta$. Without loss of generality, let us assume $f_1(b_1) > f_2(b_2) + \delta$. Then 
$f_1(b_1) > f_2(b_1) + \delta$
which implies
$|f_1(b_1) - f_2(b_1)| > \delta$,
which is a contradiction.
\end{proof}

\begin{lemma}\label{lem:p_inf_bound}
Given any $\as\in\aS$ and $o\in\O$, 
\begin{align*}
\sum_{\as'\in\aS}\aP_{\low}(\as,o,\as') \leq \gamma.
\end{align*}
\end{lemma}
\begin{proof}
Fix any $s \in \as$. Then,
\begin{align*}
\sum_{\as'\in\aS}\aP_{\low}(\as,o,\as') &\leq \sum_{\as'\in\aS}\aP(s,o,\as') \\
&= \sum_{\as'\in\aS}\sum_{t=1}^\infty \gamma^t P(\as', t\mid s,o)\\
&\leq \gamma\sum_{\as'\in\aS}\sum_{t=1}^\infty P(\as', t\mid s,o)\\
&\leq \gamma
\end{align*}
where the last inequality followed from the fact that the subgoal regions are disjoint.
\end{proof}

\begin{lemma}\label{lem:probsumbound}
For $z \in \{\low, \up\}$, $$\displaystyle\sum_{\as\in\aS}\aP_{z}(\as, o,\as') \leq \gamma+|\aS|\ep_T.$$
\end{lemma}
\begin{proof}
The lemma follows from Lemma~\ref{lem:p_inf_bound} and the definition of $\ep_T$.
\end{proof}


\subsubsection{Proof of Convergence} We prove that R-AVI converges by showing that abstract value iteration is defined by a contraction mapping.
Consider, for any $\as\in\aS$, $o\in\O$, $\aV_1,\aV_2 \in\tilde{\V}$ and $z\in\{\low, \up\}$,
\begin{align*}
\big|\aQ_z(\aV_1, \as, o) - \aQ_z(\aV_2, \as, o)\big| &= \Bigg|\sum_{\as'\in\aS}\aP_z(\as,o,\as')\cdot\aV_1(\as') - \sum_{\as'\in\aS}\aP_z(\as,o,\as')\cdot\aV_2(\as')\Bigg|\\
&= \Bigg|\sum_{\as'\in\aS}\aP_z(\as,o,\as')\cdot\Big(\aV_1(\as') - \aV_2(\as')\Big)\Bigg|\\
&\leq \sum_{\as'\in\aS}\aP_z(\as,o,\as')\cdot\Big|\aV_1(\as') - \aV_2(\as')\Big|\\
&\leq \lVert \aV_1-\aV_2\rVert_\infty\sum_{\as'\in\aS}\aP_z(\as,o,\as')\\
&\leq (\gamma+|\aS|\ep_T)\lVert \aV_1-\aV_2\rVert_\infty.
\end{align*}
where the last inequality followed from Lemma~\ref{lem:probsumbound}. Using Lemma~\ref{lem:maxbound} we have
\begin{align*}
|\aF_z(\aV_1)(\as)-\aF_z(\aV_2)(\as)|
&= \Big|\operatorname*{\max}_{o\in\O}\aQ_z(\aV_1,\as, o) - \operatorname*{\max}_{o\in\O}\aQ_z(\aV_2,\as, o)\Big|\\
&\leq (\gamma+|\aS|\ep_T)\lVert \aV_1-\aV_2\rVert_\infty.
\end{align*}
If $\gamma + |\aS|\ep_T < 1$, $\aF_z$ is a contraction mapping and hence abstract value iteration is guaranteed to converge.
$\hfill\qed$
\subsubsection{Proof of Performance Bound}
We show the performance bound using the following lemmas. First, we show that the upper and lower values obtained from abstract value iteration bound the value function of the best option policy $\rho^*$ for the set of options $\O$.
\begin{lemma}
\label{lem:valuebound}
Under Assumption~\ref{assump:eps}, for all $\as\in\aS$ and $s\in\as$, we have
\begin{align*}
    \aV_{\low}^*(\as)\le V_{\O}^{*}(s)\le\aV_{\up}^*(\as).
\end{align*}
\end{lemma}
\begin{proof}
We will prove the upper bound. The lower bound follows by a similar argument. Let $V \in \V$ and $\aV\in\tilde{\V}$ be such that for all $\as\in\aS$ and $s\in\as$, $V(s)\leq \aV(\as)$. Suppose $\as\in\aS$ and $s\in\as$. Since for any $o\in\O$, $\int_{\S}T_\opt(s,o,s')\mathds{1}(s'\in \S\setminus\bar{S})ds' = 0$, we have
\begin{align*}
\F(V)(s)
&= \operatorname*{\max}_{o\in \O}\Big(R_{\opt}(s,o)+\int_{\S}T_\opt(s,o,s'){V}(s')ds'\Big)\\
&= \operatorname*{\max}_{o\in \O}\Big(R_{\opt}(s,o)+\int_{\bar{S}}T_\opt(s,o,s'){V}(s')ds'\Big)\\
&= \operatorname*{\max}_{o\in \O}\Big(R_{\opt}(s,o)+\sum_{\as'\in\aS}\int_{\as'}T_\opt(s,o,s'){V}(s')ds'\Big)\\
&\leq \operatorname*{\max}_{o\in \O}\Big(\aR_{\up}(\as,o)+\sum_{\as'\in\aS}\aV(\as')\int_{\as'}T_\opt(s,o,s')ds'\Big)\\
&= \operatorname*{\max}_{o\in \O}\Big(\aR_\up(\as,o)+\sum_{\as'\in\aS}{\aP}(s,o,\as')\cdot\aV(\as')\Big)\\
&\leq \operatorname*{\max}_{o\in \O}\Big(\aR_\up(\as,o)+\sum_{\as'\in\aS}{\aP_\up}(\as,o,\as')\cdot\aV(\as')\Big)\\
&= \aF_\up(\aV)(\as).
\end{align*}
By induction on $n$, it follows that $\F^n(V)(s) \leq \aF^n_\up(\aV)(\as)$ for all $n\geq 1$. Therefore if $V_0$ and $\aV_0$ assign zero to all states and subgoal regions, respectively, we have
\begin{align*}
V_{\O}^*(s) = \lim_{n\to\infty}\F^n(V_0)(s) \leq \lim_{n\to\infty}\aF^n_\up(\aV_0)(\as) = \aV^*_\up(\as).
\end{align*}
The claim follows.
\end{proof}

Next, we bound the gap in the upper and lower value functions as a function of the gaps $\ep_T$ and $\ep_R$.
\begin{lemma}
\label{lem:valuegap}
Under Assumption~\ref{assump:eps}, for all $\as\in\aS$, we have
\begin{align*}\aV_{\up}^*(\as)-\aV_{\low}^*(\as) \leq \frac{(1-\gamma)\ep_R + |\aS|\ep_T}{(1-\gamma)(1-(\gamma + |\aS|\ep_T))}.
\end{align*}
\end{lemma}
\begin{proof}
Let $\aV_1,\aV_2 \in \tilde{\V}$ be abstract value functions such that $\aV_2(\as) \leq \min\{(1-\gamma)^{-1}, \aV_1(\as)\}$ for all $\as\in\aS$. Then, for any $\as\in\aS$ and $o\in\O$,
\begin{align*}
\aQ_\up(\aV_1, &\as, o) - \aQ_\low(\aV_2, \as, o)\\
&=\Big(\aR_\up(\as,o) - \aR_\low(\as,o)\Big) + \Big(\sum_{\as'\in\aS}\aP_\up(\as,o,\as')\cdot\aV_1(\as') - \sum_{\as'\in\aS}\aP_\low(\as,o,\as')\cdot\aV_2(\as')\Big)\\
&\leq \ep_R + \Big(\sum_{\as'\in\aS}\aP_\up(\as,o,\as')\cdot\aV_1(\as') - \sum_{\as'\in\aS}\big(\aP_\up(\as,o,\as') - \ep_T\big)\cdot\aV_2(\as')\Big)\\
&\leq \ep_R + \sum_{\as'\in\aS}\aP_\up(\as,o,\as')\cdot\big(\aV_1(\as')-\aV_2(\as')\big) + \frac{|\aS|\ep_T}{1-\gamma}\\
&\leq \ep_R + \lVert\aV_1-\aV_2\rVert_\infty\sum_{\as'\in\aS}\aP_\up(\as,o,\as') + \frac{|\aS|\ep_T}{1-\gamma}\\
&\leq \ep_R + (\gamma+|\aS|\ep_T)\lVert\aV_1-\aV_2\rVert_\infty + \frac{|\aS|\ep_T}{1-\gamma}.
\end{align*}
Now, using Lemma~\ref{lem:maxbound} we have \begin{align*}|\aF_\up(\aV_1)(\as) - \aF_\low(\aV_2)(\as)| \leq \ep_R + (\gamma+|\aS|\ep_T)\lVert\aV_1-\aV_2\rVert_\infty + \frac{|\aS|\ep_T}{1-\gamma}.
\end{align*}
If we define $\aV_0$ to be the zero vector, we can show by induction on $n$ that, for all $\as\in\aS$ and $n\geq 0$, $\aF_\low^n(\aV_0)(\as) \leq \min\{(1-\gamma)^{-1},\aF_\up^n(\aV_0)(\as)\}$ since the rewards in the underlying MDP are bounded above by $1$. Hence, another induction on $n$ gives us, for all $\as\in\aS$ and $n\geq 0$,
\begin{align*}
\aF^n_\up(\aV_0)(\as) - \aF^n_\low(\aV_0)(\as) 
\leq \Big(\ep_R + \frac{|\aS|\ep_T}{1-\gamma}\Big)\sum_{k=0}^n(\gamma+|\aS|\ep_T)^k.
\end{align*}
Taking limit $n\to\infty$ on both sides gives us the required bound.
\end{proof}

Now, we prove the following lemma.
\begin{lemma}
\label{lem:optimalpolicylowerbound}
For any $\as\in\aS$ and $s\in\as$ we have
\begin{align*}
V^{{\tilde{\rho}}}(s)\geq \aV_\low^*(\as),
\end{align*}
where $\tilde{\rho}$ is the conservative optimal option policy.
\end{lemma}
\begin{proof}
Let $V \in \V$ be such that for all $\as\in\aS$ and $s\in\as$, $V(s)\geq \aV_\low^*(\as)$. Given $\as\in\aS$ and $s\in\as$, we have
\begin{align*}
\F_{\tilde{\rho}}(V)(s) &= R_{\opt}(s,\tilde{\rho}(s))+\int_{\S}T_{\opt}(s,\tilde{\rho}(s),s'){V}(s')ds'\\
&\geq \aR_\low(\as, \tilde{\rho}(s)) + \sum_{\as'\in\aS}\aV_\low^*(\as')\int_{\as'}T_\opt(s, \tilde{\rho}(s),s')ds'\\
&\geq \aR_\low(\as, \tilde{\rho}(s)) + \sum_{\as'\in\aS}\aP(s, \tilde{\rho}(s),\as')\cdot\aV_\low^*(\as')\\
&\geq \aR_\low(\as, \tilde{\rho}(s)) + \sum_{\as'\in\aS}\aP_\low(\as, \tilde{\rho}(s),\as')\cdot\aV_\low^*(\as')\\
&= \aQ_\low^*(\as, \tilde{\rho}(s))\\
&= \operatorname*{\max}_{o\in\O}\aQ_\low^*(\as,o)\\
&= \aV_\low^*(\as)
\end{align*}
where the first inequality followed from the fact that $\int_{\S}T_\opt(s,o,s')\mathds{1}(s'\in \S\setminus\bar{S})ds' = 0$. Now let $V_0\in\V$ be a value function such that $V_0(s) = \aV_\low^*(\as)$ for all $\as\in\aS$ and $s\in\as$. Then we can show by induction on $n$ that, for all $\as\in\aS$, $s\in\as$ and $n\geq 0$, $\F_{\tilde{\rho}}^n(V_0)(s) \geq \aV_\low^*(\as)$ and therefore
\begin{align*}
V^{{\tilde{\rho}}}(s) = \lim_{n\to\infty}\F^n_{\tilde{\rho}}(V_0)(s)
\geq \aV^*_\low(\as).
\end{align*}
The claim follows.
\end{proof}

We are now ready to prove the performance bound in Theorem~\ref{thm:valueguarantee}. For any $\as\in\aS$ and $s\in\as$, we have 
\begin{align*}
V^{{\tilde{\rho}}}(s) &\geq \aV_\low^*(\as)\\
&= \aV_\up^*(\as) - (\aV_\up^*(\as)-\aV_\low^*(\as))\\
&\geq V^{*}_\O(s) - (\aV_\up^*(\as)-\aV_\low^*(\as))
\end{align*}
where the first inequality followed from Lemma~\ref{lem:optimalpolicylowerbound} and the second inequality followed from Lemma~\ref{lem:valuebound}. Taking expectation w.r.t. the initial state distribution $\eta_0$ and applying Lemma~\ref{lem:valuegap} gives us the required claim. $\hfill\qed$

\subsection{Proof of Theorem~\ref{thm:valueguaranteeglobal}}

Note that this theorem relies on additional assumptions, namely, Assumptions~\ref{assump:deterministic} and \ref{assump:bottleneck}. We first show the following lemma.\footnote{Note that $\aV_\up^*$ is an upper bound on the value function; it may exceed the optimal value.}
\begin{lemma}
\label{lem:globalupperbound}
For all $s_0\in\as_0$, $V^*(s_0) \leq \aV_\up^*(\as_0)$.
\end{lemma}
\begin{proof}
Let $\pi^*$ be the optimal policy. Given an $s_0\in\as_0$, let $s_0,s_1,\ldots$ be the sequence of states visited when following $\pi^*$ starting at $s_0$. If the goal region is not visited, then $V^*(s_0) = 0$ and the lemma holds. Otherwise, let $t$ be the first time when $s_t\in\as_g$. Then $V^*(s_0) = \gamma^{t-1}$ and there is a subsequence of indices, $0=i_0,\ldots,i_k=t$ and a sequence of subgoal region $\as_0,\ldots,\as_k$ such that for all $0\leq j\leq k$, $s_{i_{j}}\in\as_j$ and for $j < k$, there is an option $o_j = (\pi(\as_j, \as_{j+1}), \as_j, \beta) \in \O^*$. Let $o_j^*$ denote the modified option $(\pi^*, \as_j, \beta)$ where the policy $\pi(\as_j, \as_{j+1})$ is replaced with $\pi^*$. For every $0\leq j < k$,
\begin{align*}
\gamma^{i_{j+1}-i_j} &= \aP(s_{i_j},o_j^*,\as_{j+1})\\
&\leq \aP_\up(\as_j,o_j^*,\as_{j+1})\\
&\leq \operatorname*{\max}_{\pi}\aP_\up(\as_j,(\pi, \as_j,\beta),\as_{j+1})\\
&= \aP_\up(\as_j, o_j,\as_{j+1}).
\end{align*}
Since all states in $\as_g$ are sink states, $\aV^*_\up(\as_g) = 0$. Furthermore, for any $s\in\bar{S}\setminus\as_g$ and any subgoal transition $o$, $R_\opt(s, o) = \gamma^{-1}\aP(s,o,\as_g)$ and hence
\begin{align*}
\aR_\up(\as_{k-1}, o_{k-1})&=\sup_{s\in\as_{k-1}}R_\opt(s,o_{k-1})\\ &= \sup_{s\in\as_{k-1}}\gamma^{-1}\aP(s,o_{k-1},\as_g)\\
&=\gamma^{-1}\aP_\up(\as_{k-1},o_{k-1},\as_g)\\
&\geq \gamma^{t-i_{k-1}-1}.
\end{align*} 
Since $\aR_\up(\as_j, o_j)\geq 0$ for all $0\leq j < k$, using the definition of $\aV^*_{\up}$ and induction on $k-j$ we can show that for all $0\leq j < k$,
\begin{align*}
\aV^*_\up(\as_j)\geq \aR_\up(\as_{k-1}, o_{k-1})\prod_{q=j}^{k-2}\aP_\up(\as_{q}, o_{q}, \as_{q+1})\geq \gamma^{t-i_j-1}
\end{align*}
Therefore, $\aV^*_\up(\as_0)\geq \gamma^{t-1} = V^*(s_0)$.
\end{proof}

We are now ready to prove Theorem~\ref{thm:valueguaranteeglobal}. We have
\begin{align*}
    J(\pi^*) - J(\pi_{\tilde{\rho}}) &= \E_{s_0\sim\eta_0}[V^*(s_0) - V^{{\tilde{\rho}}}(s_0)]\\
    &\leq \E_{s_0\sim\eta_0}[\aV_\up^*(\as_0) - \aV_\low^*(\as_0)]\\
    &\leq \frac{(1-\gamma)\ep_R + |\aS|\ep_T}{(1-\gamma)(1-(\gamma + |\aS|\ep_T))},
\end{align*}
where the first inequality followed from Lemmas~\ref{lem:globalupperbound} \&~\ref{lem:optimalpolicylowerbound}, and the second inequality followed from Lemma~\ref{lem:valuegap}\footnote{Although we assumed that $T(s,a,s') = p(s'\mid s,a)$ defines a probability density function, it is easy to see that lemmas hold true for the deterministic case as well.}.$\hfill\qed$

\section{Experimental Details}
\label{sec:expappendix}

\begin{figure}[t]
\centering
\begin{tabular}{ccc}
    \vcentered{\includegraphics[width=0.20\linewidth]{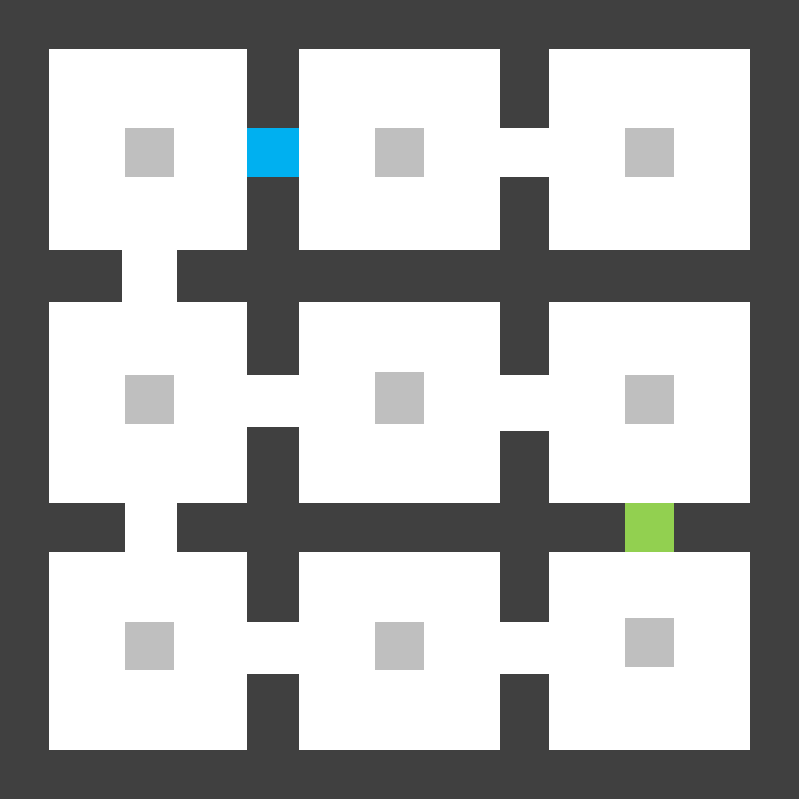}}\qquad \qquad
    &
    \vcentered{\includegraphics[width=0.32\linewidth]{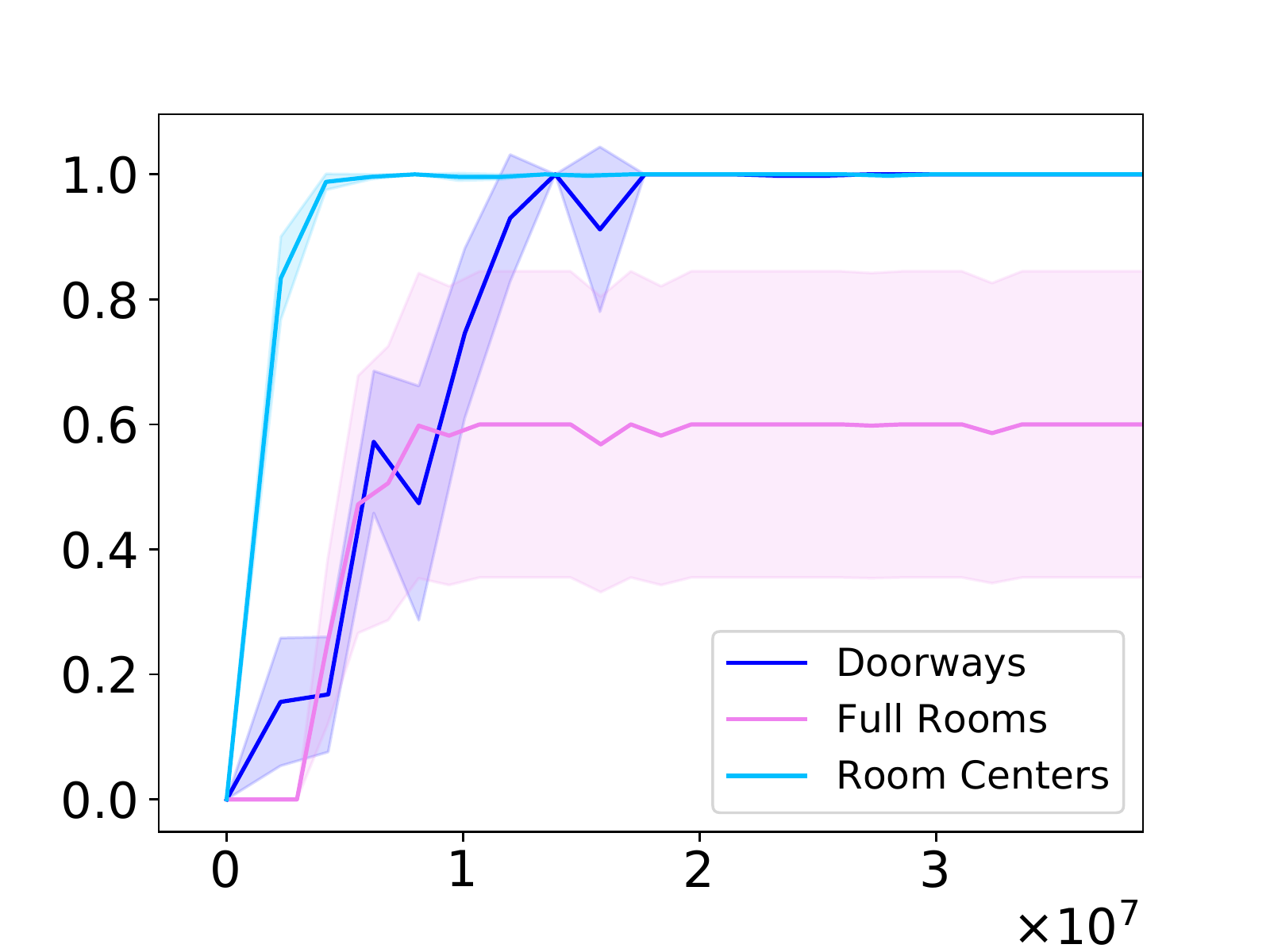}} &
    \vcentered{\includegraphics[width=0.32\linewidth]{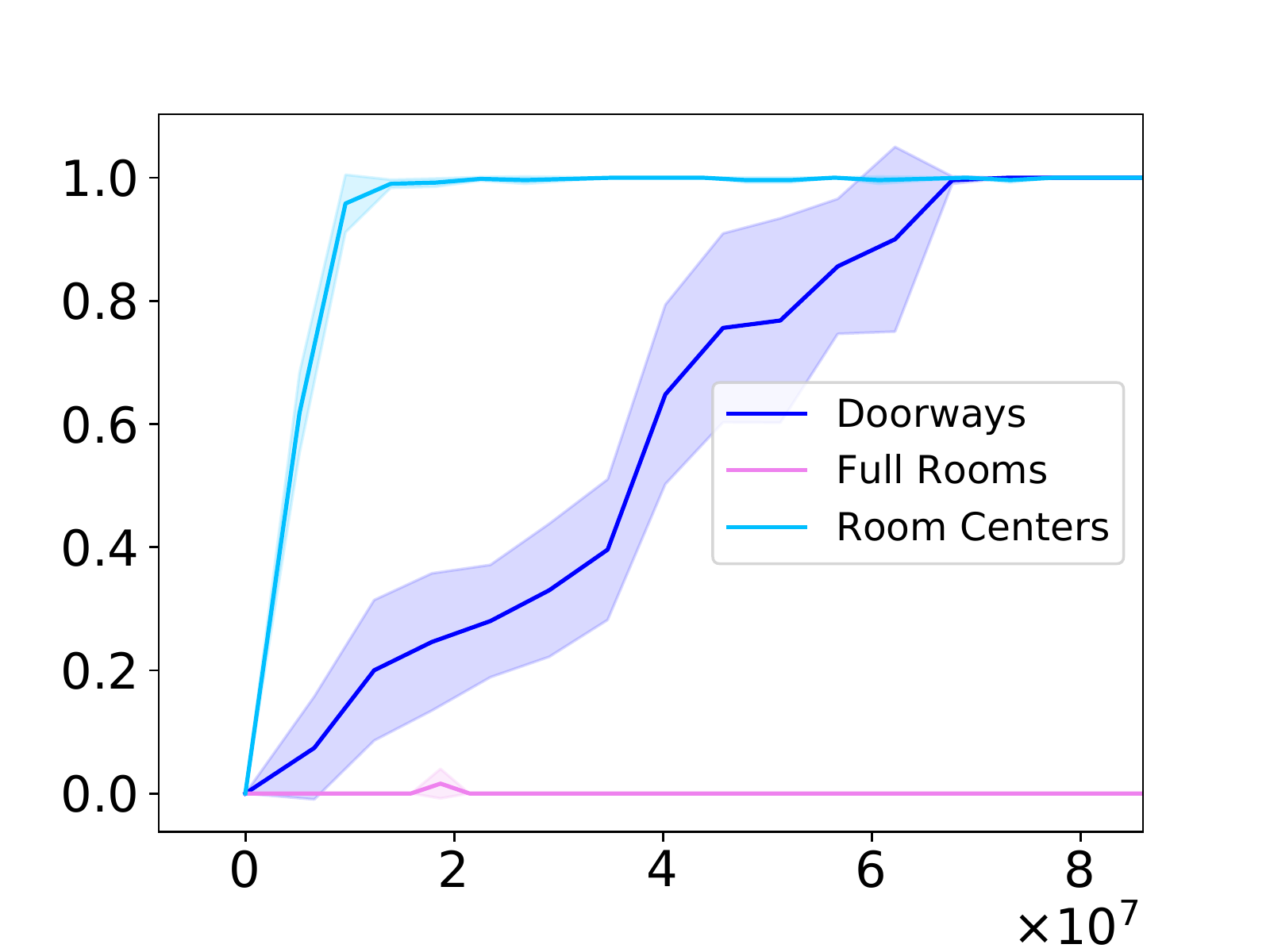}}\\
     (a) Room Centers\qquad \qquad & (b) 9-Rooms & (c) 16-Rooms
\end{tabular}
\caption{Visualization of room centers as subgoal regions (in gray) and comparison of subgoal regions for room environments;
$x$-axis is number of samples (steps) from the environment, and $y$-axis is probability of reaching the goal. Results are averaged over 10 executions.}
\label{fig:abstract_states_prob}
\end{figure}

\textbf{Additional Figures.} Subgoal regions given by ``room centers" in the 9-Rooms environment are visualized in Figure~\ref{fig:abstract_states_prob} (a). The learning curves for different choices of subgoal regions for the room environments are shown in Figure~\ref{fig:abstract_states_prob} (b,c) where we plot the probability of reaching the goal as a function of the number of steps taken in the environment; in contrast, the cumulative reward plotted in Figure~\ref{fig:abstract_states} measures not only the probability of reaching the goal but also the time to reach the goal. In particular, ``room centers" can also be used to learn a policy that reaches the goal with an estimated probability of 1, although they do not satisfy the bottleneck assumption. Thus, this choice of subgoal regions only reduces the time to reach the goal, not the probability of reaching the goal.

\begin{figure}[t]
\centering
\begin{tabular}{ccc}
    \vcentered{\includegraphics[width=0.22\linewidth]{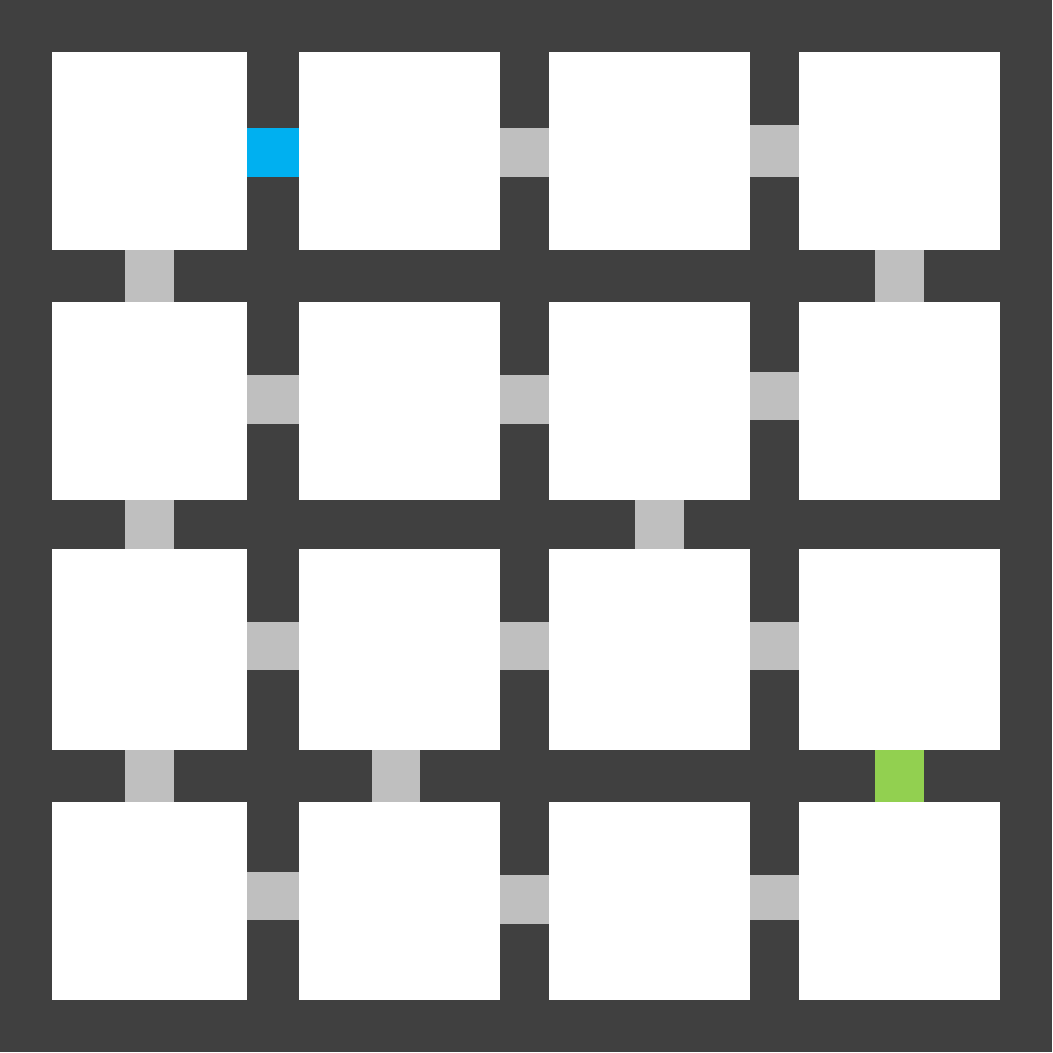}}\qquad \qquad &
    \vcentered{\includegraphics[width=0.32\linewidth]{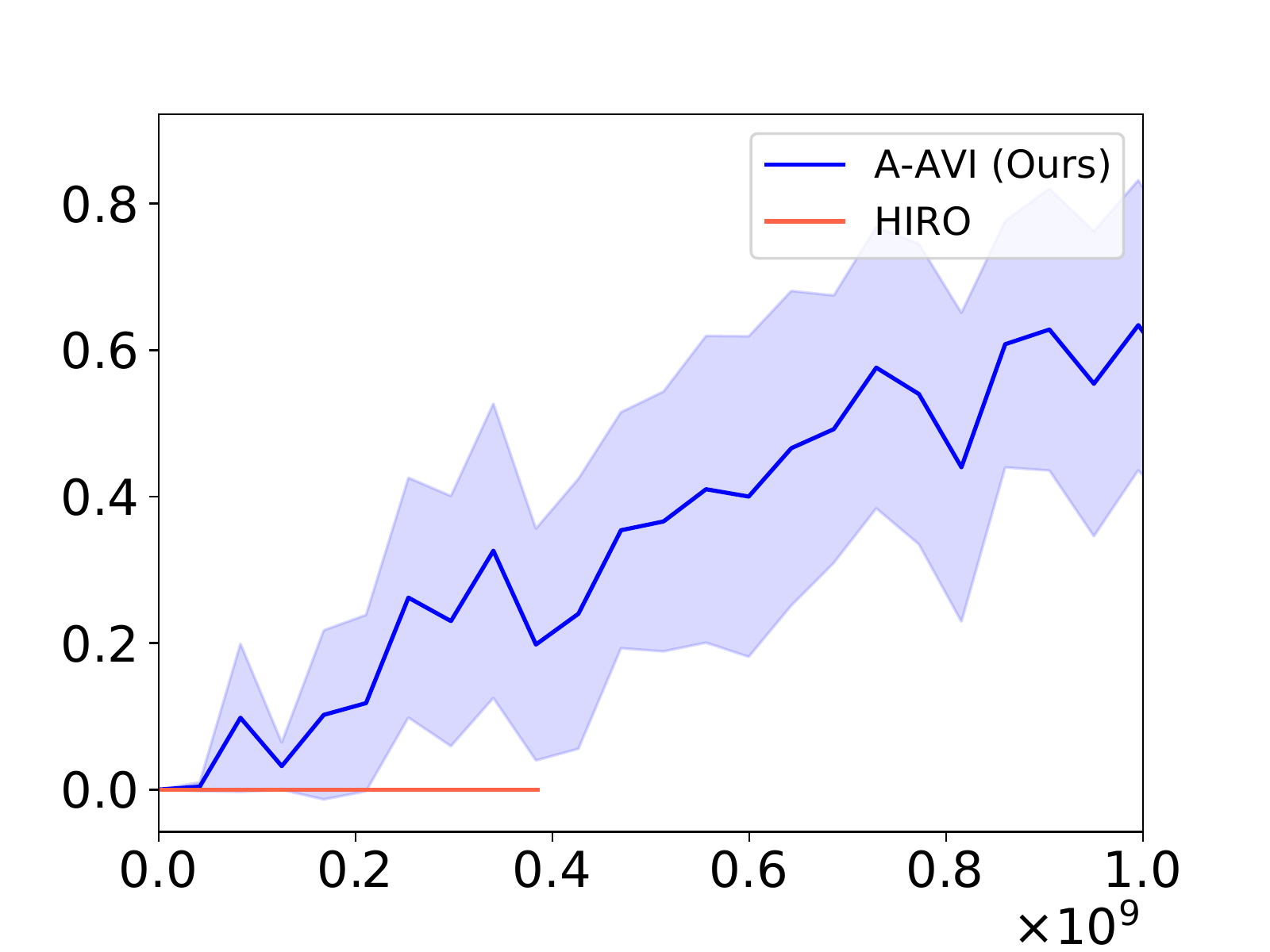}} & 
    \vcentered{\includegraphics[width=0.32\linewidth]{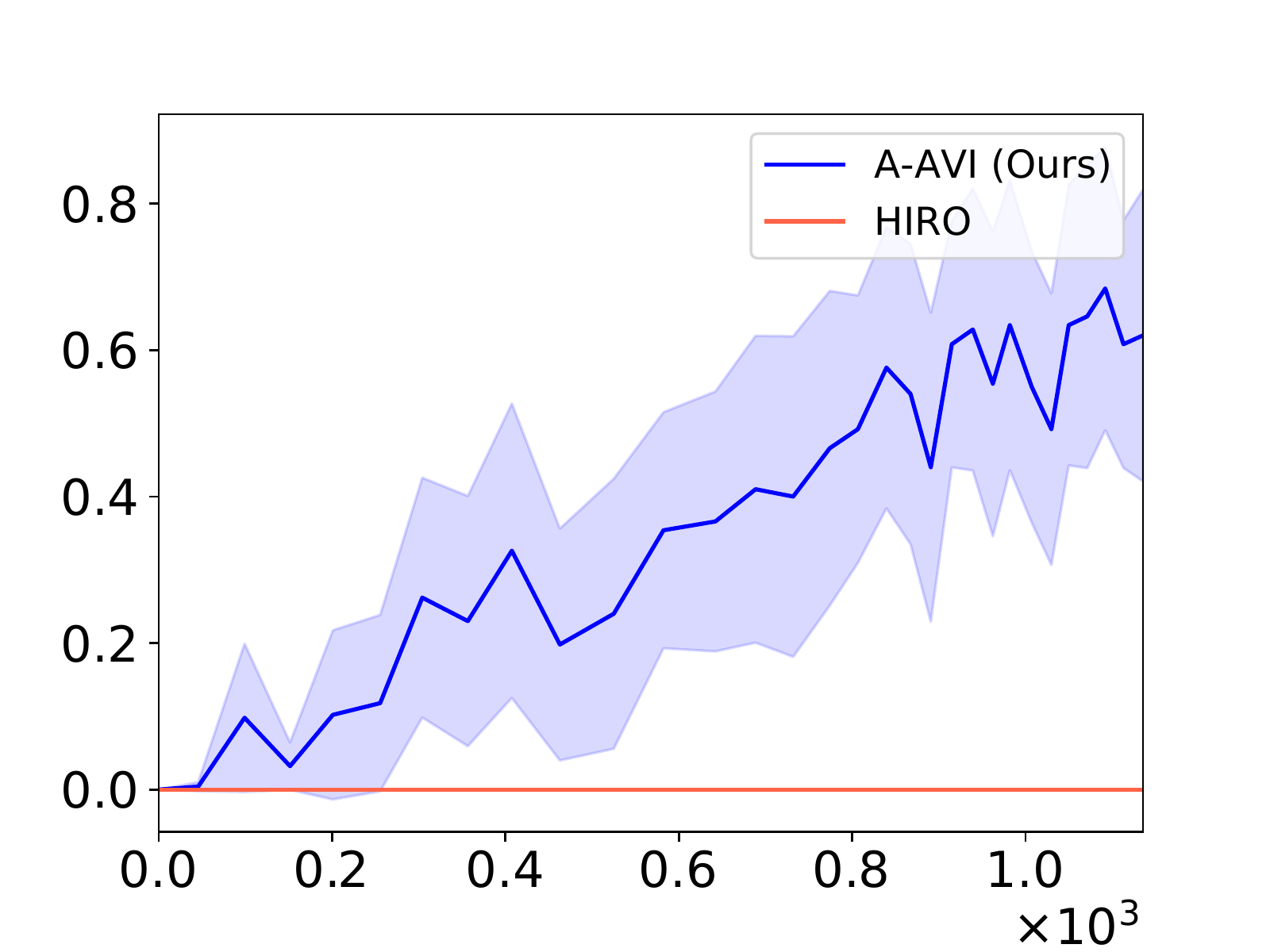}}\\\\
     (a) 16-Rooms\qquad \qquad & (b) Sample Complexity & (c) Learning Time
\end{tabular}
\caption{The 16-Rooms environment and learning curves of A-AVI with randomly generated subgoal regions in 16-Rooms; the plots show the probability of reaching the goal ($y$-axis) as a function of (b) number of samples (steps) from the environment and (c) time since the beginning of training (in minutes). Results are averaged over 10 executions.}
\label{fig:16rooms}
\end{figure}

The 16-Rooms environment is visualized in Figure~\ref{fig:16rooms} (a). We also trained policies for the 16-Rooms environment using randomly generated subgoal regions. For this environment we used $N=25$ subgoal regions and $K=7$ outgoing edges from each subgoal region. As shown in Figure~\ref{fig:16rooms} (b,c) we outperform HIRO on this task as well without additional input from the user.

The subgoal regions for AntMaze, AntPush, and AntFall are visualized in Figures~\ref{fig:antmaze}, \ref{fig:antpush}, and \ref{fig:antfall}, respectively. The red squares are the subgoal regions; in particular, each subgoal region can be described as a constraint $x\in[x_{\text{min}},x_{\text{max}}]\wedge y\in[y_{\text{min}},y_{\text{max}}]$, where $(x,y)\in\mathbb{R}^2$ is the position of the center of the ant.

\textbf{Hyperparameters.} For the rooms environment, the subgoal regions are learned using ARS \citep{mania2018simple} (version V2-t) with neural network policies and the following hyperparameters.
\begin{itemize}
    \item Step-size $\alpha = 0.3$.
    \item Standard deviation of exploration noise $\nu = 0.05$.
    \item Number of directions sampled per iteration is $30$.
    \item Number of top performing directions to use $b = 15$.
\end{itemize}
We retain the parameters of the policies across iterations of A-AVI. In each iteration of A-AVI, we run $300$ iterations of ARS for each subgoal transition in parallel. Initially, $\D_{\as}$ is taken to be the uniform distribution in a small square in the center of the subgoal region $\as$.

\begin{figure}[H]
    \centering
    \includegraphics[width=0.20\textwidth]{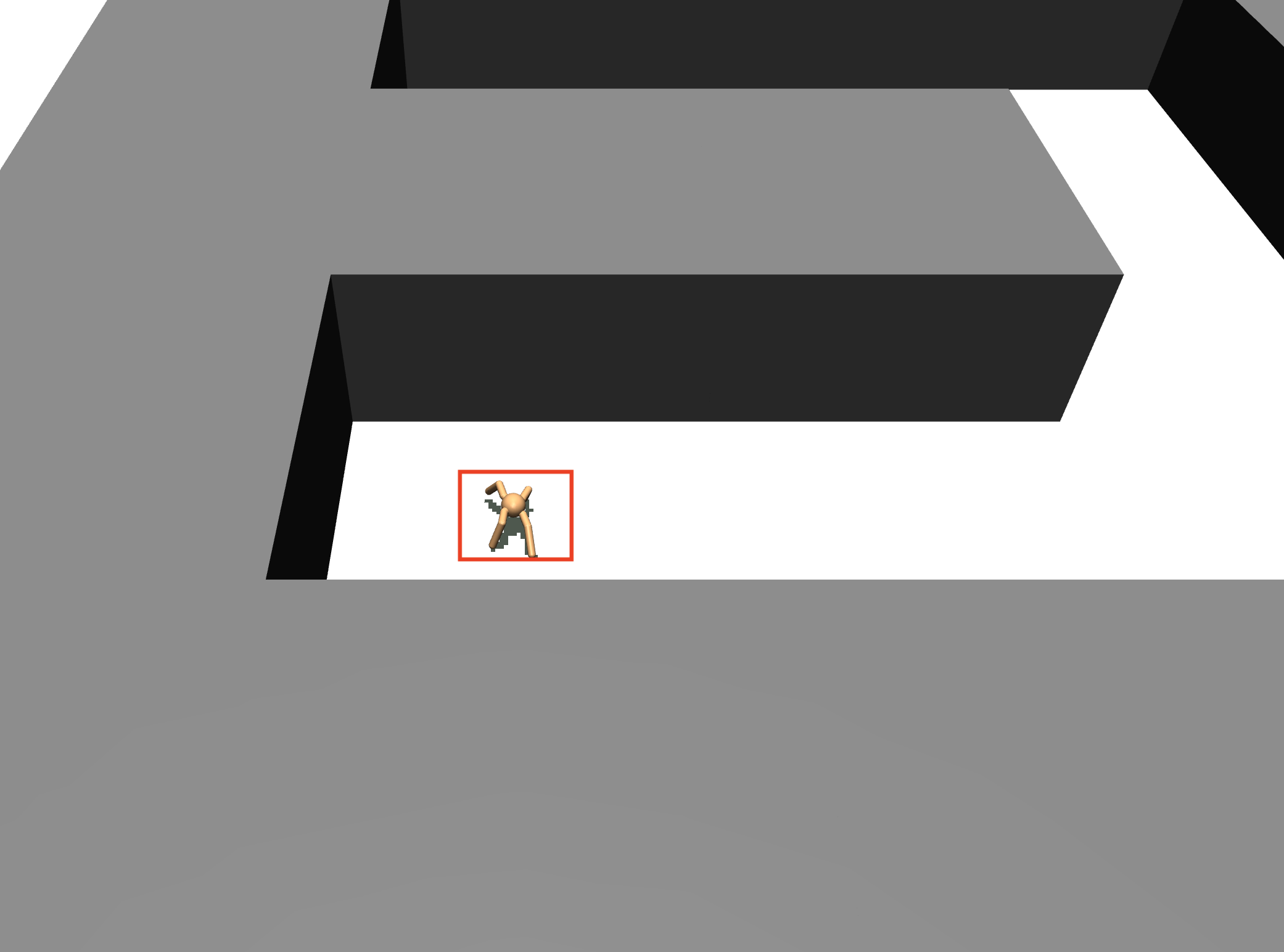}\quad
    \includegraphics[width=0.20\textwidth]{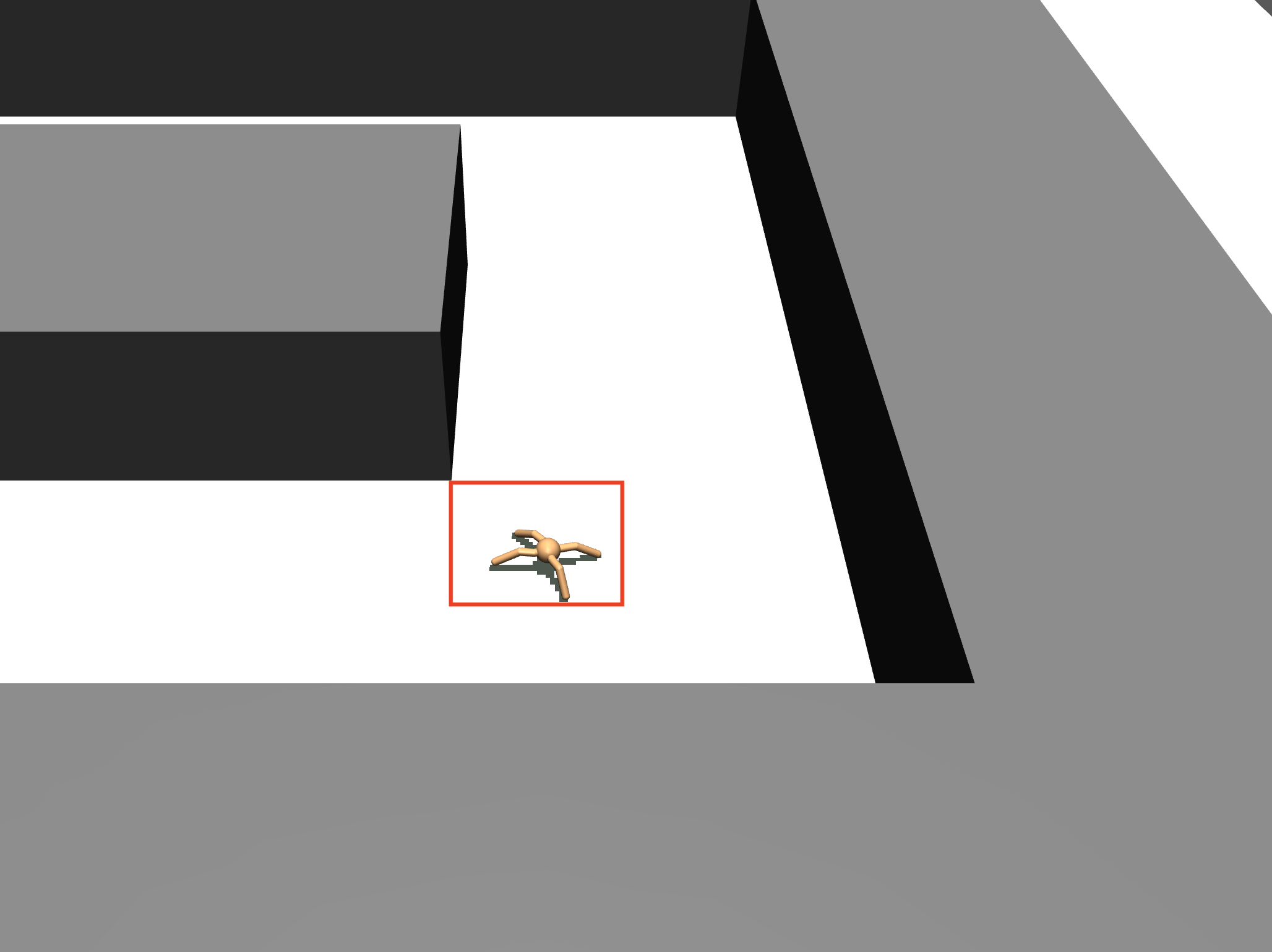}\quad
    \includegraphics[width=0.20\textwidth]{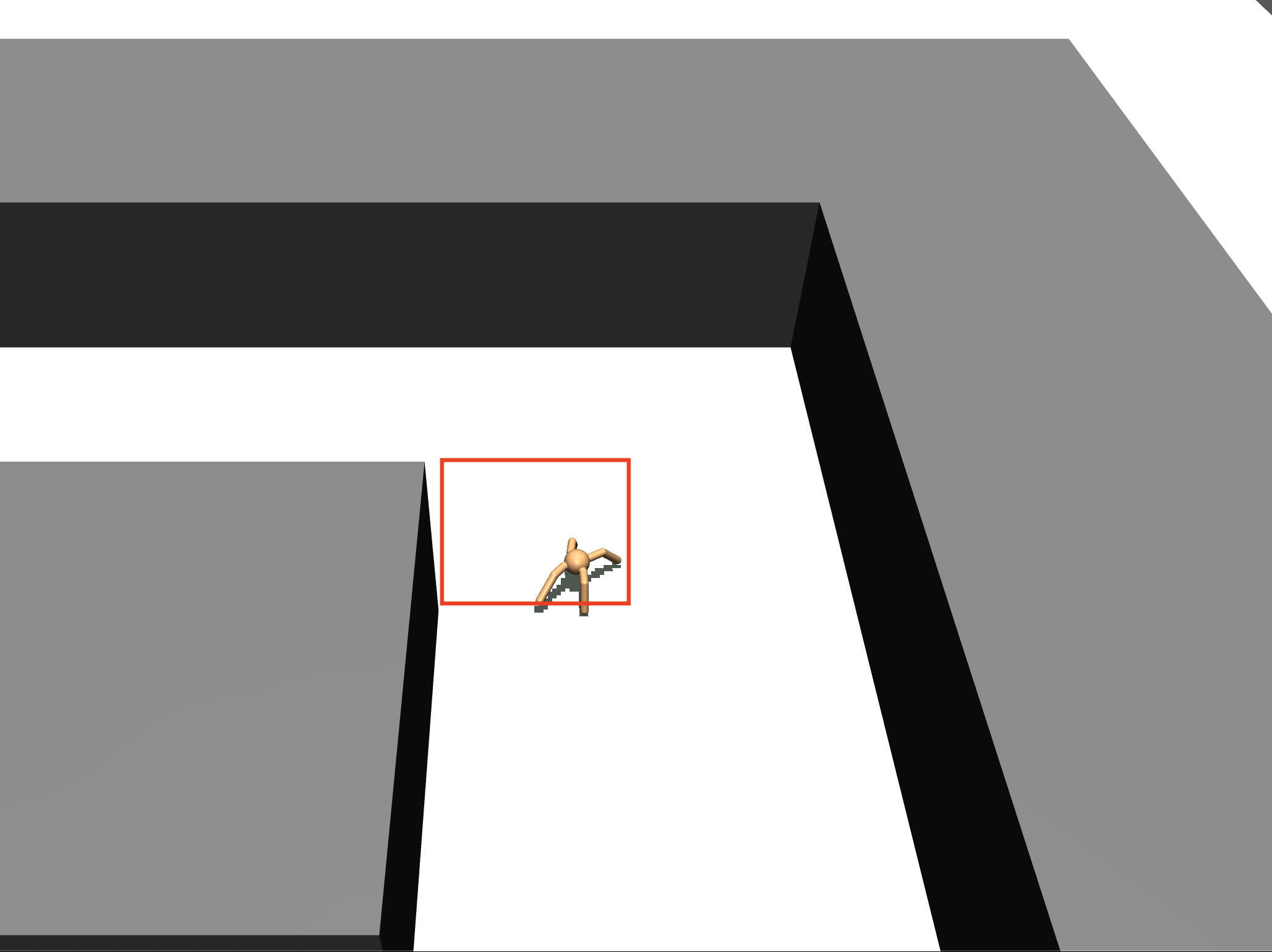}\quad
    \includegraphics[width=0.20\textwidth]{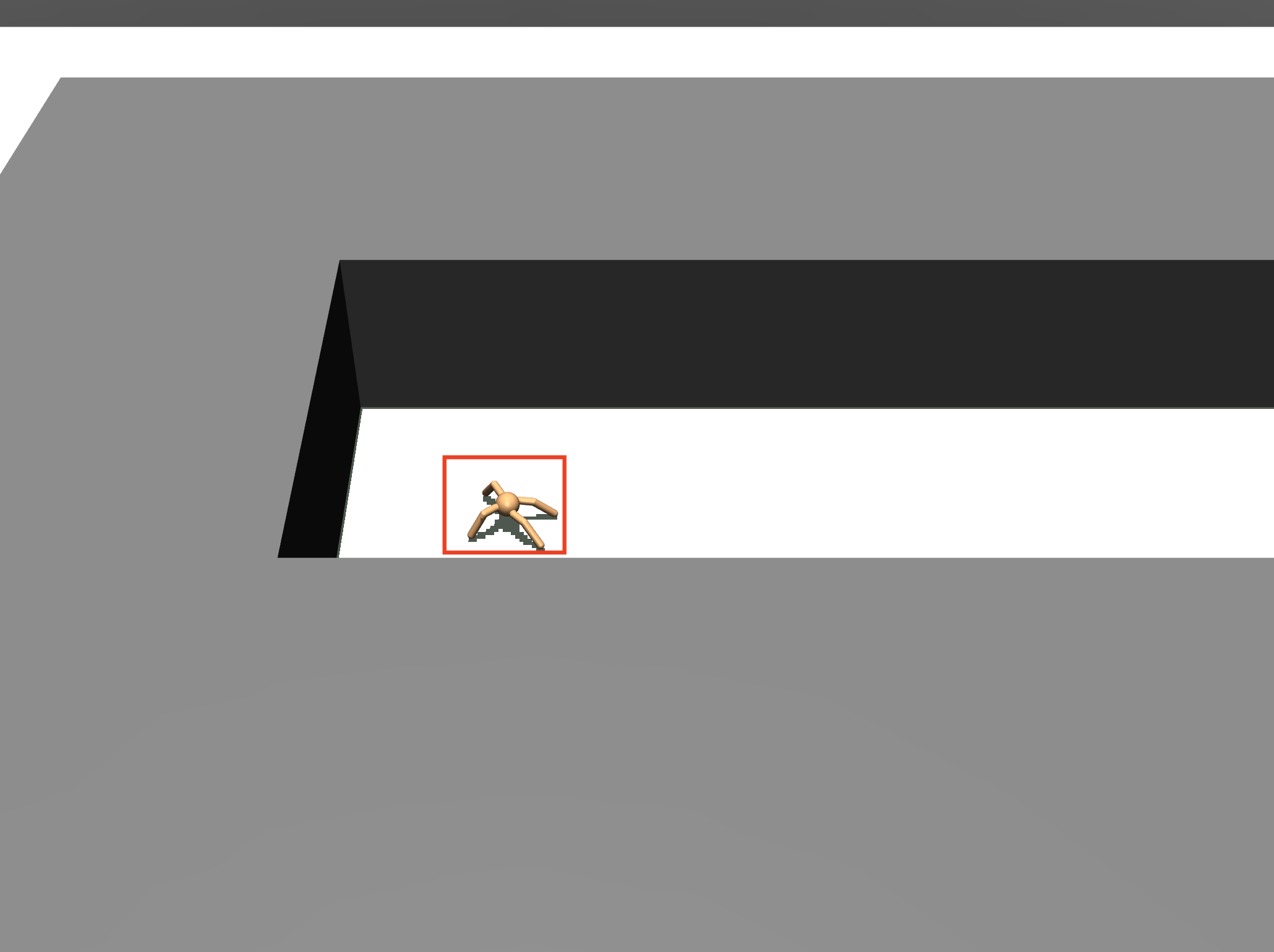}
    \caption{Subgoal Regions for AntMaze}
    \label{fig:antmaze}
\end{figure}

\begin{figure}[H]
    \centering
    \includegraphics[width=0.18\textwidth]{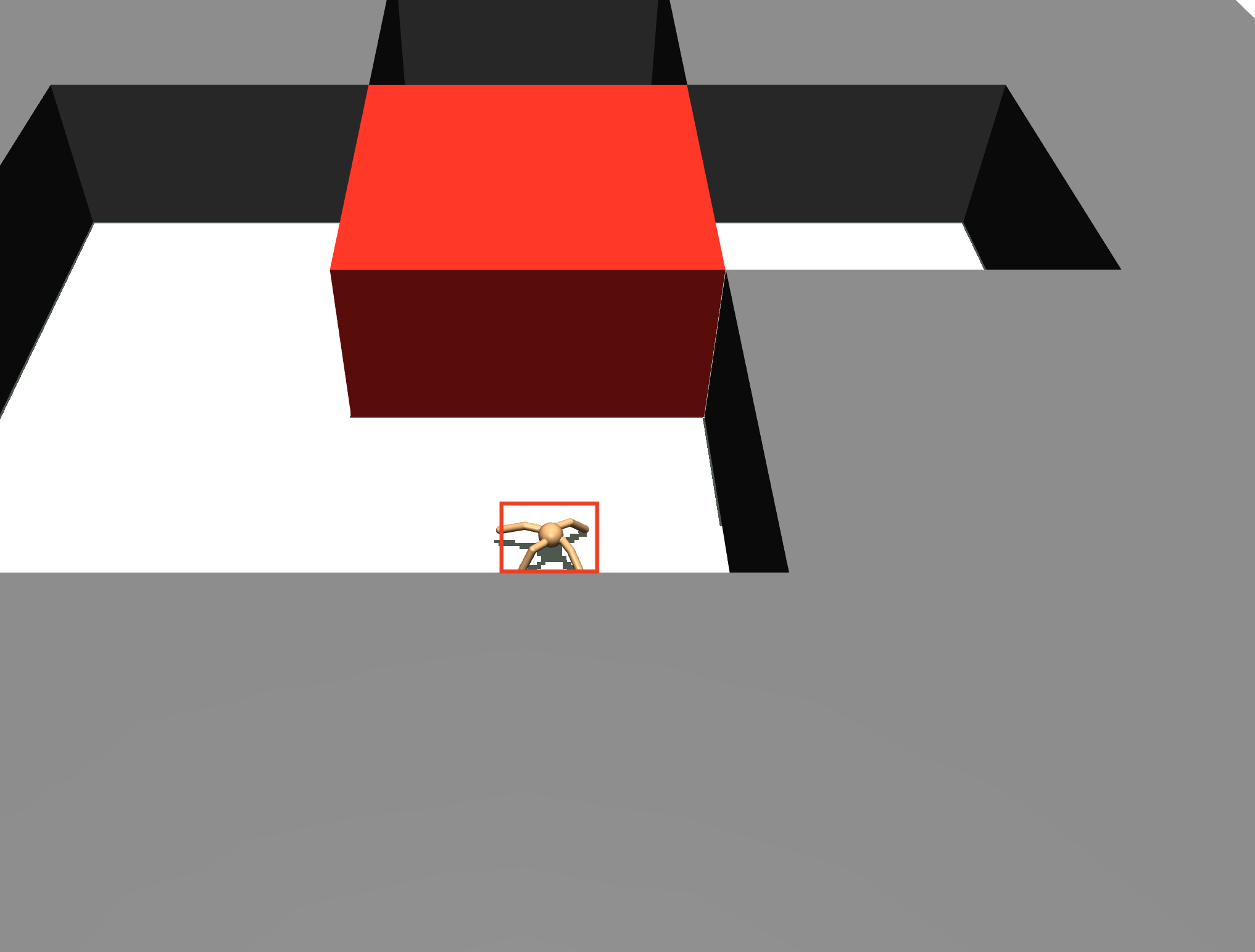}\quad
    \includegraphics[width=0.18\textwidth]{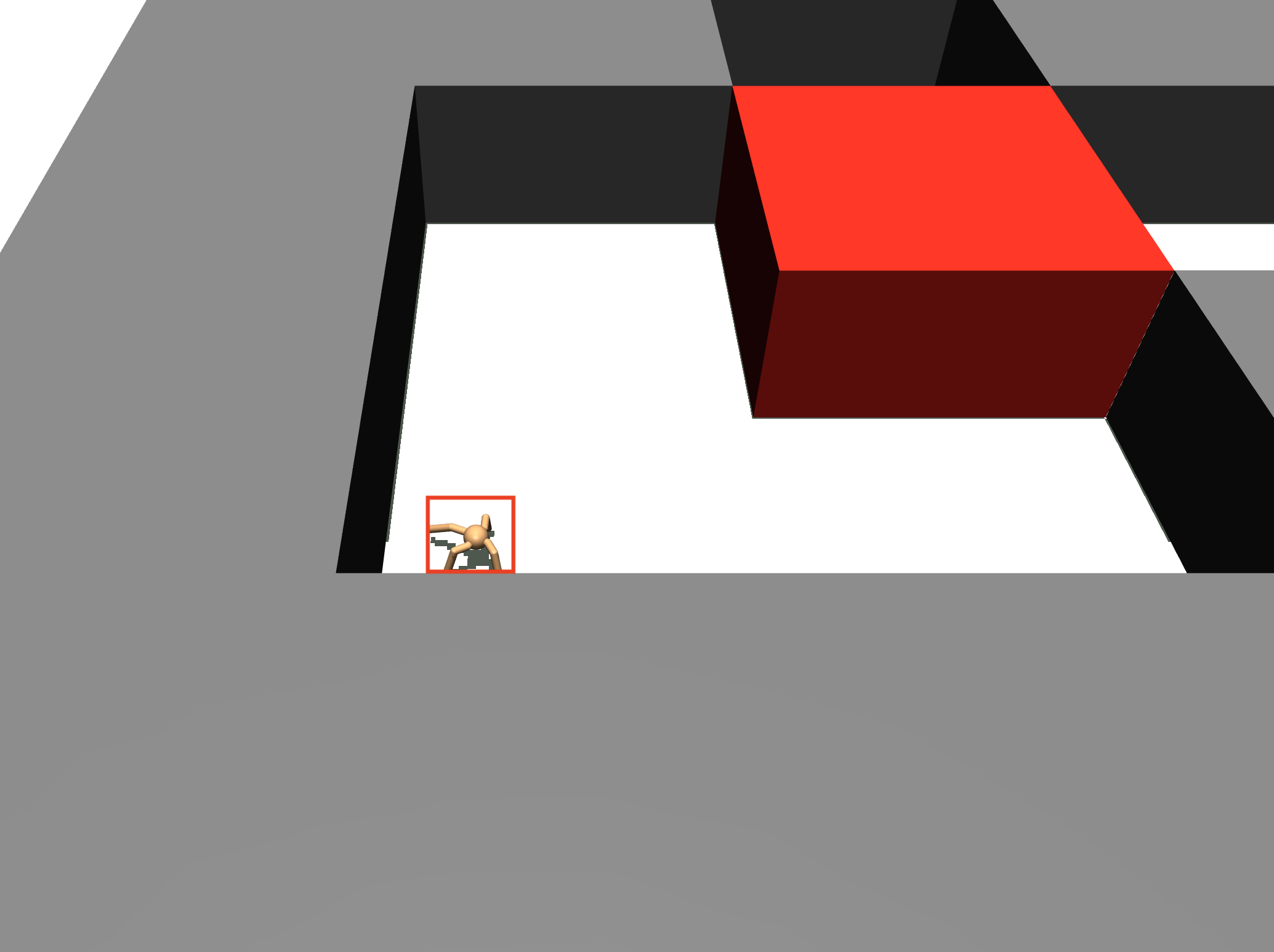}\quad
    \includegraphics[width=0.18\textwidth]{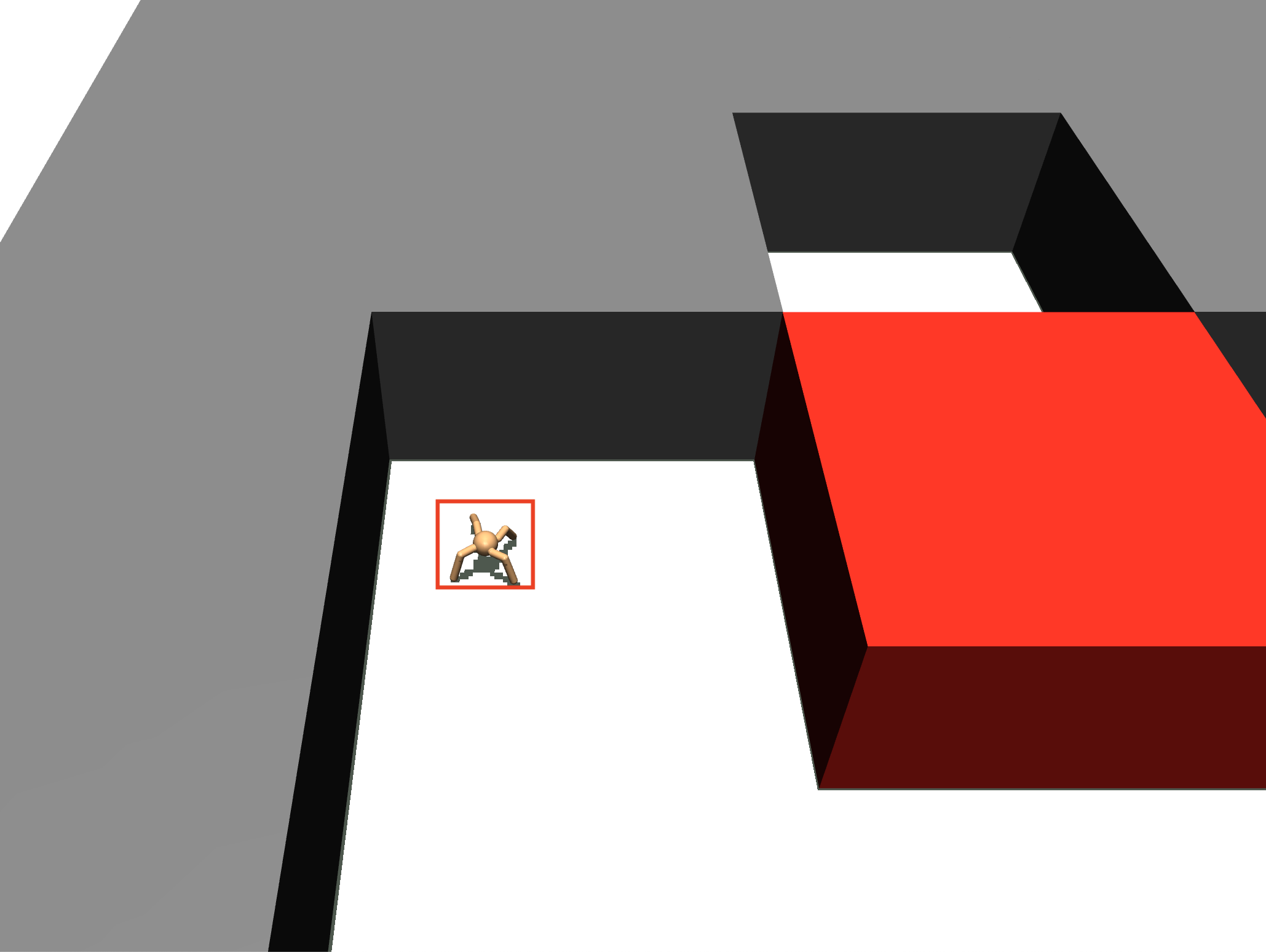}\quad
    \includegraphics[width=0.18\textwidth]{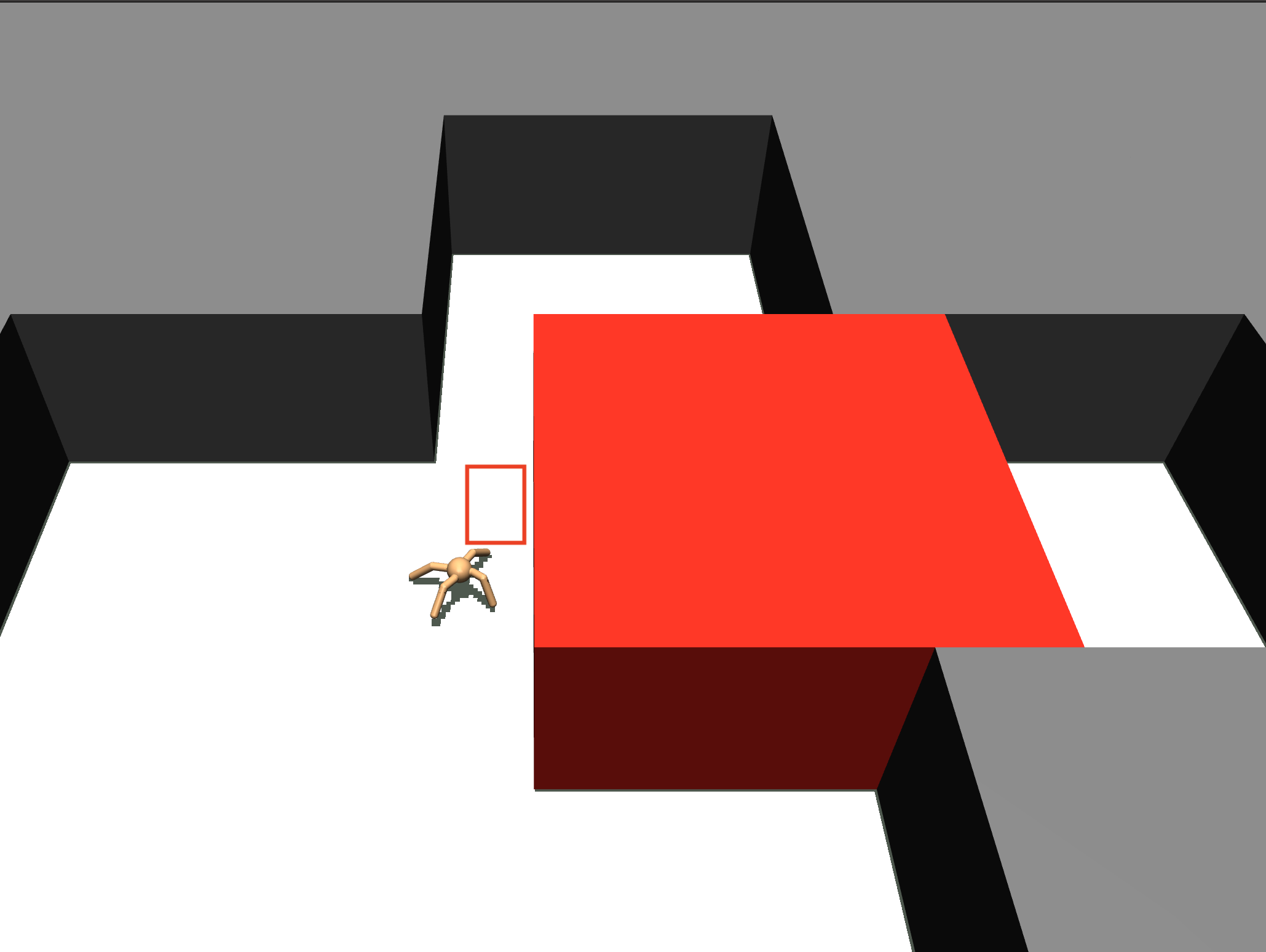}\quad
    \includegraphics[width=0.18\textwidth]{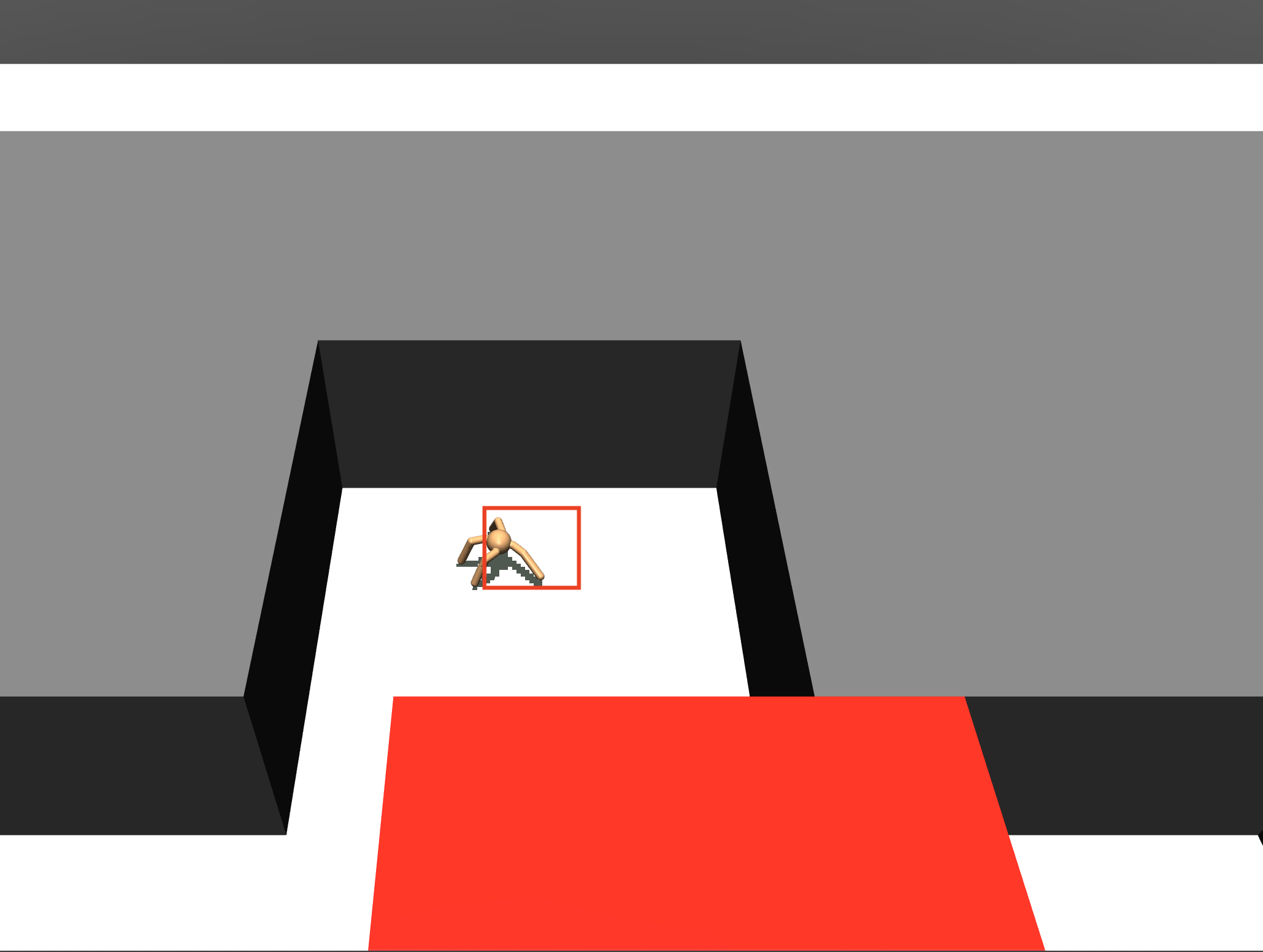}
    \caption{Subgoal Regions for AntPush}
    \label{fig:antpush}
\end{figure}

\begin{figure}[H]
    \centering
    \includegraphics[width=0.18\textwidth]{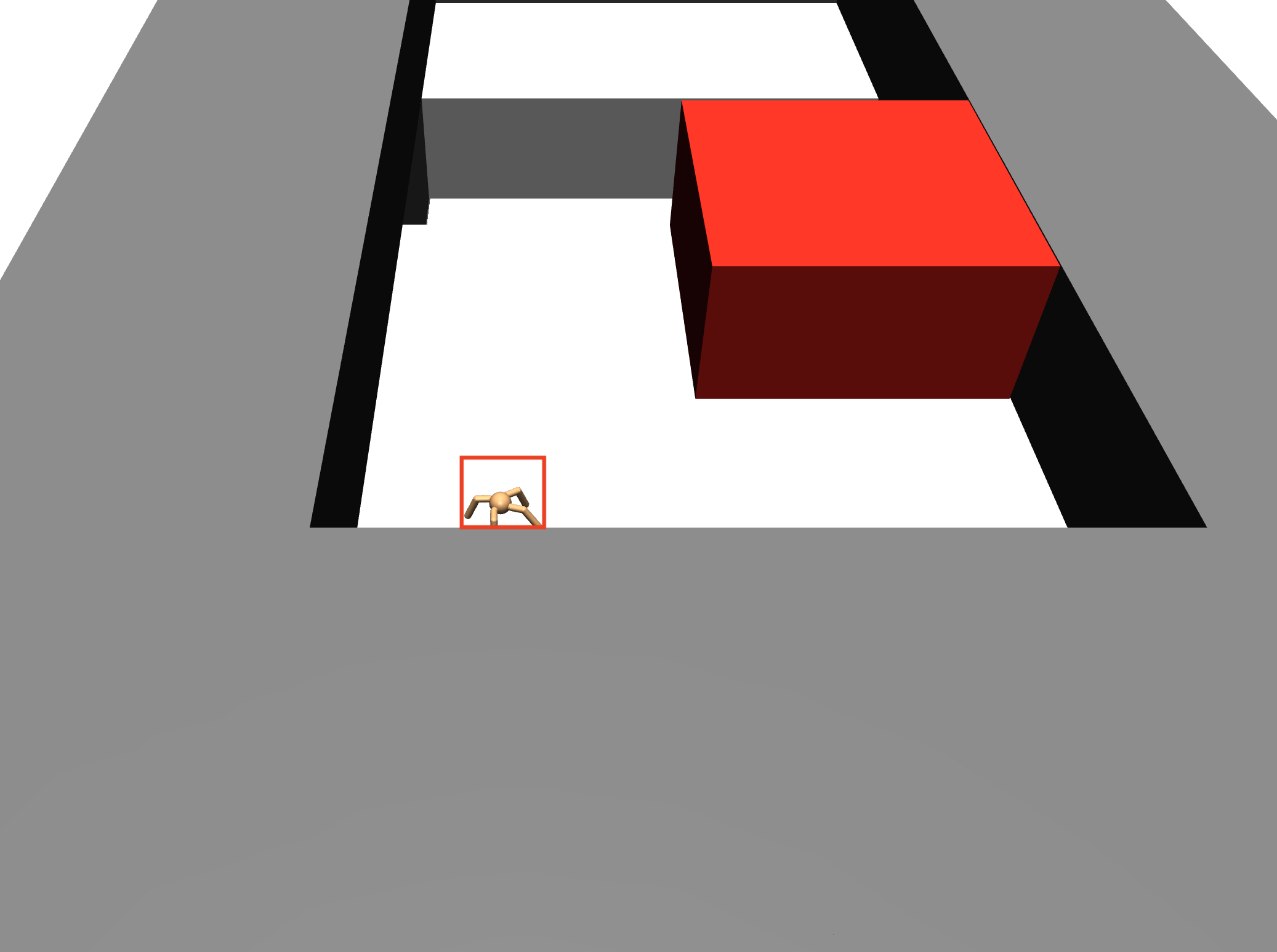}\quad
    \includegraphics[width=0.18\textwidth]{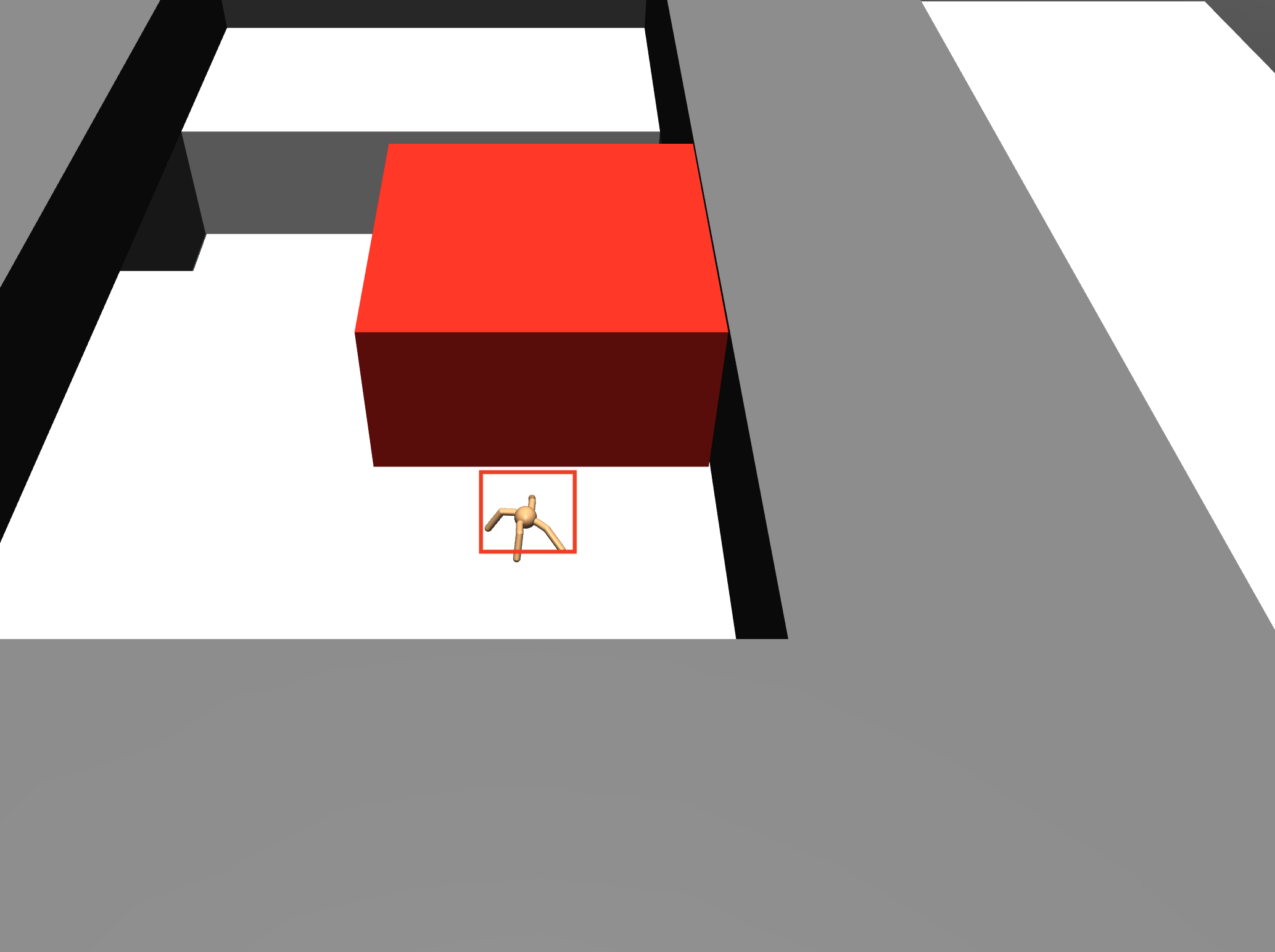}\quad
    \includegraphics[width=0.18\textwidth]{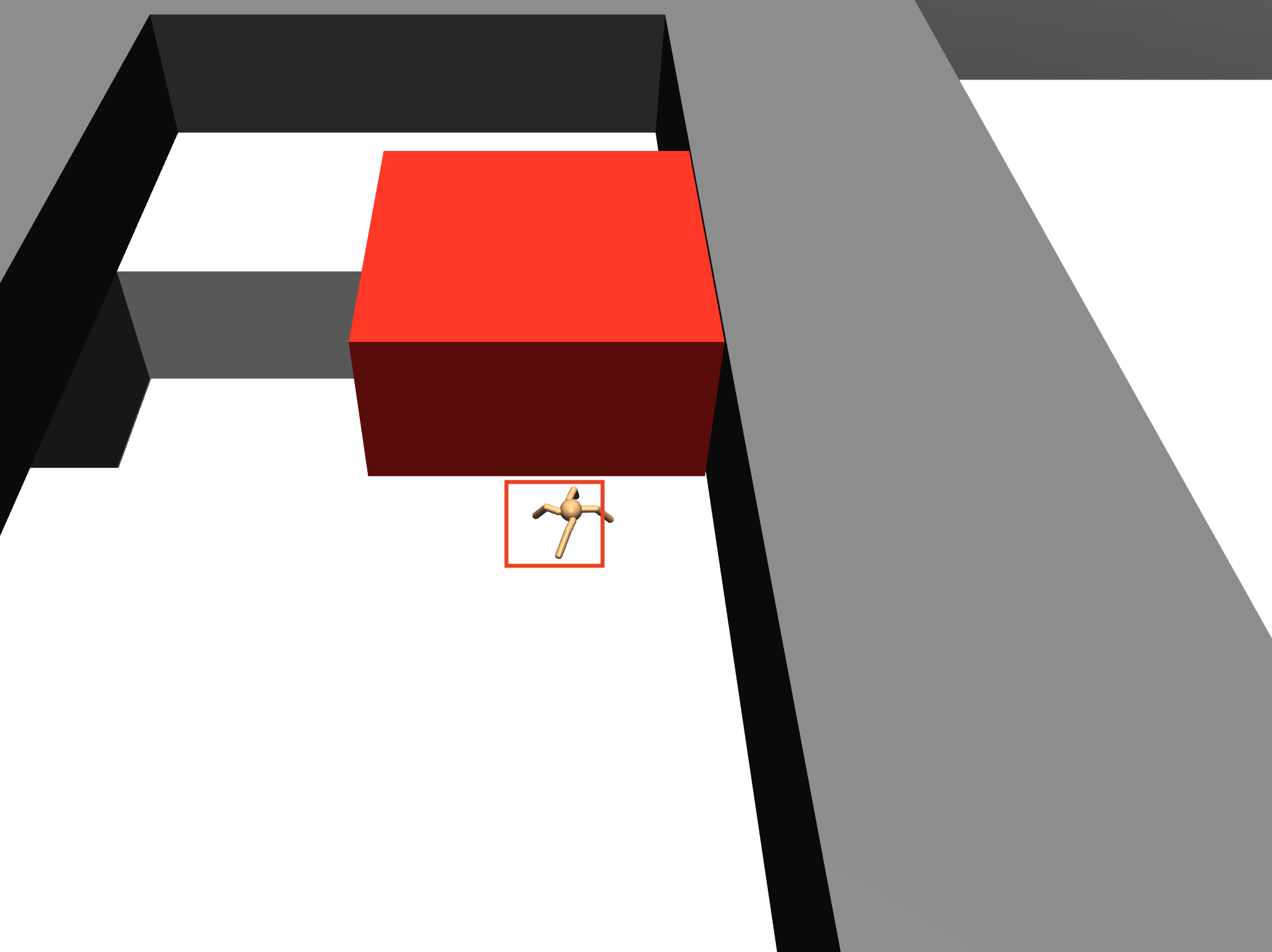}\quad
    \includegraphics[width=0.18\textwidth]{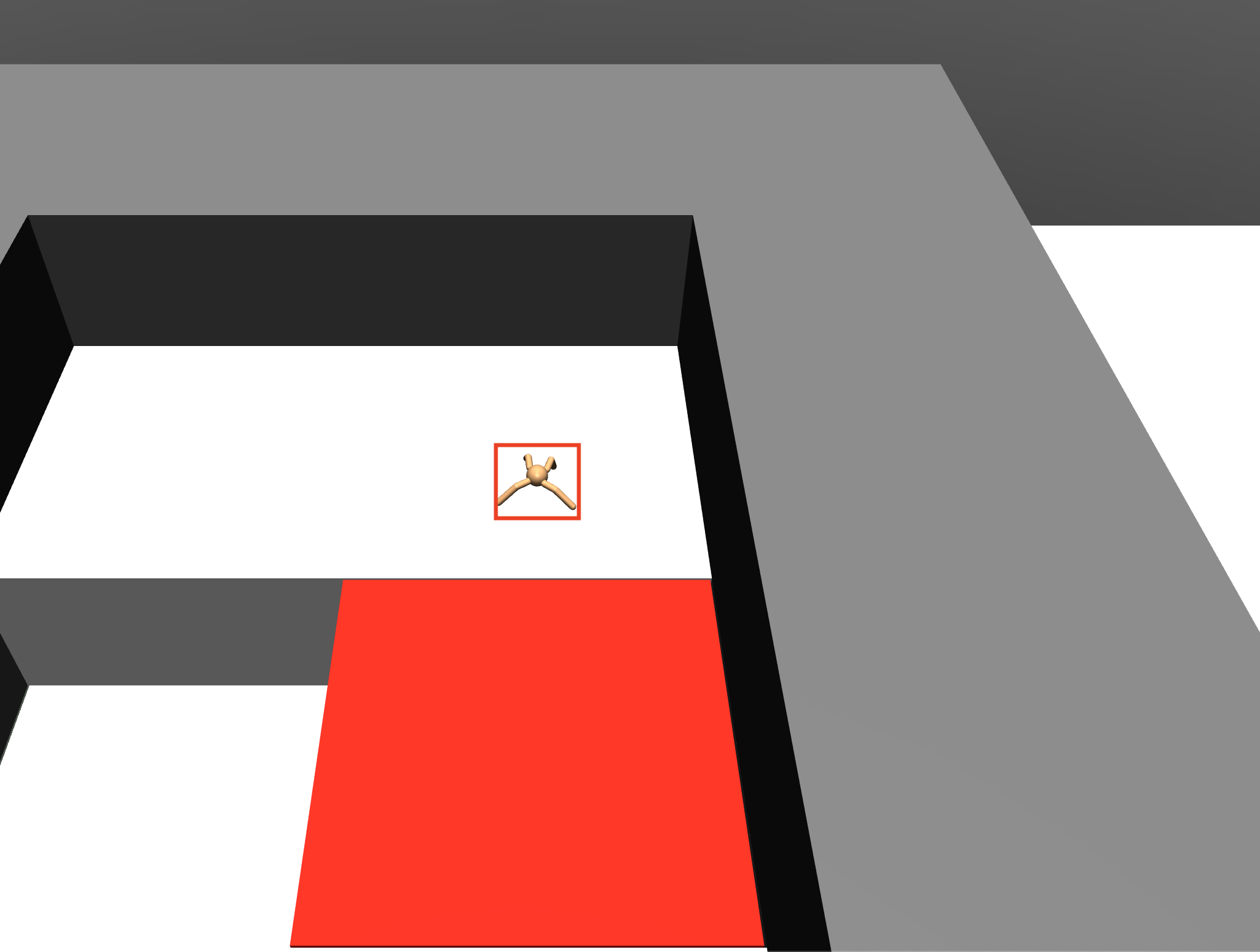}\quad
    \includegraphics[width=0.18\textwidth]{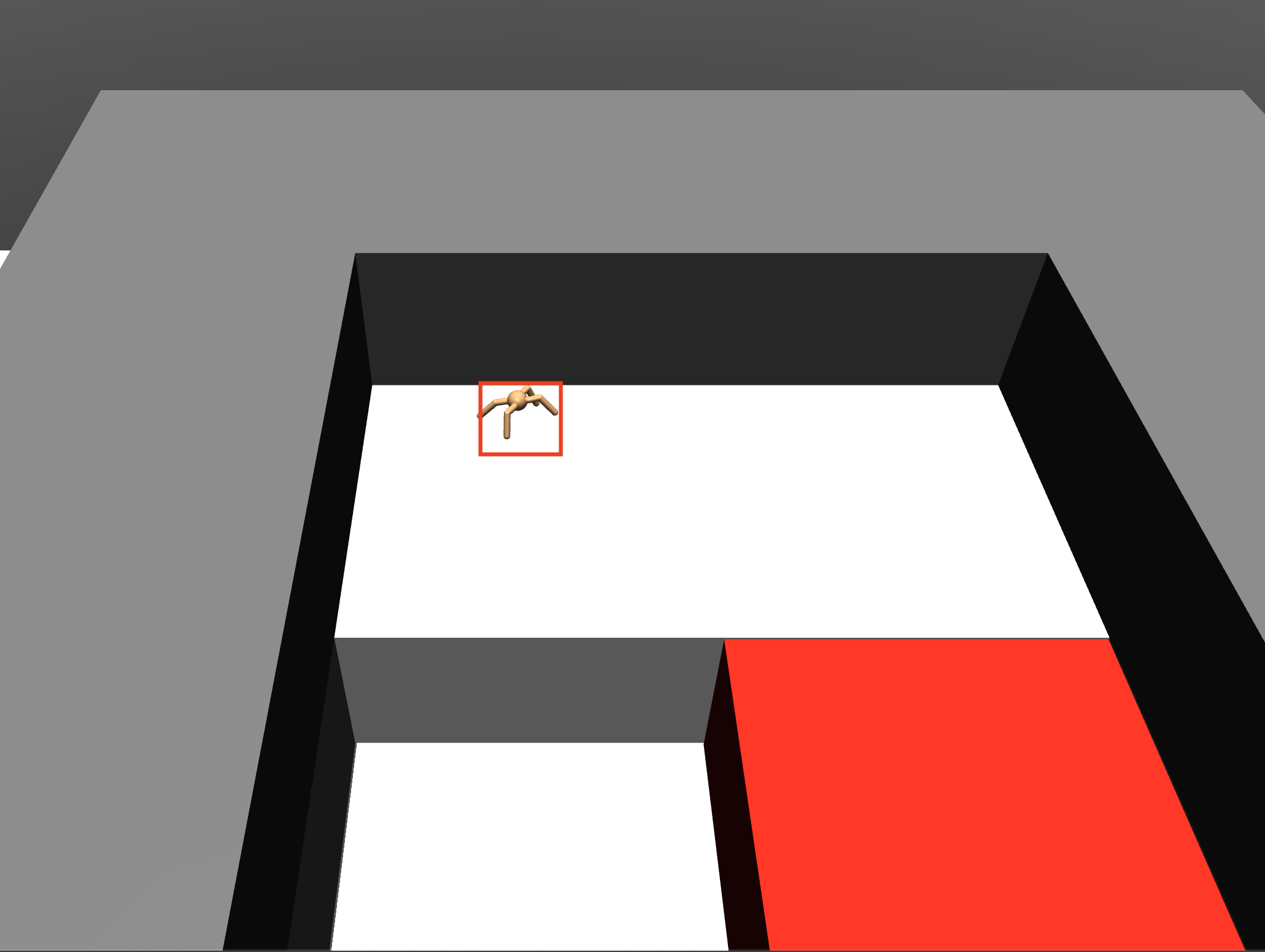}
    \caption{Subgoal Regions for AntFall}
    \label{fig:antfall}
\end{figure}

For the ant environments, the subgoal transitions are learned using TD3 \citep{fujimoto2018addressing}; each policy is a fully connected neural network with 300 neurons each and critic architecture is the same as the one in \cite{fujimoto2018addressing} except that we use 300 neurons for both hidden layers. We use the TFAgents \citep{tfagents} implementation of TD3 with the following hyperparameters.
\begin{itemize}
    \item Discount $\gamma=0.95$. 
    \item Adam optimizer; actor learning rate $0.0001$; critic learning rate $0.001$.
    \item Soft update targets $\tau=0.005$.
    \item Replay buffer of size $200000$.
    \item Target update and training step performed every 2 environment steps.
    \item Exploration using gaussian noise with $\sigma=0.1$.
\end{itemize}
We retain the actor and critic networks, target networks, optimizer states and the replay buffers across iterations of A-AVI. In each iteration of A-AVI, we run TD3 for 100000 environment steps for each subgoal transition.


\end{document}


%

%

\onecolumn
\aistatstitle{Abstract Value Iteration for Hierarchical Deep Reinforcement Learning}
\section{Proofs of Theorems}
We first establish some notation. Let $\V = \{V: \S\to\R_{\geq 0}\mid V\ \text{is Lebesgue-measurable and bounded}\}$ denote the set of all concrete value functions and $\tilde{\V} = \{\aV:\aS\to\R_{\geq 0}\}$ denote the set of all abstract value functions. Given $V\in\V$, we denote by $\lVert V\rVert_\infty$, the $\ell_\infty$-norm of $V$ given by $\lVert V\rVert_\infty=\sup_{s\in \S}|V(s)|$ and similarly for $\aV\in\tilde{\V}$, $\lVert \aV\rVert_\infty = \max_{\as\in\aS}|\aV(\as)|$. We use $\F$ to denote the transformation on $\V$ corresponding to (concrete) option value iteration using the set of options $\O$. More precisely, for any $s\in \S$, 
\begin{align*}
    \F(V)(s) &= \operatorname*{\max}_{o\in \O}Q(V, s, o),\\
    Q(V, s, o) &= R_{\opt}(s,o)+\int_{\S}T_{\opt}(s,o,s'){V}(s')ds'.
\end{align*}
We know that $\F$ is a contraction on $\V$ (with respect to the $\ell_{\infty}$-norm on $\V$) and hence $\lim_{n\to \infty}\F^n(V)(s) = V_{\O}^*(s)$ for all $s\in\S$ and any initial value function $V\in\V$. Also, for any option policy $\rho:\S\to\O$ we define the corresponding value function $V^{\rho}$ given by $V^{\rho}(s) = \lim_{n\to\infty}\F^n_{\rho}(V)(s)$ where $V\in\V$ is any initial value function and $\F_{\rho}$ is given by
\begin{align*}
\F_{\rho}(V)(s) = Q(V, s, \rho(s)).
\end{align*}
Similarly, for $z\in\{\low, \up\}$, let $\aF_z:\tilde{\V}\to\tilde{\V}$ denote the transformation corresponding to abstract value iteration---i.e., for any $\as \in \aS$,
\begin{align*}
\aF_z(\aV)(\as) &= \operatorname*{\max}_{o\in \O}\aQ_z(\aV,\as, o),\\
\aQ_z(\aV,\as, o) &= \aR_z(\as,o)+\sum_{\as'\in\aS}\aP_z(\as,o,\as')\cdot\aV(\as').
\end{align*}

\subsection{Proof of Theorem~\ref{thm:valueguarantee}}
\label{sec:thmvalueguaranteeproof}
We first prove some useful lemmas.
\begin{lemma}\label{lem:maxbound}
For any finite set $\B$ and two functions $f_1, f_2: \B \to \R$, if for all $b \in \B$, $|f_1(b) - f_2(b)| \leq \delta$ then $|{\max}_{b\in \B}f_1(b) - {\max}_{b\in \B}f_2(b)| \leq \delta$.
\end{lemma}
\begin{proof}
Let $b_1 = {\arg\max}_{b\in \B}f_1(b)$ and $b_2 = {\arg\max}_{b\in \B}f_2(b)$. We need to show that $|f_1(b_1) - f_2(b_2)| \leq \delta$. For the sake of contradiction, suppose $|f_1(b_1) - f_2(b_2)| > \delta$. Then either $f_1(b_1) > f_2(b_2) + \delta$ or $f_2(b_2) > f_1(b_1) + \delta$. Without loss of generality, let us assume $f_1(b_1) > f_2(b_2) + \delta$. Then 
$f_1(b_1) > f_2(b_1) + \delta$
which implies
$|f_1(b_1) - f_2(b_1)| > \delta$,
which is a contradiction.
\end{proof}

\begin{lemma}\label{lem:p_inf_bound}
Given any $\as\in\aS$ and $o\in\O$, 
\begin{align*}
\sum_{\as'\in\aS}\aP_{\low}(\as,o,\as') \leq \gamma.
\end{align*}
\end{lemma}
\begin{proof}
Fix any $s \in \as$. Then,
\begin{align*}
\sum_{\as'\in\aS}\aP_{\low}(\as,o,\as') &\leq \sum_{\as'\in\aS}\aP(s,o,\as') \\
&= \sum_{\as'\in\aS}\sum_{t=1}^\infty \gamma^t P(\as', t\mid s,o)\\
&\leq \gamma\sum_{\as'\in\aS}\sum_{t=1}^\infty P(\as', t\mid s,o)\\
&\leq \gamma
\end{align*}
where the last inequality followed from the fact that the subgoal regions are disjoint.
\end{proof}

\begin{lemma}\label{lem:probsumbound}
For $z \in \{\low, \up\}$, $$\displaystyle\sum_{\as\in\aS}\aP_{z}(\as, o,\as') \leq \gamma+|\aS|\ep_T.$$
\end{lemma}
\begin{proof}
The lemma follows from Lemma~\ref{lem:p_inf_bound} and the definition of $\ep_T$.
\end{proof}


\subsubsection{Proof of Convergence} We prove that R-AVI converges by showing that abstract value iteration is defined by a contraction mapping.
Consider, for any $\as\in\aS$, $o\in\O$, $\aV_1,\aV_2 \in\tilde{\V}$ and $z\in\{\low, \up\}$,
\begin{align*}
\big|\aQ_z(\aV_1, \as, o) - \aQ_z(\aV_2, \as, o)\big| &= \Bigg|\sum_{\as'\in\aS}\aP_z(\as,o,\as')\cdot\aV_1(\as') - \sum_{\as'\in\aS}\aP_z(\as,o,\as')\cdot\aV_2(\as')\Bigg|\\
&= \Bigg|\sum_{\as'\in\aS}\aP_z(\as,o,\as')\cdot\Big(\aV_1(\as') - \aV_2(\as')\Big)\Bigg|\\
&\leq \sum_{\as'\in\aS}\aP_z(\as,o,\as')\cdot\Big|\aV_1(\as') - \aV_2(\as')\Big|\\
&\leq \lVert \aV_1-\aV_2\rVert_\infty\sum_{\as'\in\aS}\aP_z(\as,o,\as')\\
&\leq (\gamma+|\aS|\ep_T)\lVert \aV_1-\aV_2\rVert_\infty.
\end{align*}
where the last inequality followed from Lemma~\ref{lem:probsumbound}. Using Lemma~\ref{lem:maxbound} we have
\begin{align*}
|\aF_z(\aV_1)(\as)-\aF_z(\aV_2)(\as)|
&= \Big|\operatorname*{\max}_{o\in\O}\aQ_z(\aV_1,\as, o) - \operatorname*{\max}_{o\in\O}\aQ_z(\aV_2,\as, o)\Big|\\
&\leq (\gamma+|\aS|\ep_T)\lVert \aV_1-\aV_2\rVert_\infty.
\end{align*}
If $\gamma + |\aS|\ep_T < 1$, $\aF_z$ is a contraction mapping and hence abstract value iteration is guaranteed to converge.
$\hfill\qed$
\subsubsection{Proof of Performance Bound}
We show the performance bound using the following lemmas. First, we show that the upper and lower values obtained from abstract value iteration bound the value function of the best option policy $\rho^*$ for the set of options $\O$.
\begin{lemma}
\label{lem:valuebound}
Under Assumption~\ref{assump:eps}, for all $\as\in\aS$ and $s\in\as$, we have
\begin{align*}
    \aV_{\low}^*(\as)\le V_{\O}^{*}(s)\le\aV_{\up}^*(\as).
\end{align*}
\end{lemma}
\begin{proof}
We will prove the upper bound. The lower bound follows by a similar argument. Let $V \in \V$ and $\aV\in\tilde{\V}$ be such that for all $\as\in\aS$ and $s\in\as$, $V(s)\leq \aV(\as)$. Suppose $\as\in\aS$ and $s\in\as$. Since for any $o\in\O$, $\int_{\S}T_\opt(s,o,s')\mathds{1}(s'\in \S\setminus\bar{S})ds' = 0$, we have
\begin{align*}
\F(V)(s)
&= \operatorname*{\max}_{o\in \O}\Big(R_{\opt}(s,o)+\int_{\S}T_\opt(s,o,s'){V}(s')ds'\Big)\\
&= \operatorname*{\max}_{o\in \O}\Big(R_{\opt}(s,o)+\int_{\bar{S}}T_\opt(s,o,s'){V}(s')ds'\Big)\\
&= \operatorname*{\max}_{o\in \O}\Big(R_{\opt}(s,o)+\sum_{\as'\in\aS}\int_{\as'}T_\opt(s,o,s'){V}(s')ds'\Big)\\
&\leq \operatorname*{\max}_{o\in \O}\Big(\aR_{\up}(\as,o)+\sum_{\as'\in\aS}\aV(\as')\int_{\as'}T_\opt(s,o,s')ds'\Big)\\
&= \operatorname*{\max}_{o\in \O}\Big(\aR_\up(\as,o)+\sum_{\as'\in\aS}{\aP}(s,o,\as')\cdot\aV(\as')\Big)\\
&\leq \operatorname*{\max}_{o\in \O}\Big(\aR_\up(\as,o)+\sum_{\as'\in\aS}{\aP_\up}(\as,o,\as')\cdot\aV(\as')\Big)\\
&= \aF_\up(\aV)(\as).
\end{align*}
By induction on $n$, it follows that $\F^n(V)(s) \leq \aF^n_\up(\aV)(\as)$ for all $n\geq 1$. Therefore if $V_0$ and $\aV_0$ assign zero to all states and subgoal regions, respectively, we have
\begin{align*}
V_{\O}^*(s) = \lim_{n\to\infty}\F^n(V_0)(s) \leq \lim_{n\to\infty}\aF^n_\up(\aV_0)(\as) = \aV^*_\up(\as).
\end{align*}
The claim follows.
\end{proof}

Next, we bound the gap in the upper and lower value functions as a function of the gaps $\ep_T$ and $\ep_R$.
\begin{lemma}
\label{lem:valuegap}
Under Assumption~\ref{assump:eps}, for all $\as\in\aS$, we have
\begin{align*}\aV_{\up}^*(\as)-\aV_{\low}^*(\as) \leq \frac{(1-\gamma)\ep_R + |\aS|\ep_T}{(1-\gamma)(1-(\gamma + |\aS|\ep_T))}.
\end{align*}
\end{lemma}
\begin{proof}
Let $\aV_1,\aV_2 \in \tilde{\V}$ be abstract value functions such that $\aV_2(\as) \leq \min\{(1-\gamma)^{-1}, \aV_1(\as)\}$ for all $\as\in\aS$. Then, for any $\as\in\aS$ and $o\in\O$,
\begin{align*}
\aQ_\up(\aV_1, &\as, o) - \aQ_\low(\aV_2, \as, o)\\
&=\Big(\aR_\up(\as,o) - \aR_\low(\as,o)\Big) + \Big(\sum_{\as'\in\aS}\aP_\up(\as,o,\as')\cdot\aV_1(\as') - \sum_{\as'\in\aS}\aP_\low(\as,o,\as')\cdot\aV_2(\as')\Big)\\
&\leq \ep_R + \Big(\sum_{\as'\in\aS}\aP_\up(\as,o,\as')\cdot\aV_1(\as') - \sum_{\as'\in\aS}\big(\aP_\up(\as,o,\as') - \ep_T\big)\cdot\aV_2(\as')\Big)\\
&\leq \ep_R + \sum_{\as'\in\aS}\aP_\up(\as,o,\as')\cdot\big(\aV_1(\as')-\aV_2(\as')\big) + \frac{|\aS|\ep_T}{1-\gamma}\\
&\leq \ep_R + \lVert\aV_1-\aV_2\rVert_\infty\sum_{\as'\in\aS}\aP_\up(\as,o,\as') + \frac{|\aS|\ep_T}{1-\gamma}\\
&\leq \ep_R + (\gamma+|\aS|\ep_T)\lVert\aV_1-\aV_2\rVert_\infty + \frac{|\aS|\ep_T}{1-\gamma}.
\end{align*}
Now, using Lemma~\ref{lem:maxbound} we have \begin{align*}|\aF_\up(\aV_1)(\as) - \aF_\low(\aV_2)(\as)| \leq \ep_R + (\gamma+|\aS|\ep_T)\lVert\aV_1-\aV_2\rVert_\infty + \frac{|\aS|\ep_T}{1-\gamma}.
\end{align*}
If we define $\aV_0$ to be the zero vector, we can show by induction on $n$ that, for all $\as\in\aS$ and $n\geq 0$, $\aF_\low^n(\aV_0)(\as) \leq \min\{(1-\gamma)^{-1},\aF_\up^n(\aV_0)(\as)\}$ since the rewards in the underlying MDP are bounded above by $1$. Hence, another induction on $n$ gives us, for all $\as\in\aS$ and $n\geq 0$,
\begin{align*}
\aF^n_\up(\aV_0)(\as) - \aF^n_\low(\aV_0)(\as) 
\leq \Big(\ep_R + \frac{|\aS|\ep_T}{1-\gamma}\Big)\sum_{k=0}^n(\gamma+|\aS|\ep_T)^k.
\end{align*}
Taking limit $n\to\infty$ on both sides gives us the required bound.
\end{proof}

Now, we prove the following lemma.
\begin{lemma}
\label{lem:optimalpolicylowerbound}
For any $\as\in\aS$ and $s\in\as$ we have
\begin{align*}
V^{{\tilde{\rho}}}(s)\geq \aV_\low^*(\as),
\end{align*}
where $\tilde{\rho}$ is the conservative optimal option policy.
\end{lemma}
\begin{proof}
Let $V \in \V$ be such that for all $\as\in\aS$ and $s\in\as$, $V(s)\geq \aV_\low^*(\as)$. Given $\as\in\aS$ and $s\in\as$, we have
\begin{align*}
\F_{\tilde{\rho}}(V)(s) &= R_{\opt}(s,\tilde{\rho}(s))+\int_{\S}T_{\opt}(s,\tilde{\rho}(s),s'){V}(s')ds'\\
&\geq \aR_\low(\as, \tilde{\rho}(s)) + \sum_{\as'\in\aS}\aV_\low^*(\as')\int_{\as'}T_\opt(s, \tilde{\rho}(s),s')ds'\\
&\geq \aR_\low(\as, \tilde{\rho}(s)) + \sum_{\as'\in\aS}\aP(s, \tilde{\rho}(s),\as')\cdot\aV_\low^*(\as')\\
&\geq \aR_\low(\as, \tilde{\rho}(s)) + \sum_{\as'\in\aS}\aP_\low(\as, \tilde{\rho}(s),\as')\cdot\aV_\low^*(\as')\\
&= \aQ_\low^*(\as, \tilde{\rho}(s))\\
&= \operatorname*{\max}_{o\in\O}\aQ_\low^*(\as,o)\\
&= \aV_\low^*(\as)
\end{align*}
where the first inequality followed from the fact that $\int_{\S}T_\opt(s,o,s')\mathds{1}(s'\in \S\setminus\bar{S})ds' = 0$. Now let $V_0\in\V$ be a value function such that $V_0(s) = \aV_\low^*(\as)$ for all $\as\in\aS$ and $s\in\as$. Then we can show by induction on $n$ that, for all $\as\in\aS$, $s\in\as$ and $n\geq 0$, $\F_{\tilde{\rho}}^n(V_0)(s) \geq \aV_\low^*(\as)$ and therefore
\begin{align*}
V^{{\tilde{\rho}}}(s) = \lim_{n\to\infty}\F^n_{\tilde{\rho}}(V_0)(s)
\geq \aV^*_\low(\as).
\end{align*}
The claim follows.
\end{proof}

We are now ready to prove the performance bound in Theorem~\ref{thm:valueguarantee}. For any $\as\in\aS$ and $s\in\as$, we have 
\begin{align*}
V^{{\tilde{\rho}}}(s) &\geq \aV_\low^*(\as)\\
&= \aV_\up^*(\as) - (\aV_\up^*(\as)-\aV_\low^*(\as))\\
&\geq V^{*}_\O(s) - (\aV_\up^*(\as)-\aV_\low^*(\as))
\end{align*}
where the first inequality followed from Lemma~\ref{lem:optimalpolicylowerbound} and the second inequality followed from Lemma~\ref{lem:valuebound}. Taking expectation w.r.t. the initial state distribution $\eta_0$ and applying Lemma~\ref{lem:valuegap} gives us the required claim. $\hfill\qed$

\subsection{Proof of Theorem~\ref{thm:valueguaranteeglobal}}

Note that this theorem relies on additional assumptions, namely, Assumptions~\ref{assump:deterministic} and \ref{assump:bottleneck}. We first show the following lemma.\footnote{Note that $\aV_\up^*$ is an upper bound on the value function; it may exceed the optimal value.}
\begin{lemma}
\label{lem:globalupperbound}
For all $s_0\in\as_0$, $V^*(s_0) \leq \aV_\up^*(\as_0)$.
\end{lemma}
\begin{proof}
Let $\pi^*$ be the optimal policy. Given an $s_0\in\as_0$, let $s_0,s_1,\ldots$ be the sequence of states visited when following $\pi^*$ starting at $s_0$. If the goal region is not visited, then $V^*(s_0) = 0$ and the lemma holds. Otherwise, let $t$ be the first time when $s_t\in\as_g$. Then $V^*(s_0) = \gamma^{t-1}$ and there is a subsequence of indices, $0=i_0,\ldots,i_k=t$ and a sequence of subgoal region $\as_0,\ldots,\as_k$ such that for all $0\leq j\leq k$, $s_{i_{j}}\in\as_j$ and for $j < k$, there is an option $o_j = (\pi(\as_j, \as_{j+1}), \as_j, \beta) \in \O^*$. Let $o_j^*$ denote the modified option $(\pi^*, \as_j, \beta)$ where the policy $\pi(\as_j, \as_{j+1})$ is replaced with $\pi^*$. For every $0\leq j < k$,
\begin{align*}
\gamma^{i_{j+1}-i_j} &= \aP(s_{i_j},o_j^*,\as_{j+1})\\
&\leq \aP_\up(\as_j,o_j^*,\as_{j+1})\\
&\leq \operatorname*{\max}_{\pi}\aP_\up(\as_j,(\pi, \as_j,\beta),\as_{j+1})\\
&= \aP_\up(\as_j, o_j,\as_{j+1}).
\end{align*}
Since all states in $\as_g$ are sink states, $\aV^*_\up(\as_g) = 0$. Furthermore, for any $s\in\bar{S}\setminus\as_g$ and any subgoal transition $o$, $R_\opt(s, o) = \gamma^{-1}\aP(s,o,\as_g)$ and hence
\begin{align*}
\aR_\up(\as_{k-1}, o_{k-1})&=\sup_{s\in\as_{k-1}}R_\opt(s,o_{k-1})\\ &= \sup_{s\in\as_{k-1}}\gamma^{-1}\aP(s,o_{k-1},\as_g)\\
&=\gamma^{-1}\aP_\up(\as_{k-1},o_{k-1},\as_g)\\
&\geq \gamma^{t-i_{k-1}-1}.
\end{align*} 
Since $\aR_\up(\as_j, o_j)\geq 0$ for all $0\leq j < k$, using the definition of $\aV^*_{\up}$ and induction on $k-j$ we can show that for all $0\leq j < k$,
\begin{align*}
\aV^*_\up(\as_j)\geq \aR_\up(\as_{k-1}, o_{k-1})\prod_{q=j}^{k-2}\aP_\up(\as_{q}, o_{q}, \as_{q+1})\geq \gamma^{t-i_j-1}
\end{align*}
Therefore, $\aV^*_\up(\as_0)\geq \gamma^{t-1} = V^*(s_0)$.
\end{proof}

We are now ready to prove Theorem~\ref{thm:valueguaranteeglobal}. We have
\begin{align*}
    J(\pi^*) - J(\pi_{\tilde{\rho}}) &= \E_{s_0\sim\eta_0}[V^*(s_0) - V^{{\tilde{\rho}}}(s_0)]\\
    &\leq \E_{s_0\sim\eta_0}[\aV_\up^*(\as_0) - \aV_\low^*(\as_0)]\\
    &\leq \frac{(1-\gamma)\ep_R + |\aS|\ep_T}{(1-\gamma)(1-(\gamma + |\aS|\ep_T))},
\end{align*}
where the first inequality followed from Lemmas~\ref{lem:globalupperbound} \&~\ref{lem:optimalpolicylowerbound}, and the second inequality followed from Lemma~\ref{lem:valuegap}\footnote{Although we assumed that $T(s,a,s') = p(s'\mid s,a)$ defines a probability density function, it is easy to see that lemmas hold true for the deterministic case as well.}.$\hfill\qed$

\section{Experimental Details}
\label{sec:expappendix}

\begin{figure}[t]
\centering
\begin{tabular}{ccc}
    \vcentered{\includegraphics[width=0.20\linewidth]{plots/rooms/9RoomsCenter.pdf}}\qquad \qquad
    &
    \vcentered{\includegraphics[width=0.32\linewidth]{plots/rooms/abstract_states_prob0.pdf}} &
    \vcentered{\includegraphics[width=0.32\linewidth]{plots/rooms/abstract_states_prob1.pdf}}\\
     (a) Room Centers\qquad \qquad & (b) 9-Rooms & (c) 16-Rooms
\end{tabular}
\caption{Visualization of room centers as subgoal regions (in gray) and comparison of subgoal regions for room environments;
$x$-axis is number of samples (steps) from the environment, and $y$-axis is probability of reaching the goal. Results are averaged over 10 executions.}
\label{fig:abstract_states_prob}
\end{figure}

\textbf{Additional Figures.} Subgoal regions given by ``room centers" in the 9-Rooms environment are visualized in Figure~\ref{fig:abstract_states_prob} (a). The learning curves for different choices of subgoal regions for the room environments are shown in Figure~\ref{fig:abstract_states_prob} (b,c) where we plot the probability of reaching the goal as a function of the number of steps taken in the environment; in contrast, the cumulative reward plotted in Figure~\ref{fig:abstract_states} measures not only the probability of reaching the goal but also the time to reach the goal. In particular, ``room centers" can also be used to learn a policy that reaches the goal with an estimated probability of 1, although they do not satisfy the bottleneck assumption. Thus, this choice of subgoal regions only reduces the time to reach the goal, not the probability of reaching the goal.

\begin{figure}[t]
\centering
\begin{tabular}{ccc}
    \vcentered{\includegraphics[width=0.22\linewidth]{plots/rooms/16Rooms.pdf}}\qquad \qquad &
    \vcentered{\includegraphics[width=0.32\linewidth]{plots/rooms/random_sample1.pdf}} & 
    \vcentered{\includegraphics[width=0.32\linewidth]{plots/rooms/random_time1.pdf}}\\\\
     (a) 16-Rooms\qquad \qquad & (b) Sample Complexity & (c) Learning Time
\end{tabular}
\caption{The 16-Rooms environment and learning curves of A-AVI with randomly generated subgoal regions in 16-Rooms; the plots show the probability of reaching the goal ($y$-axis) as a function of (b) number of samples (steps) from the environment and (c) time since the beginning of training (in minutes). Results are averaged over 10 executions.}
\label{fig:16rooms}
\end{figure}

The 16-Rooms environment is visualized in Figure~\ref{fig:16rooms} (a). We also trained policies for the 16-Rooms environment using randomly generated subgoal regions. For this environment we used $N=25$ subgoal regions and $K=7$ outgoing edges from each subgoal region. As shown in Figure~\ref{fig:16rooms} (b,c) we outperform HIRO on this task as well without additional input from the user.

The subgoal regions for AntMaze, AntPush, and AntFall are visualized in Figures~\ref{fig:antmaze}, \ref{fig:antpush}, and \ref{fig:antfall}, respectively. The red squares are the subgoal regions; in particular, each subgoal region can be described as a constraint $x\in[x_{\text{min}},x_{\text{max}}]\wedge y\in[y_{\text{min}},y_{\text{max}}]$, where $(x,y)\in\mathbb{R}^2$ is the position of the center of the ant.

\textbf{Hyperparameters.} For the rooms environment, the subgoal regions are learned using ARS \citep{mania2018simple} (version V2-t) with neural network policies and the following hyperparameters.
\begin{itemize}
    \item Step-size $\alpha = 0.3$.
    \item Standard deviation of exploration noise $\nu = 0.05$.
    \item Number of directions sampled per iteration is $30$.
    \item Number of top performing directions to use $b = 15$.
\end{itemize}
We retain the parameters of the policies across iterations of A-AVI. In each iteration of A-AVI, we run $300$ iterations of ARS for each subgoal transition in parallel. Initially, $\D_{\as}$ is taken to be the uniform distribution in a small square in the center of the subgoal region $\as$.

\begin{figure}[H]
    \centering
    \includegraphics[width=0.20\textwidth]{image/ant/antmaze0.png}\quad
    \includegraphics[width=0.20\textwidth]{image/ant/antmaze1.png}\quad
    \includegraphics[width=0.20\textwidth]{image/ant/antmaze2.png}\quad
    \includegraphics[width=0.20\textwidth]{image/ant/antmaze3.png}
    \caption{Subgoal Regions for AntMaze}
    \label{fig:antmaze}
\end{figure}

\begin{figure}[H]
    \centering
    \includegraphics[width=0.18\textwidth]{image/ant/antpush0.png}\quad
    \includegraphics[width=0.18\textwidth]{image/ant/antpush1.png}\quad
    \includegraphics[width=0.18\textwidth]{image/ant/antpush2.png}\quad
    \includegraphics[width=0.18\textwidth]{image/ant/antpush3.png}\quad
    \includegraphics[width=0.18\textwidth]{image/ant/antpush4.png}
    \caption{Subgoal Regions for AntPush}
    \label{fig:antpush}
\end{figure}

\begin{figure}[H]
    \centering
    \includegraphics[width=0.18\textwidth]{image/ant/antfall0.png}\quad
    \includegraphics[width=0.18\textwidth]{image/ant/antfall1.png}\quad
    \includegraphics[width=0.18\textwidth]{image/ant/antfall2.png}\quad
    \includegraphics[width=0.18\textwidth]{image/ant/antfall3.png}\quad
    \includegraphics[width=0.18\textwidth]{image/ant/antfall4.png}
    \caption{Subgoal Regions for AntFall}
    \label{fig:antfall}
\end{figure}

For the ant environments, the subgoal transitions are learned using TD3 \citep{fujimoto2018addressing}; each policy is a fully connected neural network with 300 neurons each and critic architecture is the same as the one in \cite{fujimoto2018addressing} except that we use 300 neurons for both hidden layers. We use the TFAgents \citep{tfagents} implementation of TD3 with the following hyperparameters.
\begin{itemize}
    \item Discount $\gamma=0.95$. 
    \item Adam optimizer; actor learning rate $0.0001$; critic learning rate $0.001$.
    \item Soft update targets $\tau=0.005$.
    \item Replay buffer of size $200000$.
    \item Target update and training step performed every 2 environment steps.
    \item Exploration using gaussian noise with $\sigma=0.1$.
\end{itemize}
We retain the actor and critic networks, target networks, optimizer states and the replay buffers across iterations of A-AVI. In each iteration of A-AVI, we run TD3 for 100000 environment steps for each subgoal transition.

